\DeclareMathOperator*{\argmin}{arg\,min}
\theoremstyle{plain}
\newtheorem{theorem}{Theorem}[section]
\newtheorem{lemma}[theorem]{Lemma}
\newtheorem{corollary}[theorem]{Corollary}
\theoremstyle{definition}
\newtheorem{definition}[theorem]{Definition}
\theoremstyle{remark}
\newtheorem{remark}[theorem]{Remark}
\icmltitlerunning{Sharpness of Minima in Deep Matrix Factorization}
\begin{document}

\twocolumn[
  \icmltitle{Sharpness of Minima in Deep Matrix Factorization}

  \icmlsetsymbol{equal}{*}

  \begin{icmlauthorlist}
    \icmlauthor{Anıl Kamber}{yyy}
    \icmlauthor{Rahul Parhi}{yyy}
  \end{icmlauthorlist}

  \icmlaffiliation{yyy}{Department of Electrical and Computer Engineering, University of California, San Diego, La Jolla, CA}

  \icmlcorrespondingauthor{Anıl Kamber}{akamber@ucsd.edu}
  \icmlcorrespondingauthor{Rahul Parhi}{rahul@ucsd.edu}

  \icmlkeywords{Machine Learning, ICML}

  \vskip 0.3in
]

\printAffiliationsAndNotice{}  

\begin{abstract}

Understanding the geometry of the loss landscape near a minimum is key to explaining the implicit bias of gradient-based methods in non-convex optimization problems such as deep neural network training and deep matrix factorization. A central quantity to characterize this geometry is the maximum eigenvalue of the Hessian of the loss. Currently, its precise role has been obfuscated because no exact expressions for this sharpness measure were known in general settings. In this paper, we present the first exact expression for the maximum eigenvalue of the Hessian of the squared-error loss at \emph{any} minimizer in deep matrix factorization/deep linear neural network training problems, resolving an open question posed by~\citet{mulayoff}. This expression reveals a fundamental property of the loss landscape in deep matrix factorization: \emph{Having a constant product of the spectral norms of the left and right intermediate factors across layers is a sufficient condition for flatness.} Most notably, in both depth-$2$ matrix and deep overparameterized scalar factorization, we show that this condition is both necessary and sufficient for flatness, which implies that \emph{flat minima are spectral-norm balanced even though they are not necessarily Frobenius-norm balanced}. To complement our theory, we provide the first empirical characterization of an escape phenomenon during gradient-based training near a minimizer of a deep matrix factorization problem.  

\end{abstract}

\section{Introduction}

Decades of research in learning theory suggest limiting model complexity to prevent overfitting. However, modern deep learning is heavily overparameterized and has nonetheless achieved unprecedented success in practice over the past decade~\citep{alexnet,attentionisallyouneed}. 

Generally, in overparameterized settings, the loss function has infinitely many global minima that achieve zero training error (interpolation regime), yet these models still perform well. This phenomenon has been explored in various settings such as nonparametric regression,~\citep{belkin2018doesdatainterpolationcontradict}, training two-layer neural networks with logistic loss~\citep{frei2025benignoverfittinglinearityneural}, and linear regression~\citep{Bartlett_2020}.

The observation of this phenomenon naturally raises the question: Why do optimization algorithms find minima that generalize well? In the literature, the propensity of neural network training dynamics to converge to \emph{good minima} is generally attributed to the ability of gradient-based optimization algorithms to avoid \emph{bad minima}~\citep{exploringgeneralizationindeeplearning,understandingdeeplearningrequires}. This is related to the \emph{implicit bias} of gradient descent (GD)~\citep{neyshabur2015}, and a large body of work has focused on its understanding~\citep{gunasekar2017matrixfactorization,linearconv,soudry2024,implicitregularizationindeepmatrixfactorization,deeplinbias,homoneural}.

\begin{table*}[t]
    \centering
    \caption{Closed-form expressions for the maximum Hessian eigenvalue in the literature. $\Omega$ denotes the set of \emph{all} global minimizers, $\Omega_F \subseteq \Omega$ denotes the set of \emph{flat} global minimizers, and $\Omega_B \subseteq \Omega$ denotes the set of \emph{Frobenius-balanced} global minimizers.}
    
    \fontsize{10}{12}\selectfont
    \begin{tabular}{|c|c|c|c|c|c|}
    
    \hline
     Related Work & Depth & Input Dim. & Output Dim. & $\lambda_{\max}(\nabla^2\mathcal{L}(\mathbf{w}))$ & Layers \\
    \hline
    \citet[Theorem~1]{mulayoff} & $L$ & $d_0$ & $d_L$ & $\mathbf{w} \in \Omega_F$ & $\mathbb{R}^{a \times b}$   \\
    \citet[Appendix~B.1]{identical4eq2} & $2$ & $1$ & $1$ & $\mathbf{w} \in \mathbb{R}^{N}$ & $\mathbb{R}$  \\
    \citet[Theorem~1]{ethpaper} & $2$ & $1$ & $1$ & $\mathbf{w} \in \mathbb{R}^{N}$ & $\mathbb{R}^{a}$  \\
    \citet[Lemma~1]{ghosh2025learning} & $L$ & $d_0$ & $d_L$ & $\mathbf{w} \in \Omega_B$ & $\mathbb{R}^{a \times b}$  \\
    Theorem~\ref{theorem:deepmatrixfactorization} (This Paper) & $L$ & $d_0$ & $d_L$ & $\mathbf{w} \in \Omega$ & $\mathbb{R}^{a \times b}$  \\
    \hline
    \end{tabular}
    \label{table1}
\end{table*}

It has been observed that \emph{dynamical stability} of GD near a minimum is a key factor in characterizing its implicit bias toward particular solutions~\citep{Wustab,nar2018stepsizemattersdeep}. Conceptually, dynamical stability refers to the ability of GD to \emph{stably converge} to a minimum, and it is closely related to the sharpness of the loss landscape in its vicinity~\citep{mulayoff2}. This topic has been investigated in numerous works~\citep{nar2018stepsizemattersdeep,Wustab,ma2021linearstabilitysgdinputsmoothness,mulayoff2,nacson2023implicitbiasminimastability,qiao2024stable,liang2025stable} within the framework of the classical notion of \emph{linear stability} in dynamical systems~\citep{strogatz2018nonlinear}.

Ultimately, this understanding boils down to understanding the \emph{geometry} of the loss landscape near a minimum. The maximum eigenvalue of the Hessian of the loss serves as a key measure to quantify the \emph{sharpness} of the landscape near a minimum. Despite its significance, its precise role is not well-understood, particularly because closed-form expressions are generally unknown, outside a few particular cases. We summarize the current state of understanding as well as the contributions of our paper in Table~\ref{table1}.

Most notably, the seminal work of \citet{mulayoff} derives a closed-form expression for the maximum eigenvalue of the Hessian at \emph{flat} global minima of deep linear networks (i.e., deep matrix factorization) with squared-error loss.  However, obtaining a closed-form expression for \emph{all} global minima in deep linear networks/deep matrix factorization was an open problem. In particular,~\citet{mulayoff} claim that finding a closed-form expression for arbitrary global minima is intractable. In this paper, we refute this claim and positively answer the following fundamental question.

\begin{center}
    \emph{Does a closed-form expression for the maximum eigenvalue of the Hessian exist for overparameterized deep matrix factorization problems?}
\end{center}

In particular, in Theorem~\ref{theorem:deepmatrixfactorization}, we provide a closed-form expression for the maximum Hessian eigenvalue at arbitrary minima of depth-$L$ overparameterized deep matrix factorization. We also highlight that overparameterized deep matrix factorization and the deep linear neural network setting of~\citet{mulayoff} are equivalent for investigating the sharpness measure $\lambda_{\max}(\nabla^2 \mathcal{L}(\mathbf{w}^*))$ when the data covariance matrix is identity, i.e.,  $\mathbf{\Sigma}_x = \mathbf{I}$ \citep[see][Equation~21]{mulayoff}. Moreover, when the data matrix is full rank, the deep matrix factorization and the deep linear neural network setting of~\citet{mulayoff} are equivalent optimization problems \citep[Equation~4]{chou2024gradient}.  

To the best of our knowledge, our analysis provides the first exact expression of the maximum eigenvalue for deep matrix factorization/deep linear neural network training problems. In the case of deep overparameterized scalar factorization (Corollary~\ref{theorem:warmupdeepoverparameterizedscalarfactorization}) and depth-$2$ matrix factorization (Corollary~\ref{remark:twolayer}), our closed-form expression simplifies considerably. In Section~\ref{sec:flatness}, we present new necessary and sufficient conditions for flat minima in various matrix factorization settings. In Section~\ref{sec:discussion}, we demonstrate that different notions of sharpness can lead to misleading conclusions, and we illustrate this with an explicit example.

With our closed-form expression in hand, we then empirically observe in Section~\ref{sec:exp} the \emph{escape phenomenon}, which is introduced by \citet{Wustab}. This phenomenon has been observed in numerous works; however, it has been characterized empirically only in extremely simplified settings. For instance, \citet[see Figure~1]{lewkowycz2020large} observed this phenomenon for a one-hidden layer linear network trained with squared-error loss on a single scalar data point,  \citet{cohen2021gradient} analyzed it for a convex quadratic loss function and most recently, \citet[see Figure~4]{ghosh2025learning} characterized it empirically for the two-layer scalar factorization loss. Although their analysis is not directly related to the escape phenomenon, \citet{liang2025gradientdescentlargestep} recently studied the convergence regions of GD in deep matrix factorization problems. 

In contrast, we provide the first empirical characterization of an escape phenomenon during gradient-based training near a minimizer of a deep matrix factorization problem. We find that the escape phenomenon also occurs for overparameterized deep matrix factorization problems. We explore this phenomenon through the lens of \emph{dynamical stability} introduced by \citet{Wustab}.

Consider a twice continuously differentiable loss function $\mathcal{L}: \mathbb{R}^{N} \rightarrow \mathbb{R}$, and let $\mathbf{w}^*$ be a minimizer of $\mathcal{L}$. For $\delta >0$, denote the open ball $
    B_{\delta}(\mathbf{w}^*) := \left \{\mathbf{w} \in \mathbb{R}^N : \norm{\mathbf{w} - \mathbf{w}^*}_2 < \delta \right \}
$. There exists a $\delta >0$ such that, for all $\mathbf{w} \in B_{\delta}(\mathbf{w}^*)$, 
\begin{align} \label{eq:taylor}
\mathcal{L}(\mathbf{w})
&= \mathcal{L}(\mathbf{w}^*)
+ (\mathbf{w} - \mathbf{w}^*)^\top \nabla \mathcal{L}(\mathbf{w}^*) \\
&+ \frac{1}{2} (\mathbf{w} - \mathbf{w}^*)^\top \nabla^2 \mathcal{L}(\mathbf{w}^*) (\mathbf{w} - \mathbf{w}^*)
\\
&+ o(\norm{\mathbf{w}- \mathbf{w}^*}_2^2).
\end{align}
\begin{definition}
    Let $\mathcal{L}: \mathbb{R}^{N} \rightarrow \mathbb{R}$ be a twice continuously differentiable loss function, and let $\mathbf{w}^*$ be a minimizer of $\mathcal{L}$. The linearized GD update rule for $\mathcal{L}$ is given by 
\begin{equation}
    \mathbf{w}_{t+1} = \mathbf{w}_t - \eta \nabla^2 \mathcal{L}(\mathbf{w}^*)(\mathbf{w}_t-\mathbf{w}^*),
\end{equation}
where $\eta > 0$ is the step size. Given $\delta > 0$ such that (\ref{eq:taylor}) holds, if there exists an initial point $\mathbf{w}_0 \in B_{\delta}(\mathbf{w}^*)$ such that the residuals $\bm{\epsilon}_t = \mathbf{w}_t - \mathbf{w}^*$ diverge, i.e.,
\begin{equation}
    \lim_{t \rightarrow \infty} \norm{\bm{\epsilon}_t}_2 = \infty,
\end{equation}
then $\mathbf{w}^*$ is said to be \emph{dynamically unstable}. 
\label{linearinstability}
\end{definition}

A necessary and sufficient condition for a minimum to be dynamically unstable is that $\lambda_{\max}(\nabla^2 \mathcal{L}(\mathbf{w}^*)) > 2/\eta$ (cf.~\citealt{Wustab,mulayoff2,chemnitz2025characterizing}). Thus, we see that an exact expression for the maximum eigenvalue is necessary to explore the escape phenomenon empirically.

\subsection{Contributions}

In this paper, we present the first exact expression for the maximum eigenvalue of the Hessian of the squared-error loss at any minimizer in general overparameterized deep matrix factorization problems. Our results lead to the following remarkable observations:
\begin{itemize}
    \item A minimizer of the deep overparameterized scalar factorization loss is flat \textbf{if and only if} the product of spectral norms of the left and right intermediate factors is constant across layers (Corollary~\ref{corollarydeeposcalerspectralbalanced}).
    
    \item Flat minima are spectral-norm balanced in depth-$2$ matrix factorization (Corollary \ref{twolayerspectralbalanced}). This implies that \textbf{flat minima are not necessarily Frobenius-norm balanced}, contrary to claims made in several works \citep{ding2024flat,ghosh2025learning}. We further discuss this in Section \ref{sec:discussion}.

    \item A minimizer of the deep matrix factorization loss is flat \textbf{if} the product of spectral norms of the left and right intermediate factors is constant across layers (Corollary~\ref{corollary:deepmatrix}).

\end{itemize}
We also highlight a recent work of \citet{anonymous2025cracking} that provides empirical evidence, in two-layer linear networks, that the largest eigenvalue of the Hessian of the loss is correlated with the spectral norm of the weight matrices. Furthermore, for deep linear networks, they observe that this largest Hessian eigenvalue correlates with the spectral norm of the products of the weight matrices. Thus, our results in Theorem~\ref{theorem:deepmatrixfactorization}, Corollary~\ref{theorem:warmupdeepoverparameterizedscalarfactorization}, and Corollary~\ref{remark:twolayer} provide the first theoretical explanation for their empirical observations.
\subsection{Related Work}

\textbf{Flat Minima.} \citet{mulayoff} derived a closed-form expression for the maximum eigenvalue of the Hessian at flat minima for deep linear neural networks. They also showed that the Hessian at a global minimum is rank-deficient by at least the order of $1 - 1/L$, where $L$ is the depth of the network. Moreover, they showed that the sharpness of the flat minima increases approximately linearly with $L$ if $L \gg 1$. \citet{ethpaper} provided a full characterization of the Hessian spectrum at a point in parameter space for linear and ReLU networks in the \emph{scalar regression} case. They observed that the eigenvalues scale in proportion to the input variance within one hidden-layer scalar linear networks. More recently, \citet{josz2025geometryflatminima} has shown that locally flat minima are globally flat in depth-$2$ matrix factorization problems.

\textbf{Balanced Minima.} \citet{ghosh2025learning} provided a full characterization of the Hessian spectrum at Frobenius-norm balanced minima in deep matrix factorization problems. Furthermore, they showed that the maximum eigenvalue of the Hessian at flat minima is equal to that of the Frobenius-norm balanced minima. \citet{ding2024flat} showed that \emph{norm-minimal}, \emph{Frobenius-norm balanced} and \emph{flat} solutions coincide in depth-$2$ matrix factorization problems, where the sharpness is measured by the \emph{scaled trace} of the Hessian matrix of the loss function. We further discuss the results of \citet{ghosh2025learning} and \citet{ding2024flat} in Section \ref{sec:discussion}. Finally, \citet{wang2022large} showed that large step size GD training induces a \emph{Frobenius-norm balancing effect} between factors in depth-$2$ matrix factorization problems.

\textbf{Dynamical Stability.} In dynamical systems theory, it is well established that asymptotic convergence to a critical point is determined solely by the local stability of that point \citep{strogatz2018nonlinear}. In the seminal work of \citet{Wustab} on the dynamical stability analysis of GD training, it was shown that a global minimum is \emph{dynamically stable} for GD if and only if the step size does not exceed $2/\lambda_{\max}$, where $\lambda_{\max}$ denotes the maximum eigenvalue of the Hessian of the loss at the minimum. \citet{mulayoff2} investigated this mechanism in the space of learned functions for two-layer overparameterized univariate ReLU networks in the interpolation regime. This was then extended to multivariate ReLU networks by \citet{nacson2023implicitbiasminimastability}. The interpolation assumption was then removed by \citet{qiao2024stable,liang2025stable}.

\textbf{Edge-of-Stability.}
\citet{cohen2021gradient} observed that neural networks trained with GD typically operate in a regime called \emph{edge of stability}, in which the maximum eigenvalue of the Hessian of the loss function hovers just
above the value $2/\eta$, where $\eta$ is the step size, and argued that classical optimization theory fails to explain this phenomenon. Recently, \citet{liang2025stable} empirically observed that explicit regularization seems to break the edge-of-stability phenomenon. 

\textbf{Sharpness and Generalization.} It is widely recognized in the literature that flat minima are associated with better generalization \citep{flatminima,keskar2017on}. In a large-scale empirical investigation, \citet{Jiang*2020Fantastic} examined different complexity measures for deep networks and found that a sharpness-based measure exhibited the strongest correlation with generalization. There is also theoretical evidence for this phenomenon in low-rank matrix recovery~\citep{ding2024flat}. On the other hand, \citet{sharpminimacangeneralize} showed that \emph{good minima} can be arbitrarily sharp in deep neural networks.

\section{Notation, Preliminaries, and Problem Setup} \label{sec:notation}

We denote the \emph{Kronecker product} by $\otimes$, the \emph{Frobenius inner product} by $\langle \cdot , \cdot \rangle$, the \emph{spectral norm} by $\sigma_{\max}(\cdot)$, and the \emph{Frobenius norm} by $\norm{\cdot}_F$. We denote by $[L]$ the set of natural numbers up to $L$, i.e., $[L] = \{1,2,\dots,L\}$.

To simplify the notation for subsequent derivations, we define 
\begin{equation}
    \prod_{j = n}^ m \mathbf{W}_j := \begin{cases}
        \mathbf{W}_m \mathbf{W}_{m-1} \dots {\mathbf{W}_{n}} \quad &\text{if $n \leq m$}, \\
        \mathbf{I}_{d_m} &\text{o.w.}, \forall n,m \in [L],
    \end{cases}
\end{equation}
where $\mathbf{W}_m \in \mathbb{R}^{d_m \times d_{m-1}}$.

Our analysis relies on matrix calculus and the formulation of directional second derivatives. Therefore, before proceeding to the technical details, we find it useful to first develop the intuition behind directional derivatives of real-valued functions of matrix variables.

\subsection{Gâteaux Derivatives} 
Let $f:\mathbb{R}^{K \times L} \to \mathbb{R}$ be a differentiable function with continuous first- and second-order derivatives on $\mathbb{R}^{K \times L}$. Our objective is to derive closed-form expressions for the first- and second-order directional derivatives of $f$ in the direction of $\mathbf{U} \in \mathbb{R}^{K \times L}$, where $\norm{\mathbf{U}}_F < \infty$, denoted respectively by $D_{\mathbf{U}} f\left(\mathbf{X}\right)$ and $D_{\mathbf{U}}^2 f\left(\mathbf{X}\right)$. By the limit definition of the derivative, the first derivative of $f\left(\mathbf{X}\right)$ with respect to each entry of $\mathbf{X}$ can be expressed as follows:

\begin{equation}
    \frac{\partial f\left(\mathbf{X}\right)}{\partial X_{ij}} = \lim _{\Delta t \rightarrow 0} \frac{f\left(\mathbf{X}+ \Delta t \mathbf{e}_i \mathbf{e}_j^\top\right) - f\left(\mathbf{X}\right)}{\Delta t}, 
    \label{lim}
\end{equation}
where $\mathbf{e}_i$ is the $i$th standard basis vector of $\mathbb{R}^K$ and $\mathbf{e}_j$ is the $j$th standard basis vector of $\mathbb{R}^L$. If the limit in (\ref{lim}) exists, then by substitution of variables
\begin{equation}
    \frac{\partial f\left(\mathbf{X}\right)}{\partial X_{ij}} U_{ij}= \lim _{\Delta t \rightarrow 0} \frac{f\left(\mathbf{X}+ \Delta t U_{ij}\mathbf{e}_i \mathbf{e}_j^\top\right) - f\left(\mathbf{X}\right)}{\Delta t}.
    \label{lemma1proofreq}
\end{equation}
By definition, the total change in $f(\mathbf{X})$ in the direction of $\mathbf{U}$ is the sum of changes due to each entry of $\mathbf{X}$. Then 
\begin{align}
D_{\mathbf{U}} f(\mathbf{X}) &= \sum_{i,j \in [K] \times [L]}\frac{\partial f(\mathbf{X})}{\partial X_{ij}} U_{ij} \\
&= \sum_{i,j \in [K] \times [L]}\lim _{\Delta t \rightarrow 0} \frac{f(\mathbf{X}+ \Delta t U_{ij}\mathbf{e}_i \mathbf{e}_j^\top) - f\left(\mathbf{X}\right)}{\Delta t} \\
&= \lim _{\Delta t \rightarrow 0} \frac{f\left(\mathbf{X}+ \Delta t \mathbf{U}\right) - f\left(\mathbf{X}\right)}{\Delta t} \label{limtrick}.
\end{align}
We can rewrite (\ref{limtrick}) as follows:
\begin{equation}
    \lim _{\Delta t \rightarrow 0} \frac{f\left(\mathbf{X}+ \left(\Delta t+t\right) \mathbf{U}\right) - f\left(\mathbf{X}+t\mathbf{U}\right)}{\Delta t} \Biggl|_{t=0}.
\end{equation}
This expression can be rewritten as
\begin{equation}
    \frac{\partial f\left(\mathbf{X}+t\mathbf{U}\right)}{\partial t} \Biggl |_{t=0}.
\end{equation}
This is known as the \emph{Gâteaux derivative}, which represents the change in $f\left(\mathbf{X}\right)$ under a perturbation in the direction of $\mathbf{U}$. By the same reasoning, we obtain the following result.
\begin{lemma}
The second directional derivative of $f$ at $\mathbf{X}$ in the direction $\mathbf{U} \in \mathbb{R}^{K \times L}$ is given by
\begin{equation}
    D_{\mathbf{U}}^2 f\left(\mathbf{X}\right) = \frac{\partial^2}{\partial t^2} f\left(\mathbf{X} + t\mathbf{U}\right) \Big |_{t = 0}.
\end{equation}
\label{lemma1}
\end{lemma}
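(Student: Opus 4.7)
My plan is to reduce the claim to the first-order Gâteaux identity already derived in the preceding discussion, $D_{\bm{U}} h(\bm{X}) = \left.\frac{\partial}{\partial t} h(\bm{X}+t\bm{U})\right|_{t=0}$, and then apply it twice.

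First I would set $g(\bm{X}) := D_{\bm{U}} f(\bm{X})$. Because $f$ has continuous second-order partial derivatives by hypothesis, $g$ is itself continuously differentiable, so the derivation preceding the lemma applies verbatim with $h$ replaced by $g$, yielding
\begin{equation*}
D_{\bm{U}}^2 f(\bm{X}) \;=\; D_{\bm{U}} g(\bm{X}) \;=\; \left.\frac{\partial}{\partial t}\, g(\bm{X}+t\bm{U})\right|_{t=0}.
\end{equation*}

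Next I would use the chain-rule form of the first-order derivative from the preamble, namely $g(\bm{Y}) = \sum_{i,j}\frac{\partial f(\bm{Y})}{\partial Y_{ij}} U_{ij}$, substitute $\bm{Y}=\bm{X}+t\bm{U}$, and differentiate once more in $t$. This gives
\begin{equation*}
\frac{\partial}{\partial t}\, g(\bm{X}+t\bm{U}) \;=\; \sum_{i,j}\sum_{k,l} \frac{\partial^2 f(\bm{X}+t\bm{U})}{\partial X_{kl}\,\partial X_{ij}}\, U_{ij}\, U_{kl},
\end{equation*}
which is precisely $\frac{\partial^2}{\partial t^2} f(\bm{X}+t\bm{U})$ by a second application of the one-variable chain rule to the composition $t \mapsto f(\bm{X}+t\bm{U})$. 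Evaluating at $t=0$ then matches the formula claimed in the lemma.

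The main obstacle, and the only step that is not purely formal, is justifying the interchange of limit and finite summation used implicitly when passing from entrywise partial derivatives to the directional form (and again when applying the chain rule to the composition in $t$). This is exactly where the standing hypothesis of continuous second-order partial derivatives is essential: it makes the relevant difference quotients converge uniformly on any compact neighborhood of $\bm{X}$, so the finite sum and limit commute and the chain-rule computation is rigorous. Once that technical point is granted, the remainder of the argument is a mechanical application of the first-order identity twice.
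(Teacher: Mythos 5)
Your proof is correct, but it follows a somewhat different route than the paper's. The paper works directly with limit definitions: it writes each second partial $\frac{\partial^2 f(\bm{X})}{\partial X_{kl}\partial X_{ij}}$ as a double difference quotient, rescales by $U_{ij}U_{kl}$ via substitution of variables, sums over all index pairs to obtain $D_{\bm{U}}^2 f(\bm{X})=\sum_{i,j}\sum_{k,l}\frac{\partial^2 f(\bm{X})}{\partial X_{kl}\partial X_{ij}}U_{ij}U_{kl}$, and then recognizes the summed expression as the one-dimensional second-order difference quotient $\lim_{\Delta t\to 0}\bigl(f(\bm{X}+2\Delta t\,\bm{U})-2f(\bm{X}+\Delta t\,\bm{U})+f(\bm{X})\bigr)/\Delta t^2$, which is $\frac{\partial^2}{\partial t^2}f(\bm{X}+t\bm{U})\big|_{t=0}$. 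You instead iterate the first-order Gâteaux identity: set $g=D_{\bm{U}}f$, apply the preamble's result to $g$, and use the one-variable chain rule twice on $t\mapsto f(\bm{X}+t\bm{U})$ to land on the same bilinear form. Both arguments hinge on the same $C^2$ hypothesis and arrive at the identical Hessian quadratic form; yours is arguably cleaner in that it reuses the first-order lemma rather than re-deriving everything from difference quotients, at the cost of leaning on the multivariable chain rule as a black box. One small correction to your closing remark: the interchange of a limit with a \emph{finite} sum is automatic and needs no uniformity argument; the place where continuity of the (first- and second-order) partials is genuinely needed is in justifying that the directional derivative equals the sum of entrywise contributions in the first place (i.e., total differentiability, so that the chain rule applies to $t\mapsto f(\bm{X}+t\bm{U})$ and to $t\mapsto g(\bm{X}+t\bm{U})$), together with the symmetry/equality of the iterated derivative $D_{\bm{U}}(D_{\bm{U}}f)$ with the Hessian quadratic form that the paper takes as the meaning of $D_{\bm{U}}^2 f$.
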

The proof is deferred to Appendix \ref{appendixlemma1}.

\subsection{Directional Second Derivatives and Maximum Eigenvalue}
Consider the following objective function for our real-valued matrix-variable function $f$. 
\begin{equation}
    f\left(\mathbf{W}_1,\mathbf{W}_2,\dots,\mathbf{W}_L\right) = \norm {\mathbf{M}-\mathbf{W}_L \mathbf{W}_{L-1}\cdots \mathbf{W}_1}_F^2,
    \label{directderiv}
\end{equation}
where $\mathbf{W}_{i} \in \mathbb{R}^{d_i \times d_{i-1}}$ and $\mathbf{M} \in \mathbb{R}^{d_L \times d_0}$ for all $i \in [L]$. In this setting, we define the maximum eigenvalue of the $\nabla^{2} f\left(\mathbf{W}_1,\mathbf{W}_2,\dots,\mathbf{W}_L\right)$ at an arbitrary point in the parameter space as follows: 
\begin{align}
&\lambda_{\max}\left(\nabla^2f\left(\mathbf{W}_1,\mathbf{W}_2,\dots,\mathbf{W}_L\right)\right) = \nonumber \\
    &\max_{\substack{\mathbf{U}_1,\mathbf{U}_2,\cdots,\mathbf{U}_L :\\ \sum_{i=1}^{L} \norm{\mathbf{U}_i}_F^2 = 1}} \frac{d^2}{dt^2}f\left(\mathbf{W}_1+t\mathbf{U}_1,\cdots,\mathbf{W}_L+t\mathbf{U}_L\right) \Bigr|_{t=0}. \label{rayleighh}
\end{align}
This is the generalization of the Rayleigh quotient to the case where the Hessian is represented as a tensor and its eigenvectors take the form of matrices. This leads to the following lemma.
\begin{lemma}
For any $\left[\mathbf{W}^*_1,\mathbf{W}^*_2,\cdots, \mathbf{W}^*_L\right]$ such that $\mathbf{M}= \prod_{j=1}^{L} \mathbf{W}^*_j$, the directional second derivative along $[\mathbf{U}_1,\cdots,\mathbf{U}_L]$ is given by
\begin{align}
&\nabla^2f\left(\mathbf{W}^*_1, \cdots, \mathbf{W}^*_L\right)[\mathbf{U}_1,\cdots,\mathbf{U}_L] = \nonumber \\
     &2\norm{\sum_{i=1}^{L} 
        \left[ \left(\prod_{j=i+1}^{L} \mathbf{W}^*_j \right)\mathbf{U}_i 
        \left(\prod_{j=1}^{i-1} \mathbf{W}_j^* \right) \right]}_F ^2. \label{directionalderiv}
\end{align}  
\label{lemma2}
\end{lemma}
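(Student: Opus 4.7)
The plan is to invoke Lemma~\ref{lemma1} to reduce the directional second derivative to an ordinary second derivative in the scalar parameter $t$. Defining the matrix-valued path $P(t) := \prod_{j=1}^{L}(\bm{W}_j^* + t\bm{U}_j)$, we have
\[
\nabla^2 f(\bm{W}_1^*,\ldots,\bm{W}_L^*)[\bm{U}_1,\ldots,\bm{U}_L] \;=\; \frac{d^2}{dt^2}\,\norm{\bm{M} - P(t)}_F^{2}\,\Big|_{t=0}.
\]

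The critical simplification comes from the minimizer hypothesis $\bm{M} = \prod_{j=1}^{L}\bm{W}_j^* = P(0)$, which lets me rewrite $\bm{M} - P(t) = -(P(t) - P(0))$. Expanding the Frobenius-norm squared via the Frobenius inner product and differentiating twice in $t$ produces two terms: a "curvature" contribution $2\langle P''(t),\, P(0)-P(t)\rangle$ and a "velocity" contribution $2\norm{P'(t)}_F^{2}$. At $t=0$ the curvature piece vanishes because $P(0)-P(t)\big|_{t=0}=0$, so
\[
\frac{d^2}{dt^2}\,\norm{\bm{M}-P(t)}_F^{2}\,\Big|_{t=0} \;=\; 2\norm{P'(0)}_F^{2}.
\]

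It then remains to compute $P'(0)$. Applying the Leibniz product rule to $P(t) = (\bm{W}_L^* + t\bm{U}_L)(\bm{W}_{L-1}^* + t\bm{U}_{L-1})\cdots(\bm{W}_1^* + t\bm{U}_1)$ and evaluating at $t=0$ replaces exactly one factor $(\bm{W}_i^* + t\bm{U}_i)$ by $\bm{U}_i$ and sets $t=0$ in all remaining factors. Summing over the index $i$ that is differentiated, and respecting the product-order convention from Section~\ref{sec:notation} (higher-indexed factors appear to the left of $\bm{U}_i$, lower-indexed factors to the right), yields
\[
P'(0) \;=\; \sum_{i=1}^{L}\Bigl(\prod_{j=i+1}^{L}\bm{W}_j^*\Bigr)\,\bm{U}_i\,\Bigl(\prod_{j=1}^{i-1}\bm{W}_j^*\Bigr),
\]
which, combined with the preceding display, gives exactly the claimed formula.

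I do not expect a serious technical obstacle. The hypothesis that $[\bm{W}_1^*,\ldots,\bm{W}_L^*]$ is a global minimizer is what makes the only surviving contribution come from $\norm{P'(0)}_F^{2}$ rather than an extra term involving $P''(0)$; without the minimizer assumption the formula acquires a residual cross-term. The sole care point is bookkeeping for the noncommutative product-order convention, ensuring each $\bm{W}_j^*$ lands on the correct side of $\bm{U}_i$ when the Leibniz rule is unpacked.
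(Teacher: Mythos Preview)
Your proposal is correct and essentially identical to the paper's own proof: the paper sets $g(t) = \bm{M} - \prod_{i}(\bm{W}_i^* + t\bm{U}_i)$, expands $\langle g,g\rangle$, differentiates twice to obtain $2\langle g',g'\rangle + 2\langle g,g''\rangle$, uses the minimizer condition $g(0)=0$ to kill the cross-term, and then computes $g'(0)$ via the Leibniz rule exactly as you describe. The only slip is a sign in your ``curvature'' term (it should read $2\langle P''(t),\,P(t)-P(0)\rangle$), but since that term vanishes at $t=0$ the argument is unaffected.
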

The proof is deferred to Appendix \ref{appendixlemma2}. 

In this paper, we study the sharpness of the loss landscape of deep matrix factorization problems near any global minimizer. We consider the following optimization problem
\begin{equation}
    \min_{\mathbf{w} \in \mathbb{R}^{N}}\mathcal{L}\left(\mathbf{w}\right) := \norm {\mathbf{M}-\mathbf{W}_L \mathbf{W}_{L-1}\cdots \mathbf{W}_1}_F^2,
    \label{oop}
\end{equation}
where $ \mathbf{w} = \operatorname{vec}\left([\mathbf{W}_{1},\mathbf{W}_{2},\dots,\mathbf{W}_{L}]\right)$ denotes the collection of all parameters, and 
\begin{equation}
   N := \sum_{i=1} ^{L} d_{i} \times d_{i-1} 
\end{equation}
is the total number of parameters in the model. $\mathbf{M} \in \mathbb{R}^{d_L \times d_0}$ denotes the matrix that contains the parameters subject to factorization, $L \geq 2$ denotes the depth of factorization and  $\mathbf{W}_i \in \mathbb{R}^{d_i \times d_{i-1}}$ is the $i$th factor (layer). This objective is analogous to that of deep linear neural networks. To guarantee the feasibility of factorization for all points in $\mathbb{R}^{d_L \times d_0}$, we require
\begin{equation}
    \min_{i} d_i \geq \min \{d_0,d_L\} \quad  \forall i \in [L],
    \label{overparametcond}
\end{equation}
which follows directly from the fact that
\begin{equation}
    \operatorname{rank}\left(\mathbf{W}_L \cdots \mathbf{W}_1\right) \leq \min \left \{\operatorname{rank}\left(\mathbf{W}_1\right), \dots,\operatorname{rank}\left(\mathbf{W}_L\right) \right\}.
\end{equation}
We define the set of global minima of $\mathcal{L}(\mathbf{w})$ as
\begin{equation}
    \Omega := \argmin_{\mathbf{w} \in \mathbb{R}^{N}} \mathcal{L}(\mathbf{w}) = \left \{\mathbf{w} \in \mathbb{R}^{N}: \prod_{i=1}^L \mathbf{W}_i = \mathbf{M}\right\}.
\end{equation}

\subsection{Benign Landscape} \citet{laurent2018deep} showed that all local minima of deep linear networks with convex and differentiable loss are global if the layers satisfy (\ref{overparametcond}), i.e., hidden layers are at least as wide as input and output layers. In particular, they proved a more general theorem concerning real-valued functions that take as
input a product of matrices. Thus, a corollary of \citet[Theorem~1]{laurent2018deep} is that overparameterized deep matrix factorization \emph{does not have spurious minima}, i.e., all local minima are global. Thus, in this paper, we can explicitly focus on global minima, without loss of generality.

\section{Overparameterized Deep Matrix Factorization} \label{sec:main}
We now consider the general deep matrix factorization problem. In this section, we prove our main result, which is a closed-form expression for the maximum eigenvalue of the Hessian for any global minimum to the objective (\ref{oop}).

\begin{theorem}
If $\mathbf{w}^* \in \Omega$ then
\begin{equation}
 \lambda_{\max} \left( \nabla^2 \mathcal{L}\left(\mathbf{w}^*\right) \right) = 2\sigma_{\max} \left (\sum_{i=1}^L \mathbf{B}_i^\top \mathbf{B}_i \otimes \mathbf{A}_i \mathbf{A}_i^\top \right),
\end{equation}
where $\mathbf{A}_k = \prod_{i = k+1}^L \mathbf{W}_i^*$ and $\mathbf{B}_k = \prod_{i = 1}^{k-1} \mathbf{W}_i^*$.
\label{theorem:deepmatrixfactorization}
\end{theorem}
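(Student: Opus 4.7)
The plan is to start from the directional-derivative formula in Lemma~\ref{lemma2} and reduce the maximum-eigenvalue problem to a standard spectral norm of a block matrix assembled from Kronecker products. By Lemma~\ref{lemma2}, for any $\bm{w}^* \in \Omega$ and any tuple $(\bm{U}_1,\ldots,\bm{U}_L)$,
\begin{equation*}
\nabla^2 \mathcal{L}(\bm{w}^*)[\bm{U}_1,\ldots,\bm{U}_L] = 2\norm{\sum_{i=1}^L \bm{A}_i \bm{U}_i \bm{B}_i}_F^2,
\end{equation*}
and by~(\ref{rayleighh}) the maximum eigenvalue is the supremum of this quantity over all tuples with $\sum_{i=1}^L \norm{\bm{U}_i}_F^2 = 1$. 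Unlike the scalar-factorization proof, which bounds the cross terms via the triangle inequality and then certifies tightness with a carefully chosen rank-one direction, I would instead avoid any relaxation and handle the sum inside the Frobenius norm exactly by vectorizing, so that equality is automatic.

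The key step is to apply the vec--Kronecker identity $\operatorname{vec}(\bm{A}_i \bm{U}_i \bm{B}_i) = (\bm{B}_i^\top \otimes \bm{A}_i)\operatorname{vec}(\bm{U}_i)$ so that the Frobenius norm of the sum becomes a Euclidean norm of a single linear combination. Concatenating the vectorized directions into $\bm{u} = [\operatorname{vec}(\bm{U}_1)^\top,\ldots,\operatorname{vec}(\bm{U}_L)^\top]^\top \in \mathbb{R}^N$ and stacking the Kronecker factors horizontally into the block matrix
\begin{equation*}
\bm{K} = \bigl[\,\bm{B}_1^\top \otimes \bm{A}_1 \;\;\cdots\;\; \bm{B}_L^\top \otimes \bm{A}_L\,\bigr],
\end{equation*}
the objective becomes $2\norm{\bm{K}\bm{u}}_2^2$ subject to $\norm{\bm{u}}_2^2 = 1$, since $\sum_i \norm{\bm{U}_i}_F^2 = \norm{\bm{u}}_2^2$. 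Hence the maximum equals $2\sigma_{\max}(\bm{K})^2 = 2\lambda_{\max}(\bm{K}\bm{K}^\top)$, attained along the top right singular vector of $\bm{K}$.

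The final step is to expand $\bm{K}\bm{K}^\top$ explicitly. Using $(\bm{B}_i^\top \otimes \bm{A}_i)^\top = \bm{B}_i \otimes \bm{A}_i^\top$ together with the mixed-product identity $(\bm{P}\otimes\bm{Q})(\bm{R}\otimes\bm{S}) = (\bm{P}\bm{R})\otimes(\bm{Q}\bm{S})$, the cross terms drop out block-diagonally and yield
\begin{equation*}
\bm{K}\bm{K}^\top = \sum_{i=1}^L (\bm{B}_i^\top \bm{B}_i) \otimes (\bm{A}_i \bm{A}_i^\top),
\end{equation*}
which is symmetric positive semi-definite, so its maximum eigenvalue coincides with its spectral norm. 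Substituting into $2\lambda_{\max}(\bm{K}\bm{K}^\top)$ delivers the claimed identity.

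The main obstacle I anticipate is purely bookkeeping: tracking block dimensions, the placement of transposes inside the Kronecker products (since $\operatorname{vec}$ is column-stacking, the order $\bm{B}_i^\top \otimes \bm{A}_i$ is essential), and verifying that the stacking of the $\operatorname{vec}(\bm{U}_i)$ indeed preserves the unit-norm constraint across heterogeneous layer shapes. Conceptually, the proof is much cleaner than the scalar case because the reduction to a single matrix $\bm{K}$ replaces the delicate triangle-inequality-plus-tightness argument with a direct Rayleigh-quotient computation on the Gram matrix $\bm{K}\bm{K}^\top$.
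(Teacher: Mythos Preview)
Your proposal is correct and follows essentially the same route as the paper's own proof: vectorize via the $\operatorname{vec}$--Kronecker identity, assemble the block matrix $\bm{K}=[\bm{B}_1^\top\otimes\bm{A}_1\;\cdots\;\bm{B}_L^\top\otimes\bm{A}_L]$, reduce to $2\sigma_{\max}(\bm{K})^2=2\sigma_{\max}(\bm{K}\bm{K}^\top)$, and expand $\bm{K}\bm{K}^\top=\sum_i \bm{B}_i^\top\bm{B}_i\otimes\bm{A}_i\bm{A}_i^\top$ using the mixed-product and transpose properties of the Kronecker product. Your remark that this direct Rayleigh-quotient computation is cleaner than the triangle-inequality-plus-tightness argument of the scalar warm-up is exactly the point of the general proof.
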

The proof appears in Appendix \ref{appendix:deepmatrixfactorization}. Note that the deep overparameterized scalar factorization is a special case of deep matrix factorization where both $\mathbf{B}_i^\top \mathbf{B}_i$ and $\mathbf{A}_i \mathbf{A}_i^\top$ reduce to scalars. In that special case, we have Corollary~\ref{theorem:warmupdeepoverparameterizedscalarfactorization} below. Another corollary of Theorem~\ref{theorem:deepmatrixfactorization} is the maximum Hessian eigenvalue for the classical (depth-$2$) matrix factorization problem. This result may be of independent interest as the expression simplifies considerably.
\begin{corollary}
Consider the following objective function
\begin{equation}
    \mathcal{L}(\mathbf{w}) := ({m-\mathbf{w}_L \mathbf{W}_{L-1}\cdots \mathbf{W}_2\mathbf{w}_1})^2,
    \label{oopscalar}
\end{equation}
where $m \in \mathbb{R}$, $d_0 = d_L = 1$, $\mathbf{w}_L \in \mathbb{R}^{1 \times d_{L-1}}$ and $\mathbf{w}_1 \in \mathbb{R}^{d_1 \times 1}$. For hidden factors (layers), i.e., for all $i \in \{2,3,\cdots,L-1\}$, we have $\mathbf{W}_i \in \mathbb{R}^{d_i \times d_{i-1}}$. Then, for all $\mathbf{w}^* \in \Omega$, 
\begin{align}
     &\lambda_{\max}\left(\nabla^2 \mathcal{L} \left(\mathbf{w}^*\right)\right) = \nonumber \\ &2\sum_{i=1}^{L} \sigma_{\max} \left(\prod_{j=i+1}^{L}  \mathbf{W}_j^* \right )^2 \sigma_{\max} \left (\prod_{j=1}^{i-1} \mathbf{W}_j^* \right)^2.
\end{align}
\label{theorem:warmupdeepoverparameterizedscalarfactorization}    
\end{corollary}
We also provide a self-contained proof of this corollary in Appendix \ref{warmupproofappendix}.
\begin{corollary}
Consider the following depth-$2$ matrix factorization objective
\begin{equation}
    \mathcal{L}(\mathbf{L},\mathbf{R}) = \norm{\mathbf{M} - \mathbf{L}\mathbf{R}^\top}_F^2,
\end{equation}
where $\mathbf{M} \in \mathbb{R}^{d_L \times d_0}$ is the target matrix and $\mathbf{L} \in \mathbb{R}^{d_L \times k}$, $\mathbf{R} \in \mathbb{R}^{d_0 \times k}$. To ensure the feasibility of the factorization for every point in $\mathbb{R}^{d_L \times d_0} $, we choose $k \geq \min \{d_0,d_L\}$. We define the set of minimizers as follows:
\begin{equation}
    \Omega := \argmin_{\mathbf{L},\mathbf{R}} \mathcal{L}\left(\mathbf{L},\mathbf{R}\right) =   \left \{ \left(\mathbf{L}, \mathbf{R}\right) : \mathbf{M} = \mathbf{L}\mathbf{R}^\top  \right \}.
\end{equation}
If $(\mathbf{L},\mathbf{R}) \in \Omega$, then 
\begin{equation}
    \lambda_{\max} (\nabla^2 \mathcal{L}(\mathbf{L},\mathbf{R})) = 2(\sigma_{\max}(\mathbf{L})^2 + \sigma_{\max}(\mathbf{R})^2).
\end{equation}
\label{remark:twolayer}
\end{corollary}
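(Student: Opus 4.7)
The plan is to obtain the corollary as a direct specialization of Theorem~\ref{theorem:deepmatrixfactorization} with $L=2$, followed by a standard simultaneous-diagonalization argument for the resulting sum of Kronecker products.

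First, I would align the two problem statements. In the notation of Theorem~\ref{theorem:deepmatrixfactorization}, we identify $\bm{W}_2 = \bm{L}$ and $\bm{W}_1 = \bm{R}^\top$ so that $\bm{W}_2 \bm{W}_1 = \bm{L}\bm{R}^\top$. Applying the definitions $\bm{A}_k = \prod_{i=k+1}^{L} \bm{W}_i^*$ and $\bm{B}_k = \prod_{i=1}^{k-1} \bm{W}_i^*$ together with the empty-product convention $\prod_{i=n}^{m} \bm{W}_i = \bm{I}$ when $n>m$, I get $\bm{A}_1 = \bm{L}$, $\bm{B}_1 = \bm{I}_{d_0}$, $\bm{A}_2 = \bm{I}_{d_2}$, and $\bm{B}_2 = \bm{R}^\top$. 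Substituting into Theorem~\ref{theorem:deepmatrixfactorization} then yields
\begin{equation}
 \lambda_{\max}(\nabla^2 \mathcal{L}(\bm{L},\bm{R})) = 2\,\sigma_{\max}\!\Bigl(\bm{I}_{d_0} \otimes \bm{L}\bm{L}^\top + \bm{R}\bm{R}^\top \otimes \bm{I}_{d_2}\Bigr).
\end{equation}

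Next, I would diagonalize the sum inside $\sigma_{\max}$. Write $\bm{L}\bm{L}^\top = \bm{U}_L \bm{\Lambda}_L \bm{U}_L^\top$ and $\bm{R}\bm{R}^\top = \bm{U}_R \bm{\Lambda}_R \bm{U}_R^\top$ as symmetric positive semidefinite eigendecompositions. Using $(\bm{A} \otimes \bm{B})(\bm{C} \otimes \bm{D}) = \bm{A}\bm{C} \otimes \bm{B}\bm{D}$, I check that $\bm{I}_{d_0} \otimes \bm{L}\bm{L}^\top$ and $\bm{R}\bm{R}^\top \otimes \bm{I}_{d_2}$ commute (both products equal $\bm{R}\bm{R}^\top \otimes \bm{L}\bm{L}^\top$), so $\bm{U}_R \otimes \bm{U}_L$ is a common orthogonal eigenbasis. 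The sum therefore has eigenvalues $\lambda_i(\bm{L}\bm{L}^\top) + \lambda_j(\bm{R}\bm{R}^\top)$ over all pairs $(i,j)$.

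Since both summands are symmetric PSD, the spectral norm equals the largest eigenvalue, so the maximum is attained at the pair maximizing each factor independently, giving $\lambda_{\max}(\bm{L}\bm{L}^\top) + \lambda_{\max}(\bm{R}\bm{R}^\top) = \sigma_{\max}(\bm{L})^2 + \sigma_{\max}(\bm{R})^2$. Multiplying by $2$ yields the claim. There is no real obstacle here; the only care required is the Kronecker bookkeeping in Step~1 (correctly mapping $\bm{L},\bm{R}$ to $\bm{W}_2,\bm{W}_1$ and using the empty-product convention) and verifying commutativity of the two Kronecker summands so that the spectrum of the sum is just the sum of spectra.
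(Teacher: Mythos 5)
Your proposal is correct and follows essentially the same route as the paper's main-text proof: specialize Theorem~\ref{theorem:deepmatrixfactorization} to $L=2$ to get $2\sigma_{\max}(\bm{I}\otimes\bm{L}\bm{L}^\top+\bm{R}\bm{R}^\top\otimes\bm{I})$ and then evaluate the spectral norm of this Kronecker sum. The only difference is that where the paper cites \citet[Theorem~4.4.5]{horn1994topics} for the spectrum of the Kronecker sum, you prove that fact directly via commutativity and the common eigenbasis $\bm{U}_R\otimes\bm{U}_L$, which is a correct and self-contained substitute (the paper's appendix gives a separate, genuinely different proof via a triangle-inequality bound and SVD-based achievability).
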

\begin{proof}
We have from Theorem~\ref{theorem:deepmatrixfactorization} that $ \lambda_{\max} (\nabla^2 \mathcal{L}(\mathbf{L},\mathbf{R})) $
\begin{align}
    &= 2\sigma_{\max} \left( \mathbf{I} \otimes \mathbf{L}\mathbf{L}^\top + \mathbf{R}\mathbf{R}^\top \otimes \mathbf{I}\right ) \label{kron1}\\
    &= 2\left( \sigma_{\max}\left(\mathbf{I} \otimes \mathbf{L}\mathbf{L}^\top \right)+ \sigma_{\max} \left(\mathbf{R}\mathbf{R}^\top \otimes \mathbf{I} \right) \right ) \label{kron2} \\
    &=2\left(\sigma_{\max}\left(\mathbf{L}\right)^2 + \sigma_{\max}\left(\mathbf{R}\right)^2\right) \label{kron3}
\end{align}
Using the fact from \citet[Theorem~4.4.5]{horn1994topics}, we can obtain (\ref{kron2}) from (\ref{kron1}). 
Note that for any matrix $\mathbf{A}$ and $\mathbf{B}$, $\sigma_{\max}(\mathbf{A} \otimes \mathbf{B} ) = \sigma_{\max}(\mathbf{A}) \sigma_{\max}(\mathbf{B})$, and $\sigma_{\max}(\mathbf{A}\mathbf{A}^\top) = \sigma_{\max}(\mathbf{A}^\top\mathbf{A}) = \sigma_{\max}(\mathbf{A})^2$. Hence, we rewrite (\ref{kron2}) as (\ref{kron3}).
\end{proof}
We also provide a self-contained proof of this corollary in Appendix \ref{appendix:remarktwolayer}. This result was also recently observed by \citet{josz2025geometryflatminima} independently.

\section{Flatness}
\label{sec:flatness}
In this section, we reveal remarkable aspects of the loss landscape of general deep matrix factorization problems. First, we show that an optimal solution of the deep overparameterized scalar factorization problem is flat if and only if the product of spectral norms of left and right intermediate networks is constant across layers. This is a direct consequence of Corollary~\ref{theorem:warmupdeepoverparameterizedscalarfactorization}.

\begin{corollary}
Consider the following objective function
\begin{equation}
    \mathcal{L}\left(\mathbf{w}\right) := ({m-\mathbf{w}_L \mathbf{W}_{L-1}\cdots \mathbf{W}_2\mathbf{w}_1})^2,
\end{equation}
where $m \in \mathbb{R}$, $d_0 = d_L = 1$, $\mathbf{w}_L \in \mathbb{R}^{1 \times d_{L-1}}$ and $\mathbf{w}_1 \in \mathbb{R}^{d_1 \times 1}$. For hidden factors (layers), i.e., for all $i \in \{2,3,\cdots,L-1\}$, we have $\mathbf{W}_i \in \mathbb{R}^{d_i \times d_{i-1}}$. Then, $\mathbf{w}^* \in \Omega_F$ if and only if

\begin{equation}
    \sigma_{\max}(\mathbf{A}_k)\sigma_{\max}(\mathbf{B}_k) = |m|^{1-\frac{1}{L}} \quad \forall k \in [L],
\end{equation}
where $\mathbf{A}_k = \prod_{i = k+1}^L \mathbf{W}_i^*$ and $\mathbf{B}_k = \prod_{i = 1}^{k-1} \mathbf{W}_i^*$.
\label{corollarydeeposcalerspectralbalanced}
\end{corollary}

\begin{proof}
    For a minimizer $\mathbf{w}^*$, let us assume that $\sigma_{\max}(\mathbf{A}_k) \sigma_{\max}(\mathbf{B}_k) = |m|^{1-\frac{1}{L}}$ for all $k \in [L]$. Then, by Corollary~\ref{theorem:warmupdeepoverparameterizedscalarfactorization}, $\lambda_{\max}(\nabla^2\mathcal{L}(\mathbf{w}^*)) = 2L \times m^{2(1-1/L)}$. This implies that $\mathbf{w}^*$ is flat \citep[also see~Theorem~1]{mulayoff}. Now, assume that $\mathbf{w}^*$ is flat. By using the fact from \citet[Theorem~2]{mulayoff}, any flat minimum $\mathbf{w}^*$ and $k \in [L]$, it holds that
    \begin{equation}
        \sigma_{\max}(\mathbf{A}_k) = |m|^{1- \frac{k}{L}} \:\:\text{and} \:\:\sigma_{\max}(\mathbf{B}_k) = |m|^{\frac{k-1}{L}}.
    \end{equation}
    Hence, $ \sigma_{\max}(\mathbf{A}_k) \sigma_{\max}(\mathbf{B}_k) = |m|^{1-\frac{1}{L}}$ for all $k \in [L]$.
\end{proof}
Second, we show that being flat in depth-$2$ matrix factorization is equivalent to being spectral-norm balanced. This result is a direct consequence of Corollary \ref{remark:twolayer} and the AM-GM inequality \citep[see also][Lemma~5]{josz2025geometryflatminima}. 
\begin{corollary}
Consider the following depth-$2$ matrix factorization objective
\begin{equation}
    \mathcal{L}(\mathbf{L},\mathbf{R}) = \norm{\mathbf{M} - \mathbf{L}\mathbf{R}^\top}_F^2,
\end{equation}
where $\mathbf{M} \in \mathbb{R}^{d_L \times d_0}$ is the target matrix and $\mathbf{L} \in \mathbb{R}^{d_L \times k}$, $\mathbf{R} \in \mathbb{R}^{d_0 \times k}$ such that $k \geq \min \{d_0,d_L\}$. Then, $(\mathbf{L}^*, \mathbf{R}^*) \in \Omega_F$ if and only if 
\begin{equation}
    \sigma_{\max}(\mathbf{L}^*) = \sigma_{\max}(\mathbf{R}^*) = \sqrt{\sigma_{\max}(\mathbf{M})}.
\end{equation}
\label{twolayerspectralbalanced}
\end{corollary}

\begin{remark}
We discuss further implications of this equivalence in Section~\ref{sec:discussion}.
\end{remark}

Now, we show that a minimizer of deep matrix factorization loss is flat if the product of spectral norms of left and right intermediate factors is constant across layers.
\begin{corollary}
\label{corollary:deepmatrix}
    Consider the optimization objective (\ref{oop}). If $\mathbf{w}^* \in \Omega$, then 
    \begin{equation}
        \lambda_{\max} \bigl( \nabla^2 \mathcal{L}(\mathbf{w}^*) \bigr) \leq 2 \sum_{k=1}^{L}\sigma_{\max}(\mathbf{A}_k)^2\sigma_{\max}(\mathbf{B}_k)^2,
        \label{inequality}
    \end{equation}
    where $\mathbf{A}_k = \prod_{i = k+1}^L \mathbf{W}_i^*$ and $\mathbf{B}_k = \prod_{i = 1}^{k-1} \mathbf{W}_i^*$ (see Appendix \ref{warmupproofappendix}). This implies that if $\mathbf{w}^*$ satisfies $\sigma_{\max}(\mathbf{A}_k)\sigma_{\max}(\mathbf{B}_k) = \sigma_{\max}(\mathbf{M})^{1-\frac{1}{L}}$ for all $k \in [L]$, then $\mathbf{w}^*$ is flat. 
\end{corollary}
\begin{proof}
\citet{mulayoff} showed that for any $\mathbf{w}^* \in \Omega$,
\begin{equation}
    \lambda_{\max} \bigl( \nabla^2 \mathcal{L}(\mathbf{w}^*) \bigr) \geq 2L\sigma_{\max}(\mathbf{M})^{2(1-1/L)}.
\end{equation}
Now, assume that $\sigma_{\max}(\mathbf{A}_k)\sigma_{\max}(\mathbf{B}_k) = \sigma_{\max}(\mathbf{M})^{1-\frac{1}{L}}$ for all $k \in [L]$. Then by (\ref{inequality}),

\begin{equation}
    \lambda_{\max} \bigl( \nabla^2 \mathcal{L}(\mathbf{w}^*) \bigr) = 2L\sigma_{\max}(\mathbf{M})^{2(1-1/L)}.
\end{equation}
 This implies that $\mathbf{w}^*$ is flat \citep[Theorem~1]{mulayoff}.
\end{proof}

\section{Discussion}
\label{sec:discussion}
In this section, we thoroughly examine the claims made by \citet{ding2024flat} and \citet{ghosh2025learning} that flat minima coincide with Frobenius-norm balanced minima in depth-$2$ matrix factorization. We then discuss how sharpness measures can be misleading for loss landscape analysis.

\subsection{Strict Assumptions}
First, we examine the latter work, where \citet{ghosh2025learning} examined the learning dynamics of deep linear networks under the deep matrix factorization loss beyond the edge of stability. Their analysis relies on two strict assumptions, where the second follows from the first one. First, they introduce the \emph{singular vector stationary set}, i.e., for any initialization of GD from this set, \emph{GD does not rotate the layers during training}. In Proposition 4, they prove that the Frobenius-norm balanced initialization they considered in the paper, i.e., $\mathbf{W}_L (0) = \mathbf{W}_{L-1}(0) = \cdots = \mathbf{W}_1 (0) = \alpha \mathbf{I}$ where $\alpha \in \mathbb{R}$, is a member of the singular vector stationary set. This is leveraged to decouple the dynamics of the singular vectors and singular values. Second, for this specific initialization, singular values remain balanced during training, which is not true for arbitrary initialization. Under these assumptions, their optimization objective becomes a deep scalar factorization problem in which the spectral norm equals the Frobenius norm. Therefore, they claim that \emph{Frobenius-norm balanced minima correspond to flat minima} in deep matrix factorization problem. However, this is only valid from the perspective of GD initialized at a specific set of points. Globally, we can find minimizers that are \textbf{flat but not Frobenius-norm balanced}. To see this, consider a depth-$2$ matrix factorization problem, i.e.,
\begin{equation}
    \min_{\mathbf{L}, \mathbf{R}} \mathcal{L}(\mathbf{L},\mathbf{R}) \:\:\text{where} \:\:\mathcal{L}(\mathbf{L},\mathbf{R}) = \norm{\mathbf{M}-\mathbf{L}\mathbf{R}^\top}_F^2,
\end{equation}
where $\mathbf{M} \in \mathbb{R}^{m \times n}, \mathbf{L} \in \mathbb{R}^{m \times k}$ and $\mathbf{R} \in \mathbb{R}^{n \times k}$. Suppose $m=n=k=3$ and $\mathbf{M} = \text{diag}(11,2,1)$. We know that for any flat minimum $(\mathbf{L}',\mathbf{R}')$, $\lambda _{\max}(\nabla^{2} (\mathcal{L}(\mathbf{L}',\mathbf{R}'))) = 4 \times \sigma_{\max} (\mathbf{M})$, which equals $44$. By definition, any minimum whose sharpness is equal to $44$ is a flat minimum. Let's investigate a specific minimizer $(\mathbf{L} ^{*}, \mathbf{R} ^ * )$ such that $\mathbf{L} ^ * = \text{diag}(\sqrt{11} , \frac{2}{3}, 1)$ and $\mathbf{R} ^ * = \text{diag}(\sqrt{11} , 3, 1)$. Using Corollary \ref{remark:twolayer}, $\lambda _{\max}(\nabla ^2(\mathcal{L}(\mathbf{L}^ *,\mathbf{R}^*))) = 44$, which means that $(\mathbf{L}^*,\mathbf{R}^*)$ is a flat minimum. On the other hand, by definition of Frobenius-norm balancedness, i.e., $\mathbf{L}\mathbf{L}^\top = \mathbf{R}\mathbf{R}^{\top}$, $(\mathbf{L}^{*}, \mathbf{R}^{*})$ is not Frobenius-norm balanced. Therefore, a flat minimum is not necessarily Frobenius-norm balanced.

\subsection{Sharpness Measures Are Delusive}  \citet{ding2024flat} showed that flat minima recover the ground truth matrix in low-rank matrix recovery. Furthermore, they showed that \emph{norm-minimal}, \emph{Frobenius-balanced} and \emph{flat} solutions coincide in depth-$2$ matrix factorization \citep[Lemma~2.2]{ding2024flat}. They measured the sharpness of a minimum using \emph{scaled trace}. Under this measure, flat minima coincide with Frobenius-norm balanced minima; however, as we showed above, this is not necessarily true when the worst-case sharpness, i.e., maximum Hessian eigenvalue, is used as the sharpness measure. In fact, we showed that flat minima are spectral-norm balanced when the maximum Hessian eigenvalue is used as the sharpness measure (Corollary \ref{twolayerspectralbalanced}). This means that a loss landscape analysis in general matrix factorization/two-layer linear neural network training problems using a specific sharpness measure might not lead to the same inferences as those made by using a different metric. 
\subsection{The Necessity of an Absolute Sharpness Measure} 
There is still no consensus in the literature regarding the definition of flatness. For instance, while several works define flat minima as global minimizers that minimize the maximum eigenvalue of the Hessian of the loss \citep{mulayoff,liu2021noisy,marion2024deep}, others define them as global minimizers that minimize the trace of the Hessian \citep{sharpminimacangeneralize,gatmiry2023inductive}. Even a scaled version of the Hessian trace designed to define flat minima in depth-$2$ matrix factorization problems \citep{ding2024flat}, as we mentioned before.  As we have shown in Section \ref{sec:flatness} and discussed above, different sharpness measures can lead to contradictory interpretations of balanced minima. Thus, the necessity of a robust sharpness measure that consolidates the interpretations from existing ones is a matter of urgency.  Addressing this concern, \citet{josz2025geometryflatminima} showed how minimizing the maximum Hessian eigenvalue over the solution set can disregard the higher-order variation near minimizers and result in misleading interpretations of flat minima. Therefore, \citet{josz2025geometryflatminima} defined the flat minima of any smooth function $\mathcal{L}$ as the local minima of $\lambda_{\max}(\nabla^2 \mathcal{L}(\mathbf{w}))$ under the constraint $\mathcal{L}(\mathbf{w}) = \mathcal{L}(\mathbf{w}^*)$ and demonstrated that this notion coincides with other notions of flatness in the depth-$2$ matrix factorization.

\begin{figure*}[htb!]
    \centering
    \begin{subfigure}[b]{0.32\linewidth}
        \centering
        \includegraphics[width=1\linewidth]{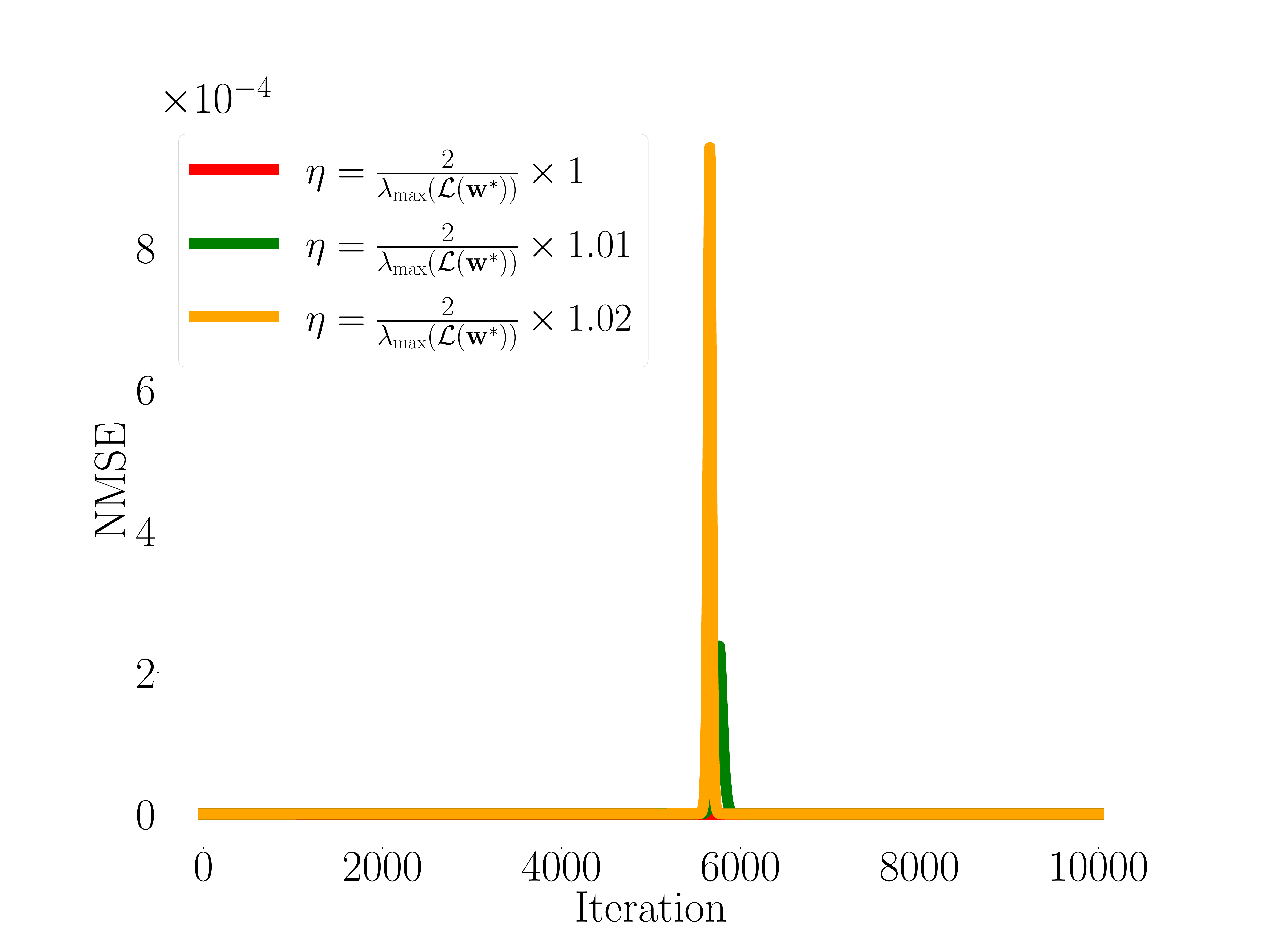}
        \caption{Normalized training error across iterations, i.e., $\mathcal{L}(\mathbf{w}_k)/\lVert \mathbf{M} \rVert_F^2$.}
        \label{fig2:first}
    \end{subfigure}
    \hfill
    \begin{subfigure}[b]{0.32\linewidth}
        \centering
        \includegraphics[width=1\linewidth]{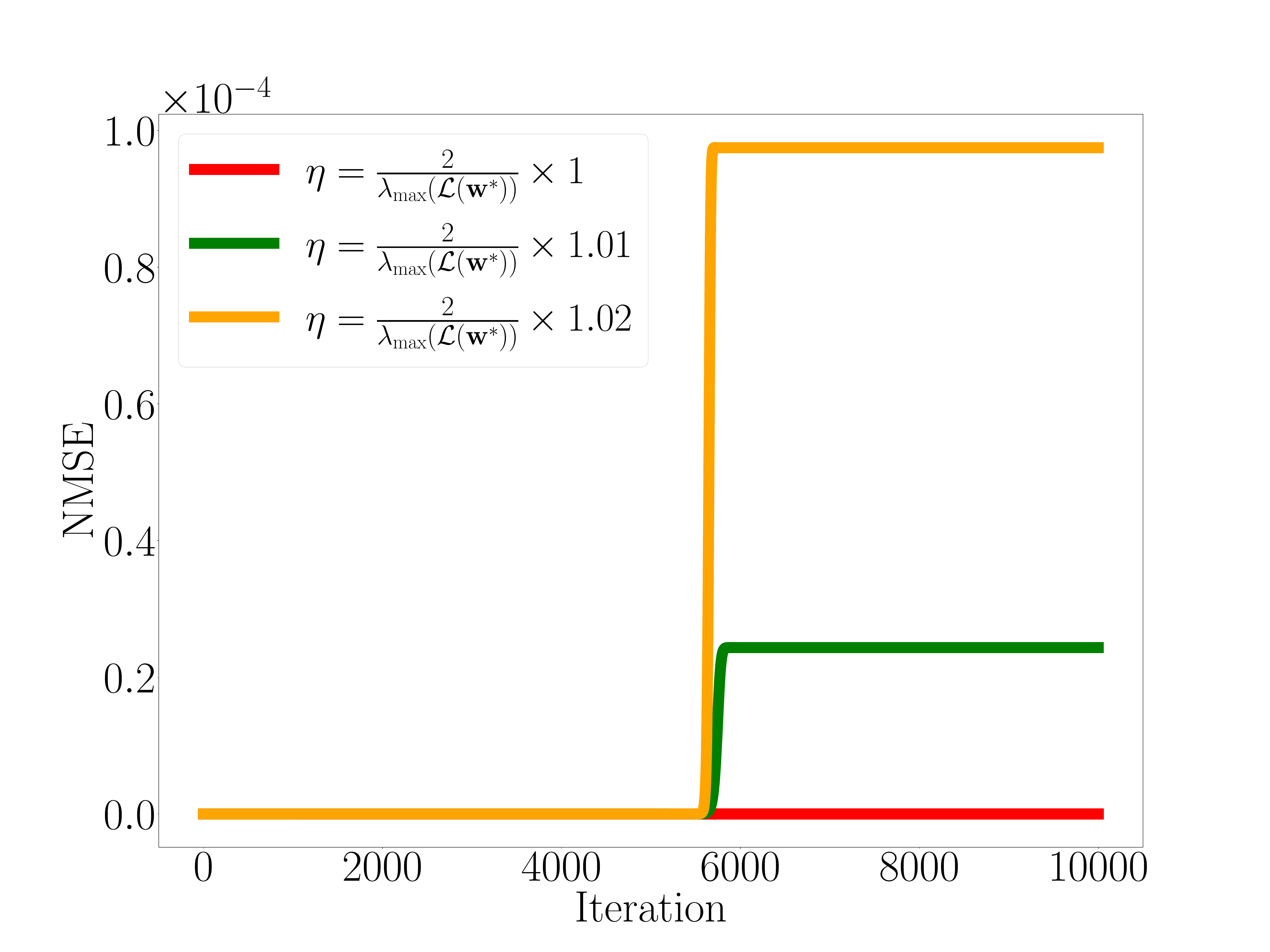}
        \caption{Normalized $\ell^2$ distance of $\mathbf{w}_k$ from the minimum $\mathbf{w}^*$, i.e., $\norm{\mathbf{w}_k-\mathbf{w}^*}_2^2 / \norm{\mathbf{w}^*}_2^2$.}
        \label{fig2:second}
    \end{subfigure}
    \hfill
    \begin{subfigure}[b]{0.32\linewidth}
        \centering
        \includegraphics[width=1\linewidth]{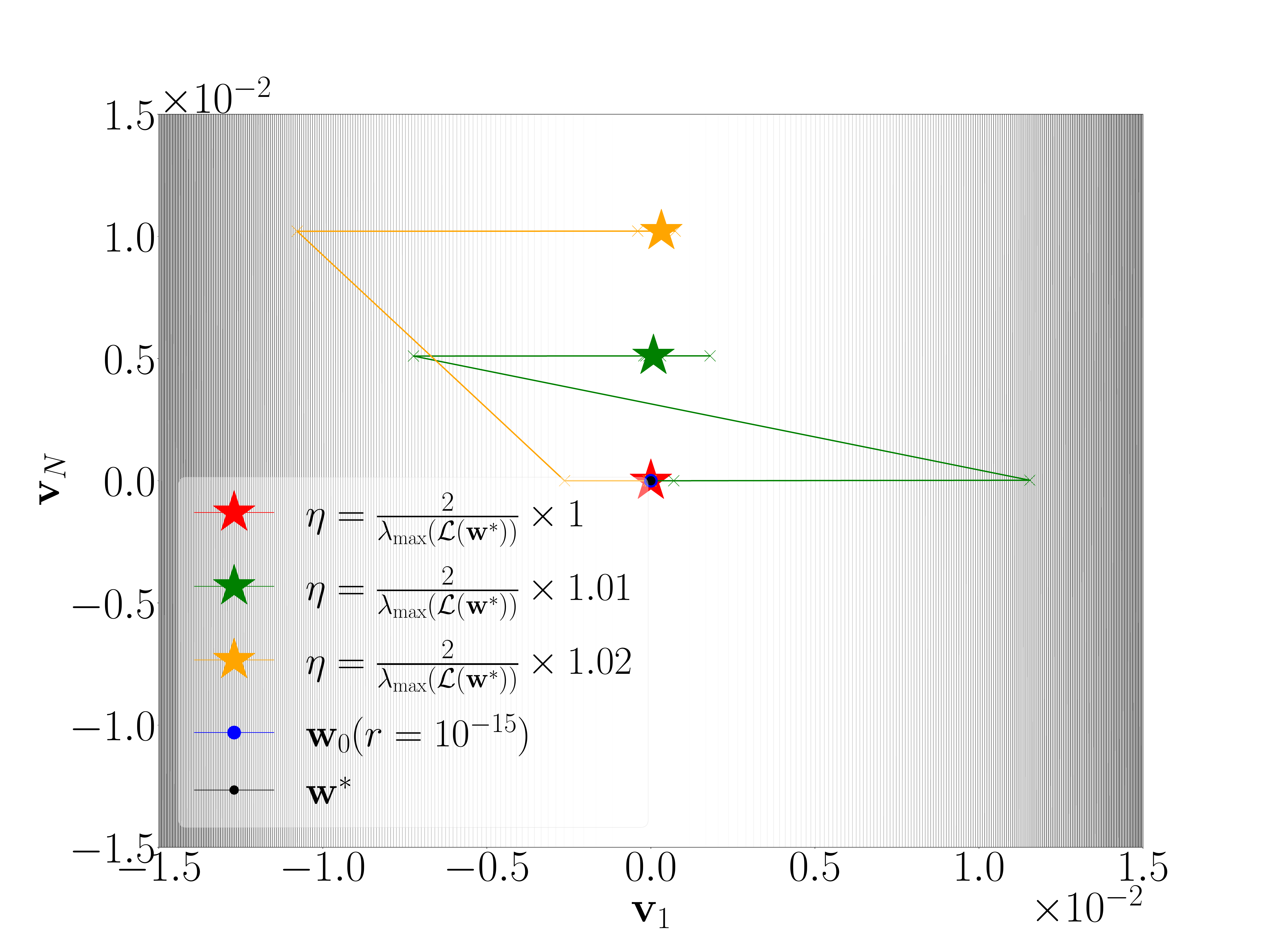}
        \caption{Trajectories of GD on the contour map of the loss landscape around the minimum.}
        \label{fig2:third}
    \end{subfigure}
    \caption{GD dynamics with different step sizes indicated by different colors are initialized within a radius of $10^{-15}$ from the minimum in the direction of the Hessian eigenvector that corresponds to the maximum eigenvalue for the depth-$2$ matrix factorization of a random Gaussian matrix, where $\mathbf{L} \in \mathbb{R}^{10 \times 20}$ and $\mathbf{R} \in \mathbb{R}^{20 \times 20}$. The vector $\mathbf{v}_1$ denotes the eigenvector of the Hessian that corresponds to the maximum eigenvalue, while $\mathbf{v}_N$ denotes the eigenvector that corresponds to the smallest eigenvalue. The value of $\lambda_{\max}(\nabla^{2}\mathcal{L}(\mathbf{w}^*))$ is computed using the closed-form expression derived in Corollary~\ref{remark:twolayer}. Darker regions indicate higher loss, whereas brighter regions indicate lower loss.}
    \label{figure:twolayers}
\end{figure*}

\begin{figure}[htb!]
    \centering
    \begin{subfigure}[b]{0.49\linewidth}
        \centering
        \includegraphics[width =\linewidth]{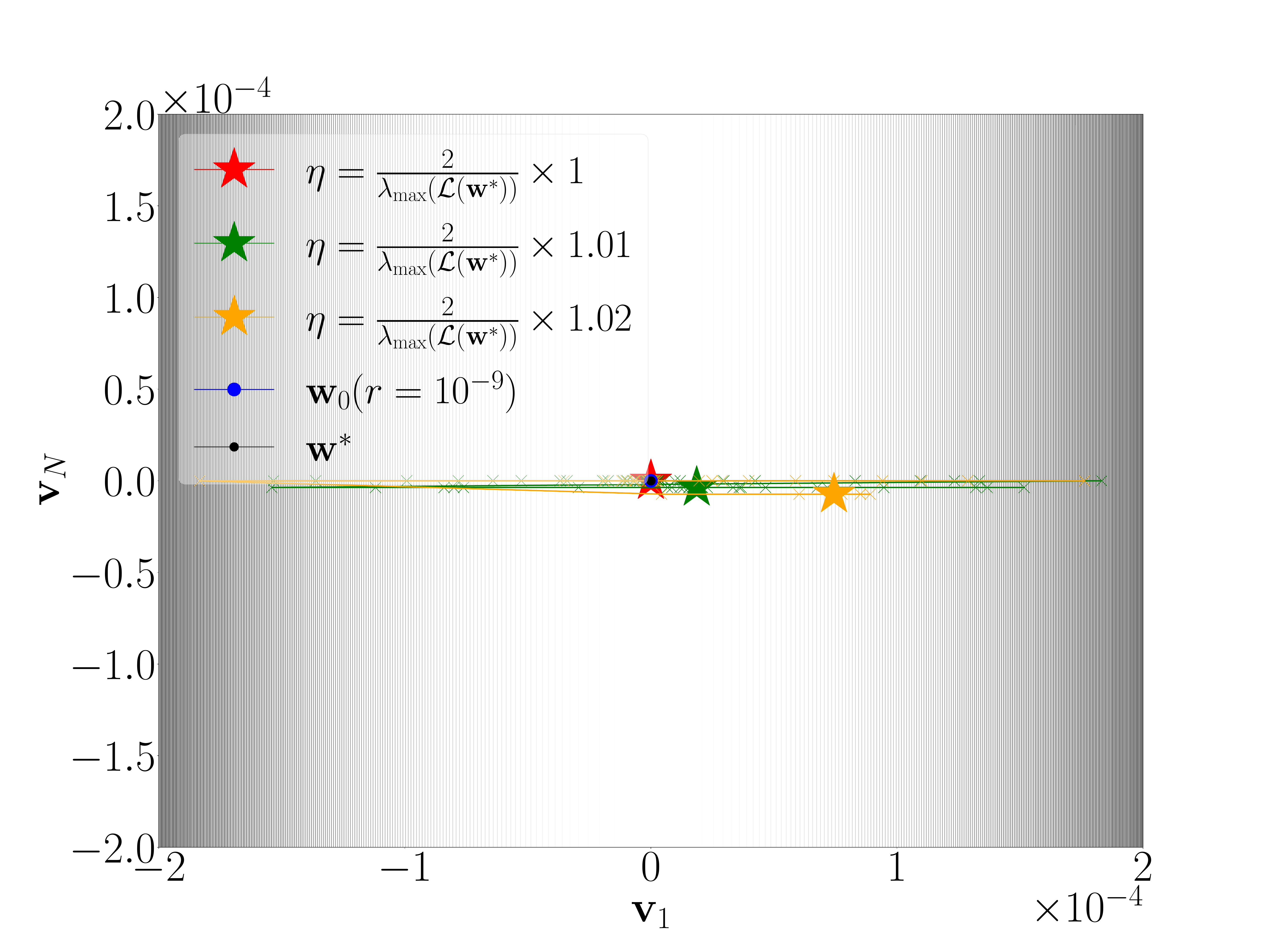}
        \caption{Trajectories of GD initialized at $\mathbf{w}_0$ with step sizes $\geq 2/{\lambda_{\max}}$ are depicted by colored lines, and their corresponding convergence points are marked by colored $\star$ symbols.
        }
        \label{fig1:first}
    \end{subfigure}
    \hfill
    \begin{subfigure}[b]{0.49\linewidth}
        \centering
        \includegraphics[width=\linewidth]{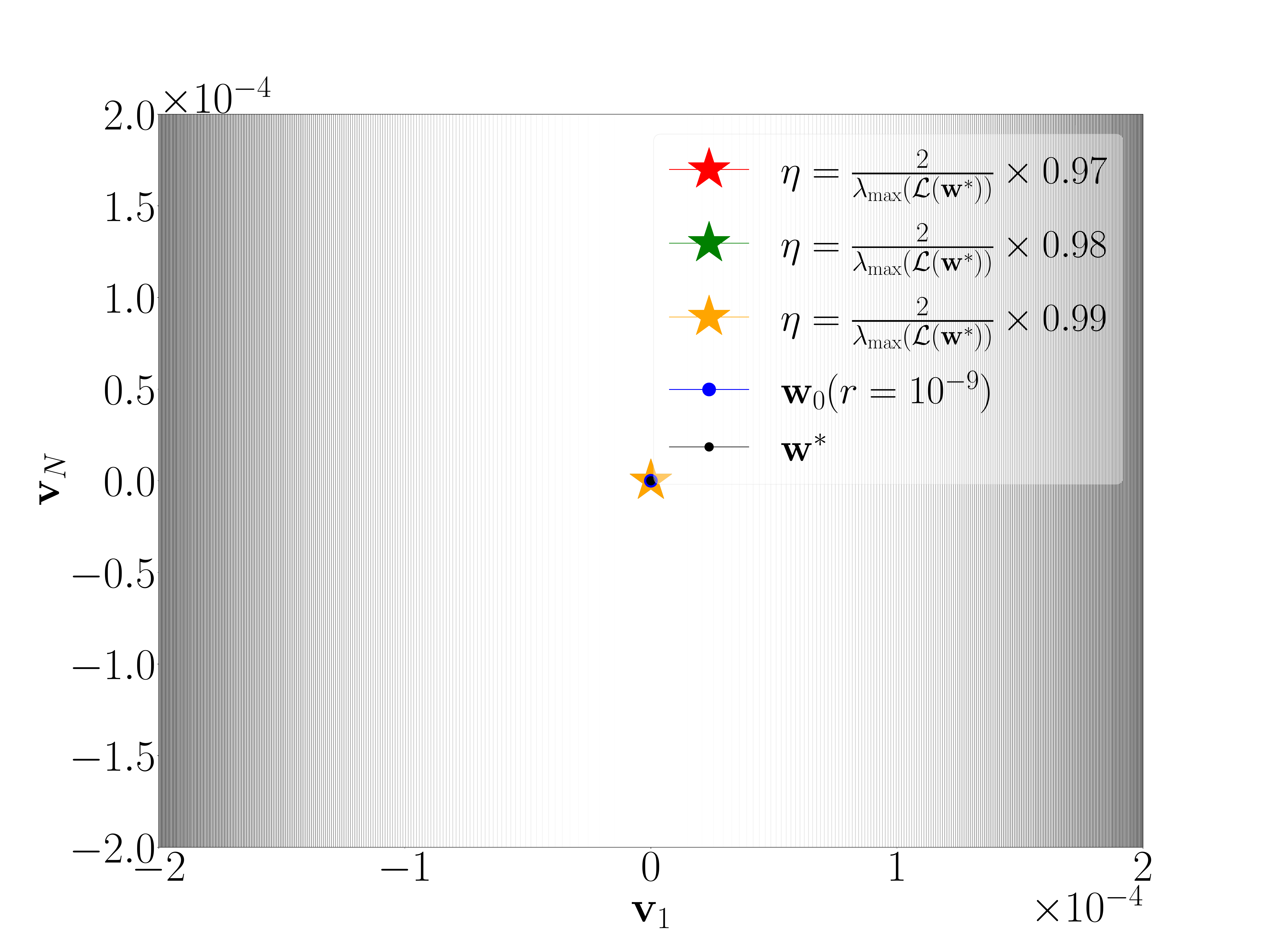}
        \caption{Trajectories of GD initialized at $\mathbf{w}_0$ with step sizes $< 2/{\lambda_{\max}}$ are depicted by colored lines, and their corresponding convergence points are marked by colored $\star$ symbols.}
        \label{fig1:second}
    \end{subfigure}
    \caption{Contour map of the loss landscape around a minimizer of a 15-layer overparameterized scalar factorization problem. GD with different step sizes indicated by different colors is initialized within a radius of $10^{-9}$ from the minimum in the direction of the Hessian eigenvector that corresponds to the maximum eigenvalue. The vector $\mathbf{v}_1$ denotes the eigenvector of the Hessian that corresponds to the maximum eigenvalue, while $\mathbf{v}_N$ denotes the eigenvector that corresponds to the smallest eigenvalue. The value of $\lambda_{\max}(\nabla^{2}\mathcal{L}(\mathbf{w}^*))$ is computed using the closed-form expression derived in Corollary~\ref{theorem:warmupdeepoverparameterizedscalarfactorization}. Darker regions indicate higher loss, whereas brighter regions indicate lower loss.}
    \label{figure:contourmap}
\end{figure}

\section{Experiments} 
\label{sec:exp}
The dynamical stability analysis relies on how accurate the quadratic approximation of the loss function is in the $\delta$-neighborhood of $\mathbf{w}^*$ \citep{Wustab} (also see Definition \ref{linearinstability}). For loss functions whose Hessian is degenerate at a minimizer, this analysis fails. In contrast, the squared-error loss ensures that the Hessian at any minimizer is not degenerate/does not vanish (also see \citet[Lemma~3]{mulayoff}). Therefore, we can initialize GD in a $\delta$-neighborhood with $\delta$ of order $10^{-15}$ to $10^{-9}$ so that the linearized GD dynamics approximate the true GD dynamics well. To observe escape phenomenon clearly, we choose the perturbation direction for the initial point to be the eigenvector of $\nabla^2 \mathcal{L}(\mathbf{w}^*)$ corresponding to the largest eigenvalue, so as to avoid choosing a direction that is orthogonal to the eigenspace corresponding to the eigenvalues larger than $2/\eta$.

Note that we could have also considered a random perturbation direction sampled uniformly from $\mathbb{S}^{N-1}$. \citet{mulayoff} showed that for the deep matrix factorization problem (which has the same Hessian structure with deep linear networks with a full rank data matrix), the Hessian is rank-deficient at all minima by at least the order of $1 - (1/L)$. This means that at least $1 - (1/L)$ of the eigenvalues are zero at a minimum. Thus, the probability of choosing a direction that is orthogonal to the eigenspace corresponding to eigenvalues larger than $2/\eta$ is 0. Therefore, random perturbations would lead to the escape phenomenon with probability $1$.

On the other hand, for stable minima, it is important to choose the direction as the eigenvector of $\nabla^2 \mathcal{L}(\mathbf{w}^*)$ that corresponds to the largest eigenvalue. The reasoning is the same. If you choose a direction that is not orthogonal to the eigenspace corresponding to the eigenvalues that are zero, then GD never converges to $\mathbf{w}^*$. This means that if we choose the perturbation direction randomly, then with probability $1$, we choose a direction that is not orthogonal to the eigenspace corresponding to the eigenvalues that are zero. Therefore, for the experiment, it is convenient to choose the perturbation direction as the eigenvector of $\nabla^2 \mathcal{L}(\mathbf{w}^*)$ corresponding to the largest eigenvalue. Note that the convergence we refer to here is the convergence to a specific global minimizer, not to an arbitrary global minimizer.

To measure the distance between the convergence point and the minimizer, we plot the normalized $\ell^2$-norm of $\mathbf{w}_k-\mathbf{w}^*$ at each iteration as shown in Fig. \ref{figure:twolayers}. Furthermore, if $\eta > 2/\lambda_{\max}$, where $\lambda_{\max} := \lambda_{\max}(\nabla^2 \mathcal{L}(\mathbf{w}^*))$, GD always escapes from the minimum as shown in Fig. \ref{figure:twolayers} and Fig. \ref{figure:contourmap}. On the other hand, as shown in Fig. \ref{figure:contourmap}, if $\eta \leq 2/\lambda_{\max}$, then GD converges. A catapult in the training error shown in Fig. \ref{figure:twolayers} indicates GD’s escape from the basin of a minimum, after which it eventually converges to another minimum unless the step size is very large \citep{marion2024deep}. To the best of our knowledge, this is the first empirical characterization of an escape phenomenon during gradient-based training near a minimizer of a deep matrix factorization problem. For the methodology used to generate contour maps of the loss landscape near a minimum, see Appendix~\ref{visualization}, and for additional experiments, see Appendix \ref{additionalexperiments}.

\section{Conclusion}

In this paper, we derived an exact expression for the maximum eigenvalue of the Hessian of the squared-error loss in overparameterized deep matrix factorization at any minimizer, and showed that it simplifies substantially in the depth-$2$ matrix factorization and deep overparameterized scalar factorization settings. We further established that flat minima are spectral-norm balanced in depth-$2$ matrix factorization, and that a minimizer of deep overparameterized scalar factorization is flat if and only if the product of the spectral norms of the left and right intermediate factors is constant across layers; the same condition is sufficient for flatness in deep matrix factorization. To complement our theory, we conducted GD experiments that crucially rely on our exact expression of the sharpness to observe the escape phenomenon during training.  Finally, we discussed that sharpness measures can be misleading for loss landscape analysis, highlighting the need for robust sharpness notions that reconcile existing interpretations.

\bibliography{refs}
\bibliographystyle{icml2026}

\newpage
\appendix
\onecolumn

\section{Proofs from Section~\ref{sec:notation}}
\subsection{Proof of Lemma \ref{lemma1}}
\label{appendixlemma1}

\begin{proof}
We can express the derivative of $\frac{\partial f(\mathbf{X})}{\partial X_{ij}}$ with respect to each entry of $\mathbf{X}$, using the limit definition of the derivative, as follows: 

\begin{equation}
    \frac{\partial^2 f(\mathbf{X})}{\partial X_{kl}\partial X_{ij}} = \frac{\partial}{\partial X_{kl}} \Bigl( \frac{\partial f(\mathbf{X})}{\partial X_{ij}} \Bigr) = \lim_{\Delta t \rightarrow 0 } \frac{\partial f(\mathbf{X}+\Delta t \mathbf{e}_i \mathbf{e}_j^\top)- \partial f(\mathbf{X})}{\partial X_{kl} \Delta t}, \quad \forall(k,l) \in [K] \times [L]
\end{equation}
which is equal to 
\begin{equation}
\lim_{\Delta h, \Delta t \rightarrow 0} \frac{f(\mathbf{X}+\Delta t \mathbf{e}_i \mathbf{e}_j ^\top + \Delta h \mathbf{e}_k \mathbf{e}_l^\top) - f(\mathbf{X}+\Delta t \mathbf{e}_i \mathbf{e}_j ^\top)  - f(\mathbf{X}+\Delta h \mathbf{e}_k \mathbf{e}_l^\top)+f(\mathbf{X})}{\Delta h \Delta t}.   
\end{equation}
By using the substitution of variables as in (\ref{lemma1proofreq}),

\begin{equation}
   \frac{\partial^2 f(\mathbf{X})}{\partial X_{kl}\partial X_{ij}} U_{ij} U_{kl} =\frac{\partial}{\partial X_{kl}} \Bigl( \frac{\partial f(\mathbf{X})}{\partial X_{ij}} U_{ij}\Bigr) U_{kl} = \lim_{\Delta t \rightarrow 0 } \frac{\partial f(\mathbf{X}+\Delta t U_{ij} \mathbf{e}_i \mathbf{e}_j^\top)- \partial f(\mathbf{X})}{\partial X_{kl} \Delta t} U_{kl}.
   \label{42}
\end{equation}
Equivalently, 
\begin{equation}
\lim_{\Delta h, \Delta t \rightarrow 0} \frac{f(\mathbf{X}+\Delta t U_{ij}\mathbf{e}_i \mathbf{e}_j ^\top + \Delta h U_{kl}\mathbf{e}_k \mathbf{e}_l^\top) - f(\mathbf{X}+\Delta t U_{ij} \mathbf{e}_i \mathbf{e}_j ^\top)  - f(\mathbf{X}+\Delta h U_{kl} \mathbf{e}_k \mathbf{e}_l^\top)+f(\mathbf{X})}{\Delta h \Delta t}.   
\end{equation}
which can be proved by substitution of variables in (\ref{42}). In turn, second order differential due to any $\mathbf{U} \in \mathbb{R}^{K \times L}$ is
\begin{equation}
    D_{\mathbf{U}}^2 f(\mathbf{X}) =\sum_{i,j} \sum_{k,l}  \frac{\partial^2 f(\mathbf{X})}{\partial X_{kl}\partial X_{ij}} U_{ij} U_{kl} = \Bigl<\nabla \bigl< \nabla f(\mathbf{X}),\mathbf{U} \bigr>, \mathbf{U} \Bigr>,
\end{equation}
where 
\begin{equation}
    \nabla f(\mathbf{X}) = \begin{bmatrix}
        \frac{\partial f(\mathbf{X})}{\partial X_{11}} &  \frac{\partial f(\mathbf{X})}{\partial X_{12}} & \dots & \frac{\partial f(\mathbf{X})}{\partial X_{1L}} \\ 
        \frac{\partial f(\mathbf{X})}{\partial X_{21}} &  \frac{\partial f(\mathbf{X})}{\partial X_{22}} & \dots & \frac{\partial f(\mathbf{X})}{\partial X_{2L}} \\
        \vdots &  \ddots   & \dots & \vdots \\
        \frac{\partial f(\mathbf{X})}{\partial X_{K1}} & \frac{\partial f(\mathbf{X})}{\partial X_{K2}}   & \dots & \frac{\partial f(\mathbf{X})}{\partial X_{KL}} \\
    \end{bmatrix} \in \mathbb{R}^{K \times L}.
\end{equation}
Equivalently,

\begin{align}
D_{\mathbf{U}}^2 f(\mathbf{X}) =&\sum_{k,l} \lim_{\Delta t \rightarrow 0 } \frac{\partial f(\mathbf{X}+\Delta t \mathbf{U} \mathbf{e}_i \mathbf{e}_j^\top)- \partial f(\mathbf{X})}{\partial X_{kl} \Delta t} U_{kl} \\
=& \lim_{\Delta t \rightarrow 0} \frac{f(\mathbf{X}+2\Delta t \mathbf{U}) - 2f(\mathbf{X}+\Delta t \mathbf{U}) + f(\mathbf{X})}{\Delta t^2} \\ 
=&\frac{\partial^2}{\partial t^2} f(\mathbf{X} + t\mathbf{U}) \Biggl |_{t = 0}.
\end{align}
\end{proof}

\subsection{Proof of Lemma \ref{lemma2}}
\label{appendixlemma2}
\begin{proof}

We can rewrite (\ref{directderiv}) by using the definition of \emph{Frobenius inner product} 

\begin{equation}
    f(\mathbf{W}_1, \cdots, \mathbf{W}_L) = \Bigl < \mathbf{M}- \prod_{i=1}^{L} \mathbf{W}_i, \mathbf{M}- \prod_{i=1}^{L} \mathbf{W}_i \Bigr >.
\end{equation}
Then 
\begin{equation}
    f(\mathbf{W}_1+t\mathbf{U}_1, \cdots, \mathbf{W}_L+t\mathbf{U}_L) = \Bigl < \mathbf{M}- \prod_{i=1}^{L} (\mathbf{W}_i+t\mathbf{U}_i), \mathbf{M}- \prod_{i=1}^{L} (\mathbf{W}_i+t\mathbf{U}_i) \Bigr >.
\end{equation}
Let's define $g:\mathbb{R} \longrightarrow \mathbb{R}^{d_L \times d_0}$ such that 
\begin{equation}
    g(t) = \mathbf{M}-\prod_{i=1} ^ L (\mathbf{W}_i + t\mathbf{U}_i), \quad f(\mathbf{W}_1+t\mathbf{U}_1, \cdots, \mathbf{W}_L+t\mathbf{U}_L) = \Bigl < g(t), g(t)\Bigr >.
\end{equation}
First, we need to differentiate $f(\mathbf{W}_1+t\mathbf{U}_1, \cdots, \mathbf{W}_L+t\mathbf{U}_L)$ with respect to $t$. Using the fact that $\Bigl <\mathbf{A},\mathbf{B}\Bigr > = \operatorname{tr}(\mathbf{A}^\top \mathbf{B})$, which simplifies the differentiation, 

\begin{equation}
    \frac{d}{dt} f(\mathbf{W}_1+t\mathbf{U}_1, \cdots, \mathbf{W}_L+t\mathbf{U}_L) = 2\Bigl < g(t), g'(t)\Bigr >.
\end{equation}

\begin{equation}
    \frac{d^2}{dt^2} f(\mathbf{W}_1+t\mathbf{U}_1, \cdots, \mathbf{W}_L+t\mathbf{U}_L) = 2\Bigl < g'(t), g'(t)\Bigr > + 2\Bigl < g(t), g''(t)\Bigr > .
\end{equation}
Then, the directional second derivative $\nabla^2f(\mathbf{W}_1, \cdots, \mathbf{W}_L)[\mathbf{U}_1,\cdots,\mathbf{U}_L]$ equals 

\begin{equation}
    \frac{d^2}{dt^2} f(\mathbf{W}_1+t\mathbf{U}_1, \cdots, \mathbf{W}_L+t\mathbf{U}_L)\Bigr|_{t=0} = 2\Bigl < g'(0), g'(0)\Bigr > + 2\Bigl < g(0), g''(0)\Bigr > .
\end{equation}
It is straightforward to differentiate $g(t)$ such that 

\begin{align}
    g(0)   &= \mathbf{M}-\prod_{i=1}^L \mathbf{W}_i, \\
    g'(0)  &= - \sum_{i=1}^{L} 
        \Biggl[ \Bigl(\prod_{j=i+1}^{L} \mathbf{W}_j \Bigr)\mathbf{U}_i 
        \Bigl(\prod_{j=1}^{i-1} \mathbf{W}_j \Bigr) \Biggr], \\
    g''(0) &= -2 \sum_{1 \leq k < i \leq L} \Biggl[ \Bigl( \prod_{j=i+1}^{L} \mathbf{W}_j \Bigr)\mathbf{U}_i \Bigl ( \prod_{j=k+1} ^{i-1} \mathbf{W}_j \Bigr) \mathbf{U}_k \Bigl ( \prod_{j=1}^{k-1} \mathbf{W}_j \Bigr)\Biggr].
\end{align}
Therefore, for any $[\mathbf{W}_1,\mathbf{W}_2,\cdots, \mathbf{W}_L]$ in parameter space

\begin{align}
    &\nabla^2f(\mathbf{W}_1, \cdots, \mathbf{W}_L)[\mathbf{U}_1,\cdots,\mathbf{U}_L]  \hfill = \\
     &2\Bigl <\sum_{i=1}^{L} 
        \Biggl[ \Bigl(\prod_{j=i+1}^{L} \mathbf{W}_j \Bigr)\mathbf{U}_i 
        \Bigl(\prod_{j=1}^{i-1} \mathbf{W}_j \Bigr) \Biggr],\sum_{i=1}^{L} 
        \Biggl[ \Bigl(\prod_{j=i+1}^{L} \mathbf{W}_j \Bigr)\mathbf{U}_i 
        \Bigl(\prod_{j=1}^{i-1} \mathbf{W}_j \Bigr) \Biggr]\Bigr > \\
        -&4 \Bigl <\mathbf{M}-\prod_{i=1}^L \mathbf{W}_i, \sum_{1 \leq k < i \leq L} \Biggl[ \Bigl( \prod_{j=i+1}^{L} \mathbf{W}_j \Bigr)\mathbf{U}_i \Bigl ( \prod_{j=k+1} ^{i-1} \mathbf{W}_j \Bigr) \mathbf{U}_k \Bigl ( \prod_{j=1}^{k-1} \mathbf{W}_j \Bigr)\Biggr]\Bigr >.
\end{align}
Note that for any minimizer $[\mathbf{W}^*_1,\mathbf{W}^*_2,\cdots, \mathbf{W}^*_L]$, $\mathbf{M}- \prod_{j=1}^{L} \mathbf{W}^*_i = 0$. Hence, for any global minimum 
\begin{align}
    &\nabla^2f(\mathbf{W}^*_1, \cdots, \mathbf{W}^*_L)[\mathbf{U}_1,\cdots,\mathbf{U}_L] = \hfill \\
    &2\Bigl <\sum_{i=1}^{L} 
        \Biggl[ \Bigl(\prod_{j=i+1}^{L} \mathbf{W}^*_j \Bigr)\mathbf{U}_i 
        \Bigl(\prod_{j=1}^{i-1} \mathbf{W}_j^* \Bigr) \Biggr],\sum_{i=1}^{L} 
        \Biggl[ \Bigl(\prod_{j=i+1}^{L} \mathbf{W}_j^* \Bigr)\mathbf{U}_i 
        \Bigl(\prod_{j=1}^{i-1} \mathbf{W}^*_j \Bigr) \Biggr]\Bigr >.
\end{align}
\end{proof}

\section{Warm-Up: Deep Overparameterized Scalar Factorization}
\label{warmupproofappendix}
\begin{proof}
According to (\ref{rayleighh}) and (\ref{directionalderiv}),
\begin{align}
    \lambda_{\max} \bigl( \nabla^2 \mathcal{L}(\mathbf{w}^*) \bigr) &= \max_{\substack{
    \mathbf{U}_1,\mathbf{U}_2,\dots,\mathbf{U}_L: \\
    \sum_{i=1}^{L} \norm{\mathbf{U}_i}_F^2 = 1
}} 2 \norm{\sum_{i=1}^{L} 
        \Biggl[ \Bigl(\prod_{j=i+1}^{L} \mathbf{W}^*_j \Bigr)\mathbf{U}_i 
        \Bigl(\prod_{j=1}^{i-1} \mathbf{W}_j^* \Bigr) \Biggr]}_F^2 \label{hope1}\\
        &\leq
\max_{\substack{
    \mathbf{U}_1,\mathbf{U}_2,\dots,\mathbf{U}_L: \\
    \sum_{i=1}^{L} \norm{\mathbf{U}_i}_F^2 = 1
}}
    2 \Biggl( \sum_{i=1}^{L} 
        \norm{\Bigl(\prod_{j=i+1}^{L} \mathbf{W}^*_j \Bigr)\mathbf{U}_i 
        \Bigl(\prod_{j=1}^{i-1} \mathbf{W}_j^* \Bigr)}_F \Biggr)^2 \label{hope2} \\
        &= \max_{\substack{
    \mathbf{u}_1,\mathbf{u}_2,\dots,\mathbf{u}_L: \\
    \sum_{i=1}^{L} \norm{\mathbf{u}_i}_2^2 = 1
}}
    2 \Biggl( \sum_{i=1}^{L} 
        \norm{ \Bigl[ \Bigl(\prod_{j=1}^{i-1} \mathbf{W}_j^* \Bigr)^\top \otimes \Bigl(\prod_{j=i+1}^{L} \mathbf{W}^*_j \Bigr) \Bigr ]\mathbf{u}_i 
       }_2 \Biggr)^2 \label{hope3} \\
       &\leq \max_{\substack{
    \mathbf{u}_1,\mathbf{u}_2,\dots,\mathbf{u}_L: \\
    \sum_{i=1}^{L} \norm{\mathbf{u}_i}_2^2 = 1
}}
    2 \Biggl( \sum_{i=1}^{L} 
         \sigma_{\max}\Biggl( \Bigl(\prod_{j=1}^{i-1} \mathbf{W}_j^* \Bigr)^\top \otimes \Bigl(\prod_{j=i+1}^{L} \mathbf{W}^*_j \Bigr) \Biggr ) \norm{\mathbf{u}_i}_2
        \Biggr)^2 \label{hope4} \\ 
        &= \max_{\substack{
    \mathbf{u}_1,\mathbf{u}_2,\dots,\mathbf{u}_L: \\
    \sum_{i=1}^{L} \norm{\mathbf{u}_i}_2^2 = 1
}}
    2 \Biggl( \sum_{i=1}^{L} 
          \sigma_{\max}\Bigl(\prod_{j=i+1}^{L} \mathbf{W}^*_j  \Bigr ) \sigma_{\max}\Bigl( \prod_{j=1}^{i-1} \mathbf{W}_j^* \Bigr) \norm{\mathbf{u}_i}_2
        \Biggr)^2, \label{hope5}
\end{align}
where $\mathbf{U}_i \in \mathbb{R}^{d_i \times d_{i-1}}$ and $\mathbf{u}_i \in \mathbb{R}^{d_id_{i-1}}$. We can upper bound the right-hand side of (\ref{hope1}) by using the \emph{triangle inequality}. By applying the \emph{vectorization trick} of the Kronecker product, we can rewrite (\ref{hope2}). Then, noting the fact that for any matrix $\mathbf{A} \in \mathbb{R}^{m \times n}$ and vector $\mathbf{x} \in \mathbb{R}^n$, $\norm{\mathbf{Ax}}_2 \leq \sigma_{\max}(\mathbf{A}) \norm{\mathbf{x}}_2$, we can upper bound the right-hand side of (\ref{hope3}). Note that for any matrix $\mathbf{A}$ and $\mathbf{B}$, $\sigma_{\max}(\mathbf{A} \otimes \mathbf{B} ) = \sigma_{\max}(\mathbf{A}) \sigma_{\max}(\mathbf{B})$. Hence, we can rewrite (\ref{hope4}). Then, by using the Cauchy–Schwarz inequality, 
\begin{equation}
    \lambda_{\max} \bigl( \nabla^2 \mathcal{L}(\mathbf{w}^*) \bigr) \leq 2 \sum_{i=1}^{L} \sigma_{\max} \Bigl(\prod_{j=i+1}^{L}  \mathbf{W}_j^* \Bigr ) ^2\sigma_{\max} \Bigl(\prod_{j=1}^{i-1} \mathbf{W}_j^* \Bigr )^2. \label{achievup}
\end{equation}   
Then, it suffices to show that there exists a direction $[\mathbf{U}_1^*, \mathbf{U}_2^*, \dots, \mathbf{U}_L^*]$ along which the bound in~(\ref{achievup}) is achieved. 

Consider decomposition of $\prod_{j=i+1}^{L} \mathbf{W}^*_j $ by SVD, and denote by $\mathbf{u}_{L_i}$ and $\mathbf{v}_{L_i}$ the left and right singular vectors of $\prod_{j=i+1}^{L} \mathbf{W}^*_j $ corresponding to the largest singular value, respectively. Note that since $\mathbf{W}_L$ is a vector, we have $\mathbf{u}_{L_i} = 1$ for all $i \in [L]$. Moreover, decompose $\prod_{j=1}^{i-1} \mathbf{W}_j^* $ by SVD, and denote by $\mathbf{u}_{R_i}$ and $\mathbf{v}_{R_i}$ the left and right singular vectors of $\prod_{j=1}^{i-1} \mathbf{W}_j^*$ corresponding to the largest singular value, respectively. Note that since $\mathbf{W}_1$ is a vector, we have $\mathbf{v}_{R_i} = 1$ for all $i \in [L]$. Now, we determine a particular direction $[\mathbf{U}_1^*, \mathbf{U}_2^*, \dots, \mathbf{U}_L^*]$ such that they achieve the upper bound while satisfying the constraint $\sum_{i=1}^L\norm{\mathbf{U}_i^*}_F^2= 1$. Choose 
\begin{equation}
    \mathbf{U}_i^* = \frac{\sigma_{\max}\Bigl(\prod_{j=i+1}^{L} \mathbf{W}^*_j  \Bigr ) \sigma_{\max}\Bigl( \prod_{j=1}^{i-1} \mathbf{W}_j^* \Bigr) }{\sqrt{\sum_{i=1}^{L} 
          \sigma_{\max}\Bigl(\prod_{j=i+1}^{L} \mathbf{W}^*_j  \Bigr )^2 \sigma_{\max}\Bigl( \prod_{j=1}^{i-1} \mathbf{W}_j^* \Bigr) ^2}}\mathbf{v}_{L_i} \mathbf{u}_{R_i}^\top.
\end{equation}
Then, 
\begin{equation}
    2 \norm{\sum_{i=1}^{L} 
        \Biggl[ \Bigl(\prod_{j=i+1}^{L} \mathbf{W}^*_j \Bigr)\mathbf{U}^*_i 
        \Bigl(\prod_{j=1}^{i-1} \mathbf{W}_j^* \Bigr) \Biggr]}_F^2 = 2 \sum_{i=1}^{L} \sigma_{\max} \Bigl(\prod_{j=i+1}^{L}  \mathbf{W}_j^* \Bigr )^2 \sigma_{\max} \Bigl(\prod_{j=1}^{i-1} \mathbf{W}_j^* \Bigr )^2.
\end{equation}
Since the upper bound is achieved, it implies
\begin{equation}
    \lambda_{\max}(\nabla^2 \mathcal{L}(\mathbf{w}^*)) = 2 \sum_{i=1}^{L} \sigma_{\max} \Bigl(\prod_{j=i+1}^{L}  \mathbf{W}_j^* \Bigr )^2 \sigma_{\max} \Bigl(\prod_{j=1}^{i-1} \mathbf{W}_j^* \Bigr )^2.
\end{equation}
\end{proof}

\section{Proofs from Section~\ref{sec:main}}

\subsection{Proof of Theorem \ref{theorem:deepmatrixfactorization}}
\label{appendix:deepmatrixfactorization}

\begin{proof}
By definition, 
\begin{align}
\lambda_{\max}(\nabla^2 \mathcal{L}(\mathbf{w}^*)) &= \max_{\substack{
    \mathbf{U}_1,\mathbf{U}_2,\dots,\mathbf{U}_L: \\
    \sum_{i=1}^{L} \norm{\mathbf{U}_i}_F^2 = 1}}   2\norm{\sum_{i=1}^{L} 
        \Biggl[ \Bigl(\prod_{j=i+1}^{L} \mathbf{W}^*_j \Bigr)\mathbf{U}_i 
        \Bigl(\prod_{j=1}^{i-1} \mathbf{W}_j^* \Bigr) \Biggr]}_F ^2 \label{def}\\
        &= \max_{\substack{
    \mathbf{U}_1,\mathbf{U}_2,\dots,\mathbf{U}_L: \\
    \sum_{i=1}^{L} \norm{\mathbf{U}_i}_F^2 = 1}} 2\norm{\operatorname{vec} \Biggl(\sum_{i=1}^{L} 
        \Biggl[ \Bigl(\prod_{j=i+1}^{L} \mathbf{W}^*_j \Bigr)\mathbf{U}_i 
        \Bigl(\prod_{j=1}^{i-1} \mathbf{W}_j^* \Bigr) \Biggr] \Biggr)}_2 ^2 \label{defvec}\\
        &=\max_{\substack{
    \mathbf{U}_1,\mathbf{U}_2,\dots,\mathbf{U}_L: \\
    \sum_{i=1}^{L} \norm{\mathbf{U}_i}_F^2 = 1}} 2\norm{\sum_{i=1}^{L} 
        \operatorname{vec} \Biggl( \Bigl(\prod_{j=i+1}^{L} \mathbf{W}^*_j \Bigr)\mathbf{U}_i 
        \Bigl(\prod_{j=1}^{i-1} \mathbf{W}_j^* \Bigr) \Biggr)}_2 ^2 \label{linearity}\\
        &= \max_{\substack{
    \mathbf{u}_1,\mathbf{u}_2,\dots,\mathbf{u}_L: \\
    \sum_{i=1}^{L} \norm{\mathbf{u}_i}_2^2 = 1
}} 2\norm{\sum_{i=1}^L \Biggl [\Bigl(\prod_{j=1}^{i-1} \mathbf{W}_j^* \Bigr)^\top \otimes \Bigl(\prod_{j=i+1}^{L} \mathbf{W}^*_j \Bigr) \Biggr] \mathbf{u}_i}_2^2. \label{vectonew}
\end{align}
Note that $\operatorname{vec}$ is a linear operator. Therefore, (\ref{defvec}) can be rewritten as (\ref{linearity}). Then, by using the vectorization trick of the Kronecker product, we can obtain (\ref{vectonew}). Let's define a block matrix and a vector such that

\begin{align}
    \mathbf{K} &=
    \begin{bmatrix}
        \mathbf{I} \otimes \prod_{j=2}^{L} \mathbf{W}^*_j \big| {\mathbf{W}_1^*}^\top \otimes \Bigl(\prod_{j=3}^{L} \mathbf{W}^*_j \Bigr) \big| & \dots &  \big| \Bigl(\prod_{j=1}^{L-1} \mathbf{W}_j^* \Bigr)^\top \otimes \mathbf{I}\\ 
    \end{bmatrix},\\
    \mathbf{u} &= [\mathbf{u}_1^\top \quad \mathbf{u}_2^\top \quad \cdots \quad \mathbf{u}_L^\top]^\top.
\end{align}
Then 
\begin{align}
\lambda_{\max}(\nabla ^2 \mathcal{L}(\mathbf{w}^*)) &= \max_{\mathbf{u}: \norm{\mathbf{u}}_2 = 1}   2 \norm{\mathbf{K}\mathbf{u}}_2^2 \\
&= \sigma_{\max}(\mathbf{K}^\top \mathbf{K}).
\end{align}
Note that $\sigma_{\max}(\mathbf{K}^\top \mathbf{K}) = \sigma_{\max}(\mathbf{K}\mathbf{K}^\top)$, and for any two block matrices $\mathbf{A}$ and $\mathbf{B}$ such that
\begin{equation}
 \mathbf{A} = 
    \begin{bmatrix}
\mathbf{A}_1 \quad \mathbf{A}_2 \quad \dots \quad \mathbf{A}_L
\end{bmatrix} \in \mathbb{R}^{M_1 \times d}, 
\quad
\mathbf{B} =
\begin{bmatrix}
\mathbf{B}_1 \\
\vdots \\
\mathbf{B}_L \\
\end{bmatrix} \in \mathbb{R}^{d \times M_2} 
\end{equation}
\begin{equation}
    \mathbf{AB} = 
    \sum_{i=1}^L \mathbf{A}_i \mathbf{B}_i, \quad \mathbf{AB}\in \mathbb{R}^{M_1 \times M_2}.
\end{equation}
Furthermore, for any matrices $\mathbf{A}, \mathbf{B}, \mathbf{C}, \mathbf{D}$ such that the matrix products $\mathbf{AB}$ and $\mathbf{CD}$ are well defined, we have
\begin{equation}
(\mathbf{A} \otimes \mathbf{C})(\mathbf{B} \otimes \mathbf{D}) = \mathbf{AB} \otimes \mathbf{CD}.
\end{equation}
Using the fact that $(\mathbf{A} \otimes \mathbf{C})^\top = \mathbf{A}^\top \otimes \mathbf{C}^\top$ together with the previous property, it follows that
\begin{equation}
\lambda_{\max} \bigl( \nabla^2 \mathcal{L}(\mathbf{w}^*) \bigr) = 2\sigma_{\max} \Biggl(\sum_{i=1}^L \mathbf{B}_i^\top \mathbf{B}_i \otimes \mathbf{A}_i \mathbf{A}_i^\top\Biggr),
\end{equation}
where $\mathbf{A}_k = \prod_{i = k+1}^L \mathbf{W}_i^*$ and $\mathbf{B}_k = \prod_{i = 1}^{k-1} \mathbf{W}_i^*$.

\end{proof}

\subsection{Proof of Corollary~\ref{remark:twolayer}}
\label{appendix:remarktwolayer}

\begin{proof}
    According to (\ref{directionalderiv}), for any $(\mathbf{L}^*,\mathbf{R}^*) \in \Omega$,
\begin{equation}
    \nabla^2\mathcal{L}(\mathbf{L}^*,\mathbf{R}^*)[\mathbf{U},\mathbf{V}] = 2 \norm{\mathbf{L}^* \mathbf{U}^{\top} + \mathbf{V}\mathbf{R}^{*\top}}_F^2.
\end{equation}
Then
\begin{align}
  \lambda_{\max} (\nabla^2 \mathcal{L}(\mathbf{L}^*,\mathbf{R}^*)) &= \max_{\substack{\mathbf U,\mathbf V \\ \|\mathbf U\|_F^2+\|\mathbf V\|_F^2=1}} 2 \norm{\mathbf{L}^* \mathbf{U}^{\top} + \mathbf{V}\mathbf{R}^{*\top}}_F^2 \label{LUVRT}    \\
  &\leq \max_{\substack{\mathbf U,\mathbf V \\ \|\mathbf U\|_F^2+\|\mathbf V\|_F^2=1}} 2\bigl(\norm{\mathbf{L}^*\mathbf{U}^\top}_F + \norm{\mathbf{V}\mathbf{R}^{*\top}}_F  \bigr)^2 \label{rewritehand}\\ 
  &=\max_{\substack{\mathbf u,\mathbf v \\ \|\mathbf u\|_2^2+\|\mathbf v\|_2^2=1}}
    2\Bigl(\|(\mathbf I\otimes \mathbf L^*)\mathbf u\|_2+\|(\mathbf R^*\otimes \mathbf I)\mathbf v\|_2\Bigr)^2 \label{22}\\
    &\leq \max_{\substack{\mathbf u,\mathbf v \\ \|\mathbf u\|_2^2+\|\mathbf v\|_2^2=1}} 2\Bigl(\sigma_{\max}(\mathbf I \otimes \mathbf L^*)\|\mathbf u\|_2+\sigma_{\max}(\mathbf R^* \otimes \mathbf I)\|\mathbf v\|_2\Bigr)^2.
\end{align}
We can upper bound the right-hand side of (\ref{LUVRT}) using the \emph{triangle inequality}. By applying the \emph{vectorization trick} of the Kronecker product again, we can rewrite (\ref{rewritehand}). Then, noting that for any matrix $\mathbf{A} \in \mathbb{R}^{m \times n}$ and vector $\mathbf{x} \in \mathbb{R}^n$, $\norm{\mathbf{Ax}}_2 \leq \sigma_{\max}(\mathbf{A}) \norm{\mathbf{x}}_2$, we can upper bound the right-hand side of (\ref{22}). Note that for any matrix $\mathbf{A}$ and $\mathbf{B}$, $\sigma_{\max}(\mathbf{A} \otimes \mathbf{B} ) = \sigma_{\max}(\mathbf{A}) \sigma_{\max}(\mathbf{B})$. Hence,
\begin{equation}
     \lambda_{\max} (\nabla^2 \mathcal{L}(\mathbf{L}^*,\mathbf{R}^*)) \leq \max_{\substack{\mathbf u,\mathbf v \\ \|\mathbf u\|_2^2+\|\mathbf v\|_2^2=1}} 2\bigl(\sigma_{\max}{( \mathbf{L}^*)} \norm{\mathbf{u}}_2 + \sigma_{\max}(\mathbf{R}^{*}) \norm{\mathbf{v}}_2  \bigr)^2.
\end{equation}
Then, by using Cauchy-Schwarz inequality,
\begin{equation}
\lambda_{\max}\!\bigl(\nabla^2 \mathcal{L}(\mathbf L^*,\mathbf R^*)\bigr)
 \;\leq\; 2\bigl(\sigma_{\max}(\mathbf L^*)^2+\sigma_{\max}(\mathbf R^*)^2\bigr).
\end{equation}
Now, we will show that this upper bound is achievable. Let us decompose $\mathbf{L}^*$ as $\mathbf{U}_L \mathbf{\Sigma}_L \mathbf{V}_L ^\top$ by SVD, and denote by $\mathbf{u}_L$ and $\mathbf{v}_L$ the left and right singular vectors corresponding to the largest singular value, respectively. Moreover, decompose $\mathbf{R}^{*}$ as $\mathbf{U}_R \mathbf{\Sigma}_R \mathbf{V}_R^\top$ by SVD, and denote by $\mathbf{u}_R$ and $\mathbf{v}_R$ the left and right singular vectors corresponding to the largest singular value, respectively. We determine a particular $(\mathbf{U}^*,\mathbf{V}^{*})$ such that it achieves the upper bound while satisfying the constraint $\norm{\mathbf{U}^*}_F^2 + \norm{\mathbf{V}^*}_F^2 = 1$. Choose
\begin{equation}
    \mathbf{U}^{*\top} = \frac{\sigma_{\max}(\mathbf{L}^*)}{\sqrt{\sigma_{\max}(\mathbf{L}^*)^2+\sigma_{\max}(\mathbf{R}^*)^2}}\mathbf{v}_L \mathbf{u}_R^\top, \quad \mathbf{V}^* = \frac{\sigma_{\max}(\mathbf{R}^*)}{\sqrt{\sigma_{\max}(\mathbf{L}^*)^2+\sigma_{\max}(\mathbf{R}^*)^2}} \mathbf{u}_L \mathbf{v}_R^\top.
\end{equation}
Using the fact that, for any vectors $\mathbf{x}$ and $\mathbf{y}$ $\norm{\mathbf{x}\mathbf{y}^{\top}}_F^2 = \norm{\mathbf{x}}_2^2 \norm{\mathbf{y}}_2^2$,
\begin{equation}
2\norm{\frac{(\sigma_{\max}(\mathbf{L}^*)^2 + \sigma_{\max}(\mathbf{R}^*)^2) }{\sqrt{\sigma_{\max}(\mathbf{L}^*)^2 + \sigma_{\max}(\mathbf{R}^*)^2}} \mathbf{u}_L \mathbf{u}_R ^\top}_F^2 = 2(\sigma_{\max}(\mathbf{L}^*)^2 + \sigma_{\max}(\mathbf{R}^*)^2).
\end{equation} 
\end{proof}

\section{Additional Experimental Results}
\label{expappendix}

\subsection{Visualization of the Contour Map of the Loss Landscape}
\label{visualization}
To study the dynamics of deep matrix factorization, we analyze the trajectories of GD. Previous works have visualized neural network loss landscapes to explore their highly non-convex and non-Euclidean structure \citep{goodfellow2014qualitatively,li2018visualizing}. However, the high dimensionality prevents full visualization. As a result, only 1-D (line) or 2-D (surface) visualizations are available. In this paper, we focus on contour maps of the loss landscape in the vicinity of a global minimum and a methodology employed in prior studies to generate them.

\textbf{Contour Plots with Random Projections.}
We want to visualize the loss landscape around a global minimum $\mathbf{w}^* \in \mathbb{R}^N$. We select two random vectors, $\mathbf{\zeta}$ and $\mathbf{\gamma}$, from $\mathbb{R}^N$. Then, for any $K \subset \mathbb{R}^2$, we can define the function $p: K \rightarrow \mathbb{R}:$ 
\begin{equation}
    p(x,y) = \mathcal{L}(\mathbf{w}^* + x\mathbf{\zeta} + y\mathbf{\gamma}), \quad \forall (x,y) \in K,
\end{equation}
and plot $p$ with the desired resolution.

\textbf{Scale Invariance and Manifolds}. Note that our loss function is \emph{scale-invariant}, which means that for any nonzero scalar $c \in \mathbb{R}$, multiplying one layer by $c$ and the next layer by $1/c$, or vice versa, yields the same end-to-end function. This phenomenon forms a manifold for global minimizers in the loss landscape \citep{sharpminimacangeneralize}. Furthermore, we know that $\nabla^{2} \mathcal{L}(\mathbf{w}^*)$ is rank-deficient by at least the order of $1-1/L$; that is, at least  $1-1/L$ of the eigenvalues of $\nabla^{2} \mathcal{L}(\mathbf{w}^*)$ are zero \citep{mulayoff}. This means that the ratio of the manifold dimension to the ambient space dimension increases as $L$ grows.

\textbf{Projection onto the Hessian Eigenvectors.}
If we use random projections in visualizations, plots might not be informative to track the optimization dynamics of GD due to the phenomenon caused by the scale invariance. To make contour maps as informative as possible, we choose $\mathbf\zeta$ and $\mathbf\gamma$ to be $\mathbf v_1$ and $\mathbf v_N$, respectively — the eigenvectors of the largest and smallest eigenvalues of $\nabla^{2} \mathcal{L}(\mathbf{w}^*)$.

\subsection{Experiment Details and Additional Experiments}

\label{additionalexperiments}
For the experiment, we first generate the layer dimensions randomly and then construct the optimal layers $[\mathbf{W}_1^*, \mathbf{W}_2^*, \dots, \mathbf{W}_L^*]$ as Gaussian random matrices, with each entry sampled from $N(0,1)$ according to the generated dimensions. Then, we compute $\mathbf{M}$ or $m$ by $\prod_{j = 1}^ L \mathbf{W}^*_j$.  We then perform the same experiments as in Figs.~\ref{figure:twolayers}–\ref{figure:deepoverparameterized}, varying the depth, dimensions, and initialization distance $r$ (as shown in Figs.~\ref{figure:contourmap}-\ref{figure7}). We note that oscillations occur along the eigenvector corresponding to the maximum eigenvalue of the Hessian. The dimensions of the factors, i.e., ${d_0, d_1, \dots, d_L}$, in Fig.~\ref{figure6} are given by ${1, 9, 4, 8, 24, 16, 17, 11, 21, 3, 22, 3, 3, 15, 3, 18, 17, 16, 5, 12, 1}$, which implies $N = 2421$, while the dimensions of the factors in Fig.~\ref{figure7} are given by ${1, 9, 4, 8, 1}$, which implies $N = 293$.

\begin{figure}[htb!]
    \centering
    \begin{subfigure}[t]{0.27\textwidth}
        \centering
        \includegraphics[width=1\linewidth]{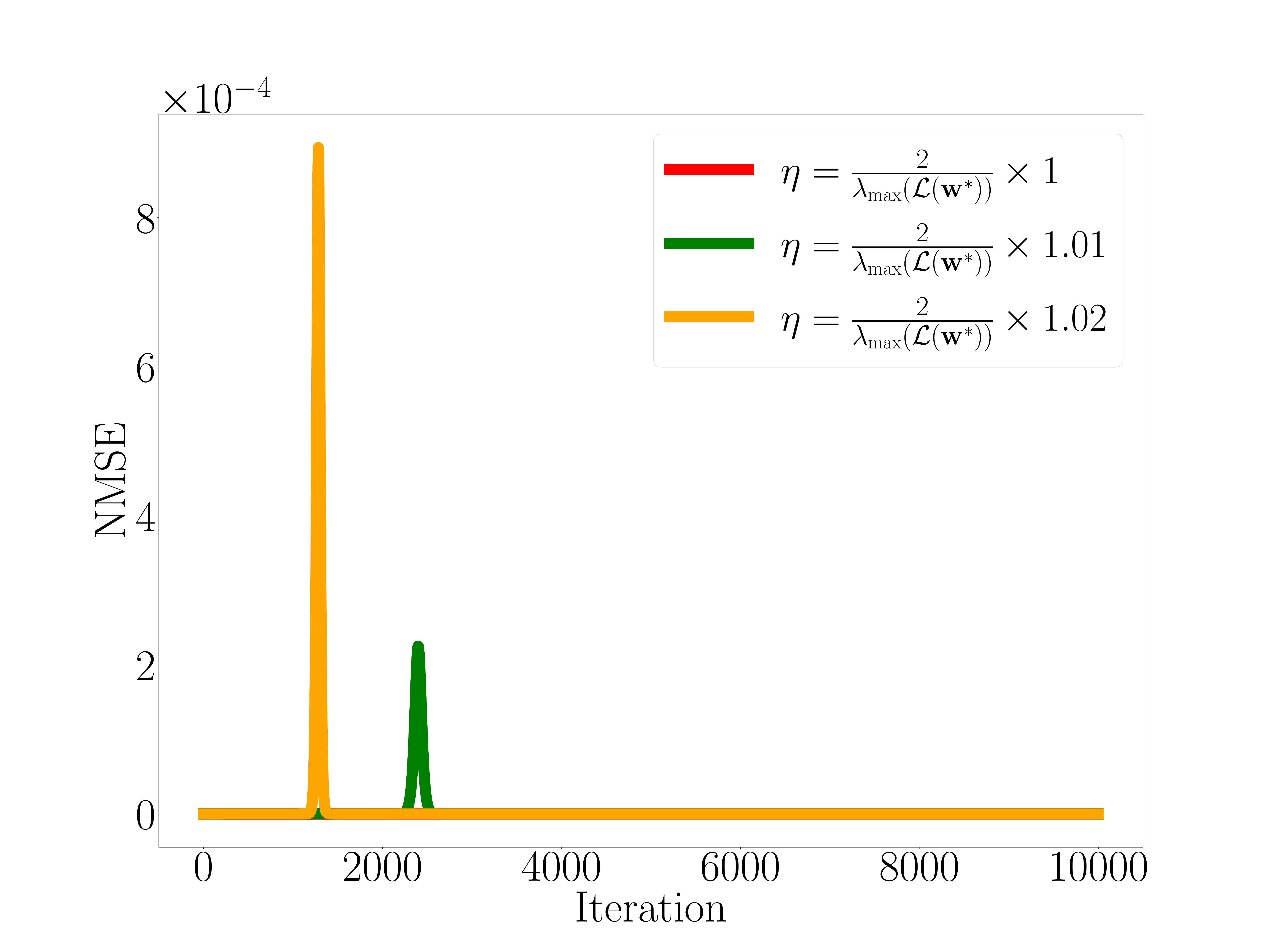}
        \caption{Normalized training error across iterations, i.e., $\mathcal{L}(\mathbf{w}_k)/\lVert \mathbf{M} \rVert_F^2$.}
        \label{fig5:sub1}
    \end{subfigure}
    \hfill
    \begin{subfigure}[t]{0.27\textwidth}
        \centering
        \includegraphics[width=1\linewidth]{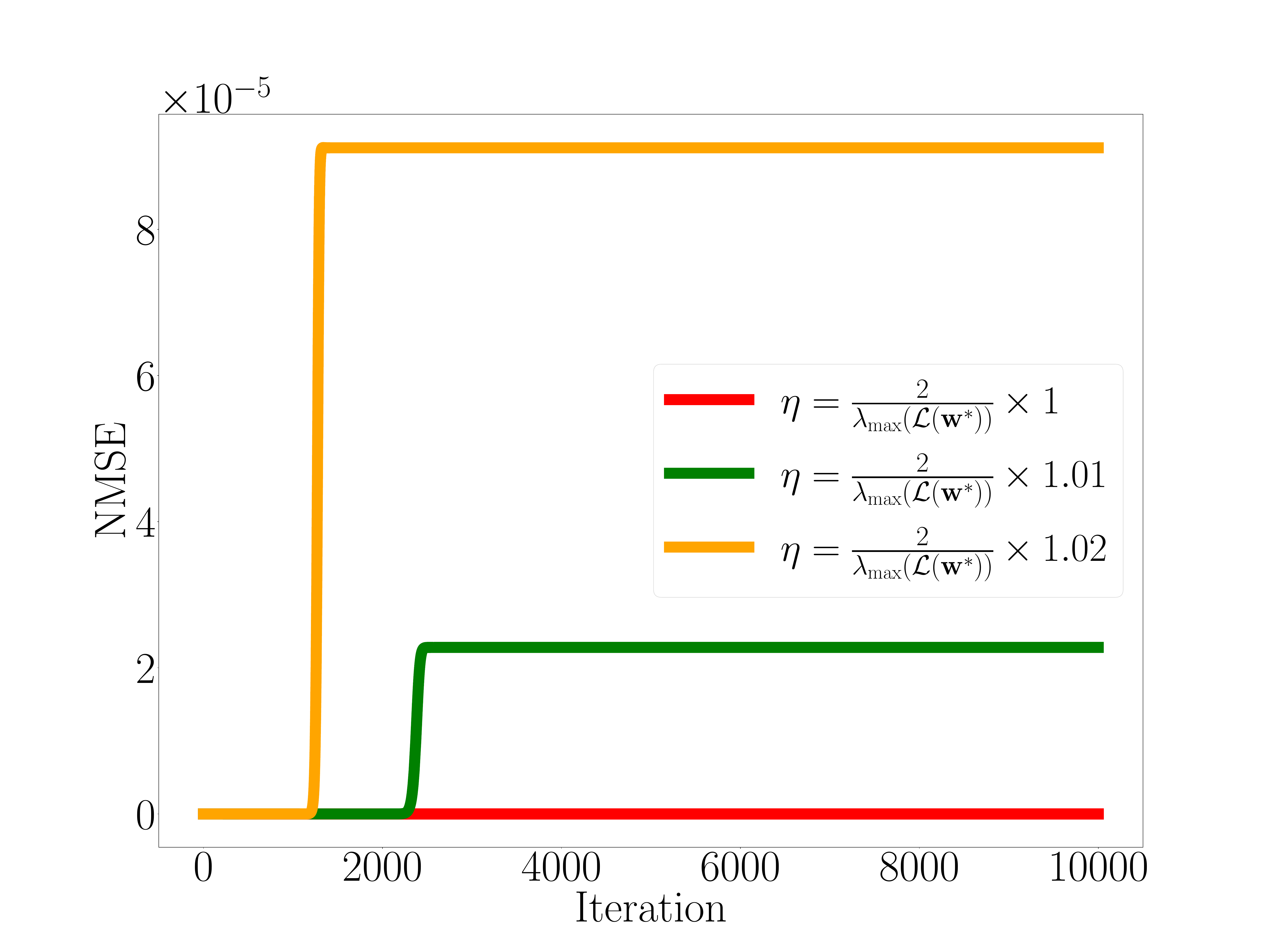}
        \caption{Normalized $\ell^2$ distance of $\mathbf{w}_k$ from the minimum $\mathbf{w}^*$, i.e., $\norm{\mathbf{w}_k-\mathbf{w}^*}_2^2 / \norm{\mathbf{w}^*}_2^2$.}
        \label{fig5:sub2}
    \end{subfigure}
    \hfill
    \begin{subfigure}[t]{0.27\textwidth}
        \centering
        \includegraphics[width=1\linewidth]{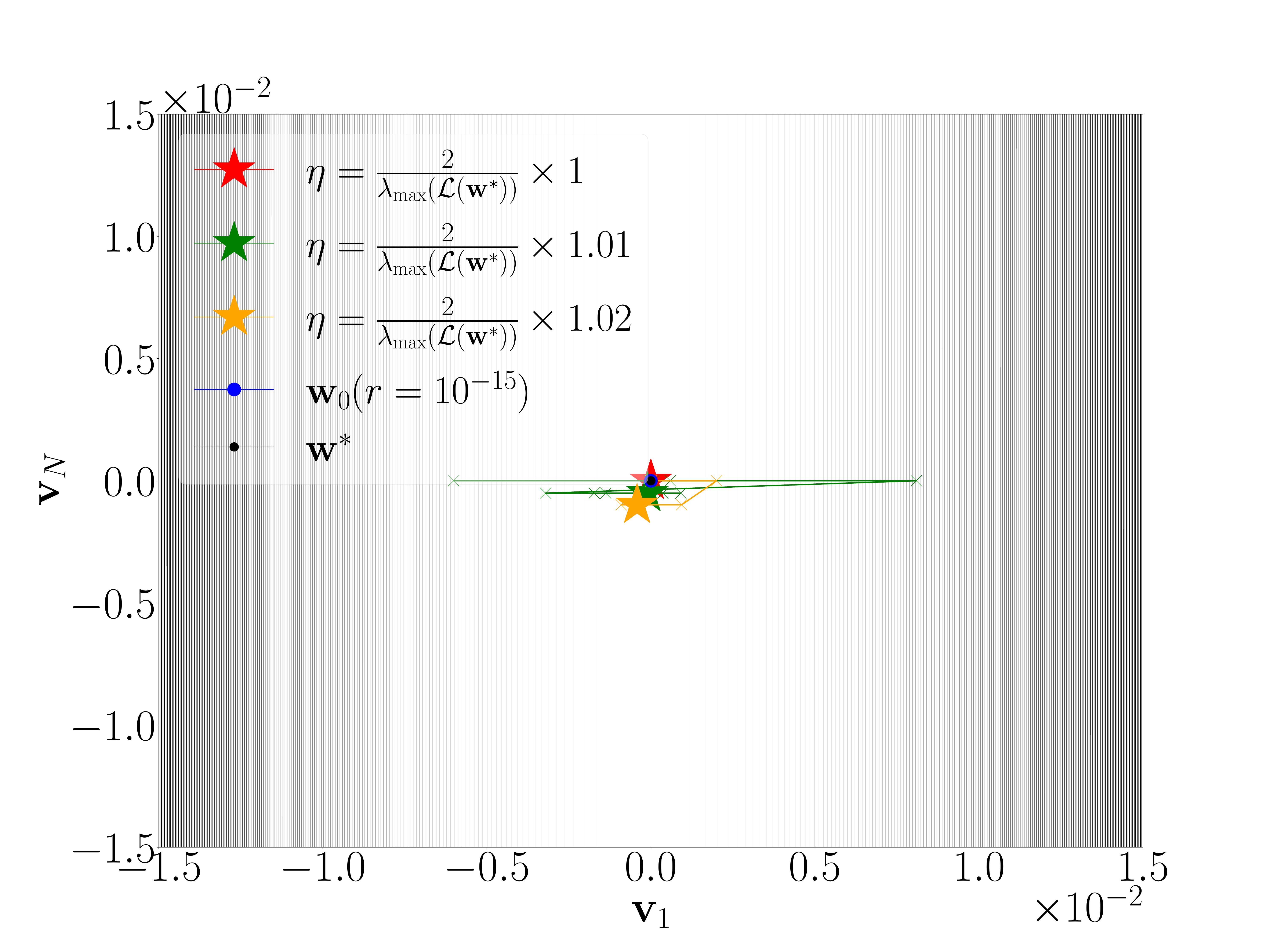}
        \caption{Trajectories of GD on the contour map of the loss landscape around the minimum.}
        \label{fig5:sub3}
    \end{subfigure}

    \vspace{1em}

    \begin{subfigure}[t]{0.27\textwidth}
        \centering
        \includegraphics[width=1\linewidth]{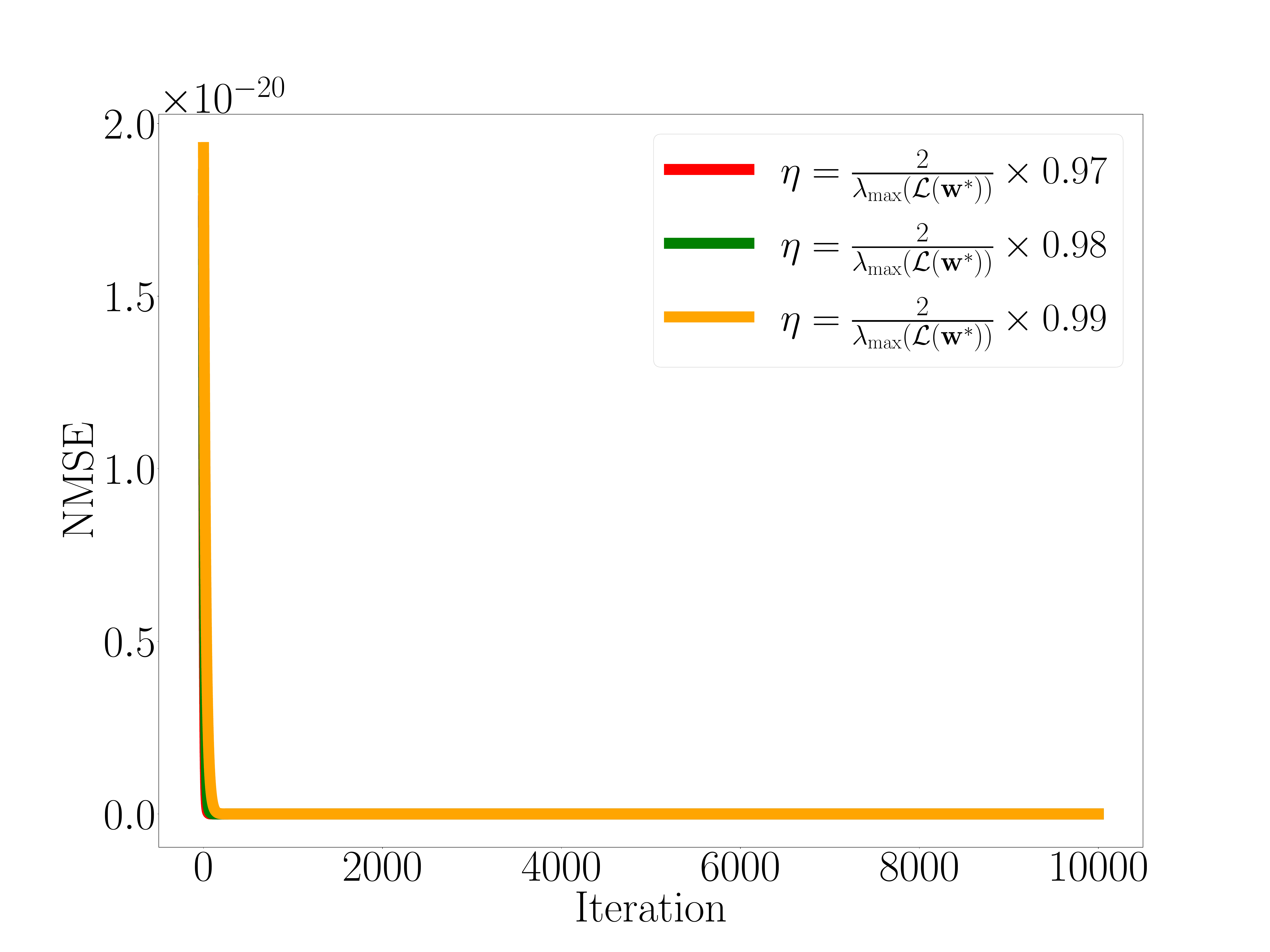}
        \caption{Normalized training error across iterations, i.e., $\mathcal{L}(\mathbf{w}_k)/\lVert \mathbf{M} \rVert_F^2$.}
        \label{fig5:sub4}
    \end{subfigure}
    \hfill
    \begin{subfigure}[t]{0.27\textwidth}
        \centering
        \includegraphics[width=1\linewidth]{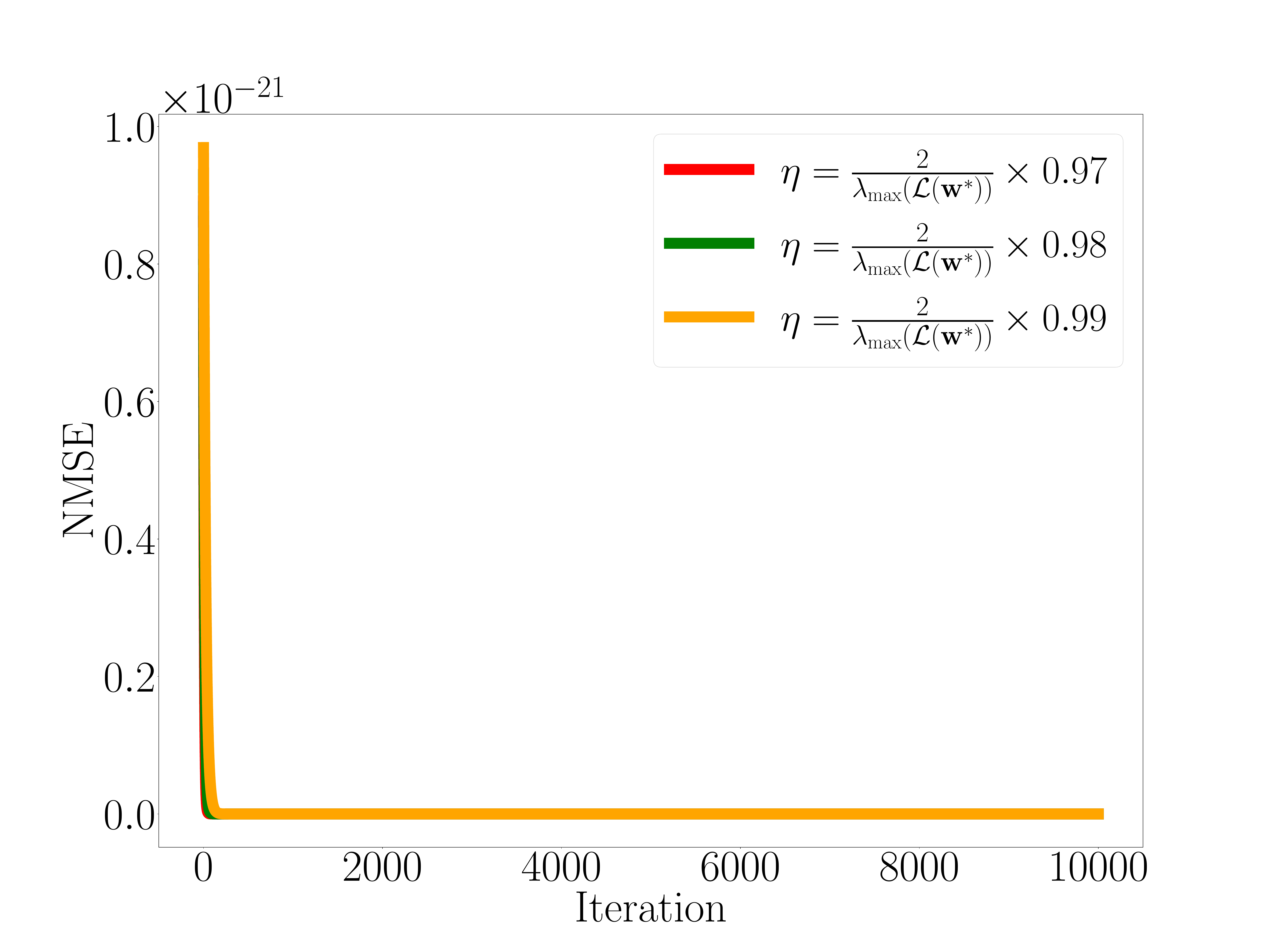}
        \caption{Normalized $\ell^2$ distance of $\mathbf{w}_k$ from the minimum $\mathbf{w}^*$, i.e., $\norm{\mathbf{w}_k-\mathbf{w}^*}_2^2 / \norm{\mathbf{w}^*}_2^2$.}
        \label{fig5:sub5}
    \end{subfigure}
    \hfill
    \begin{subfigure}[t]{0.27\textwidth}
        \centering
        \includegraphics[width=1\linewidth]{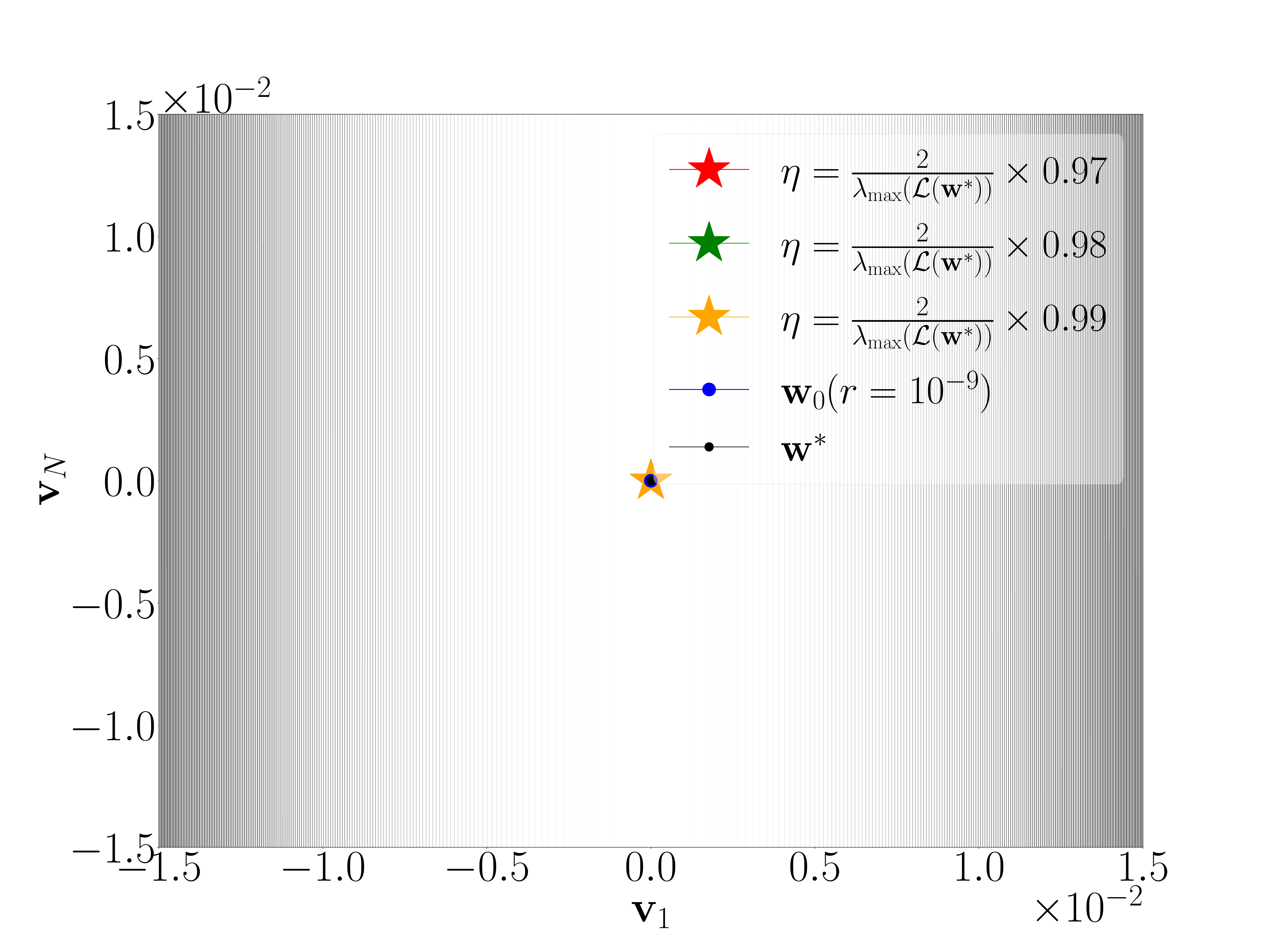}
        \caption{Trajectories of GD on the contour map of the loss landscape around the minimum.}
        \label{fig5:sub6}
    \end{subfigure}
    \caption{GD dynamics with different step sizes indicated by different colors for the depth-$2$ matrix factorization of a random Gaussian matrix, where $\mathbf{L} \in \mathbb{R}^{10 \times 30}$ and $\mathbf{R} \in \mathbb{R}^{20 \times 30}$. The value of $\lambda_{\max}(\nabla^{2}\mathcal{L}(\mathbf{w}^*))$ is computed using the closed-form expression derived in Corollary~\ref{remark:twolayer}.}
    \label{figure4}
\end{figure}

\begin{figure}[h!]
    \centering
    \begin{subfigure}[b]{0.27\linewidth}
        \centering
        \includegraphics[width=1\linewidth]{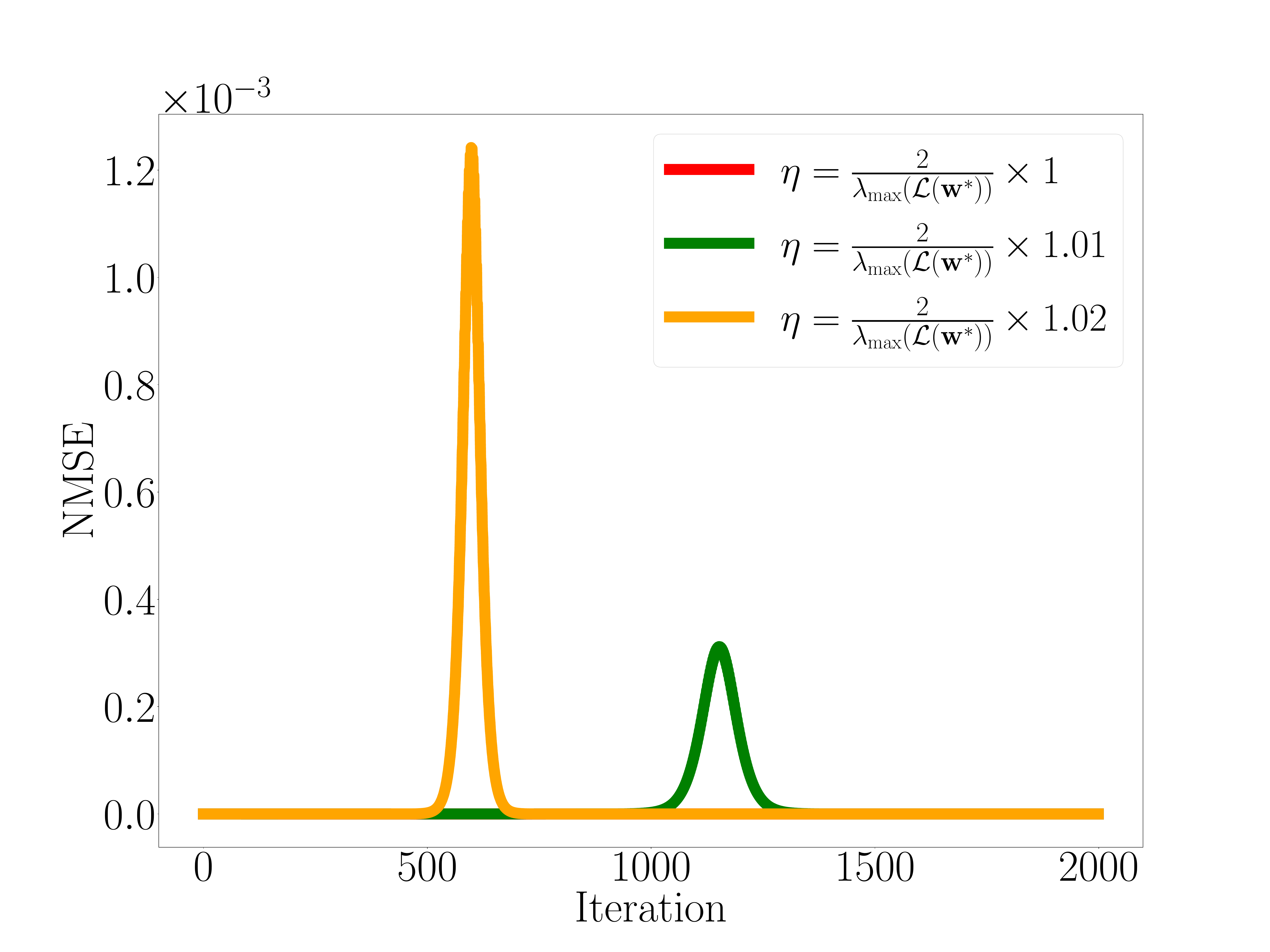}
        \caption{Normalized training error across iterations, i.e., $\mathcal{L}(\mathbf{w}_k)/m^2$.}
        \label{fig3:first}
    \end{subfigure}
    \hfill
    \begin{subfigure}[b]{0.27\linewidth}
        \centering
        \includegraphics[width=1\linewidth]{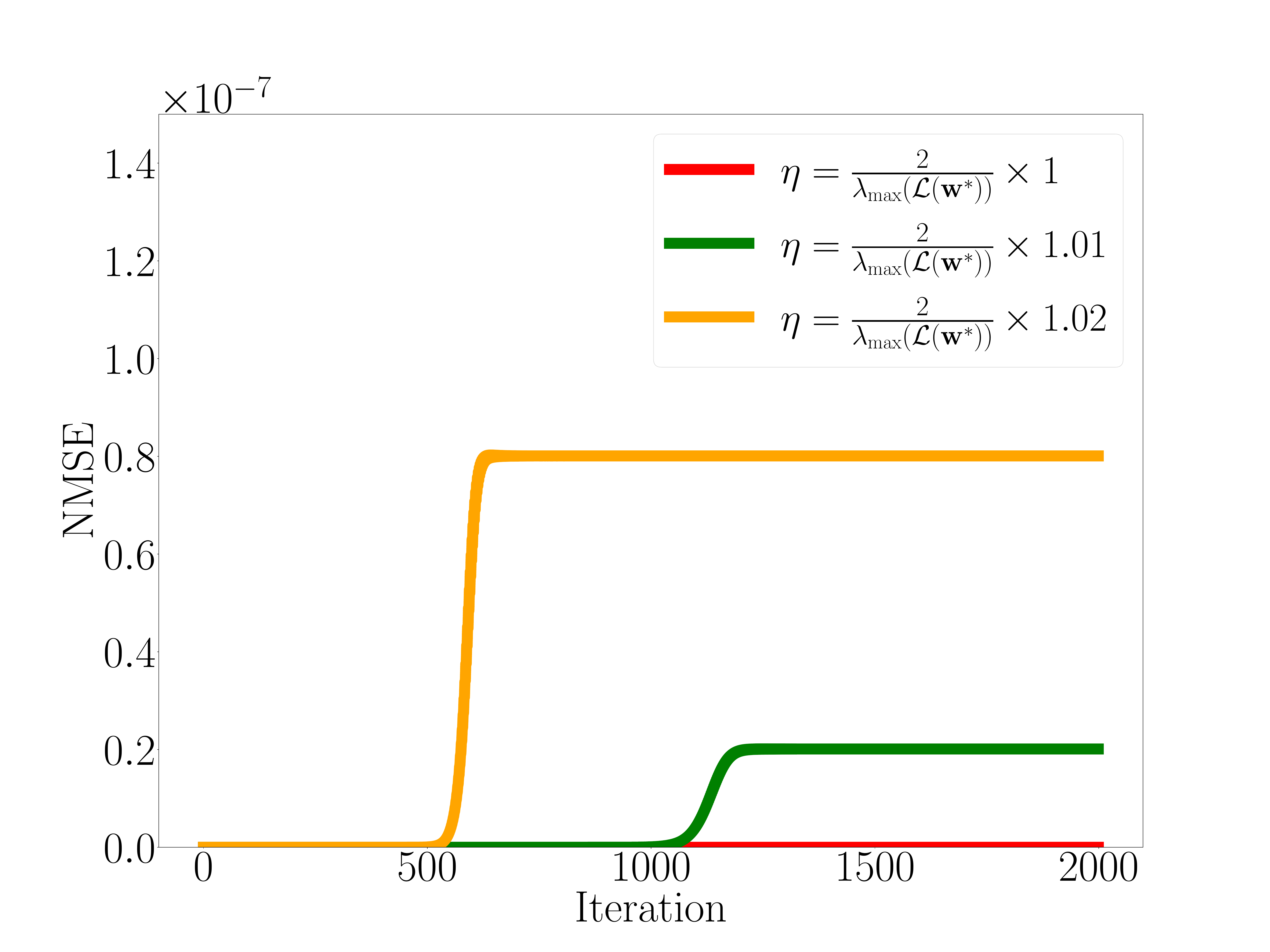}
        \caption{Normalized $\ell^2$ distance of $\mathbf{w}_k$ from the minimum $\mathbf{w}^*$.}
        \label{fig3:second}
    \end{subfigure}
    \hfill
    \begin{subfigure}[b]{0.27\linewidth}
        \centering
        \includegraphics[width=1\linewidth]{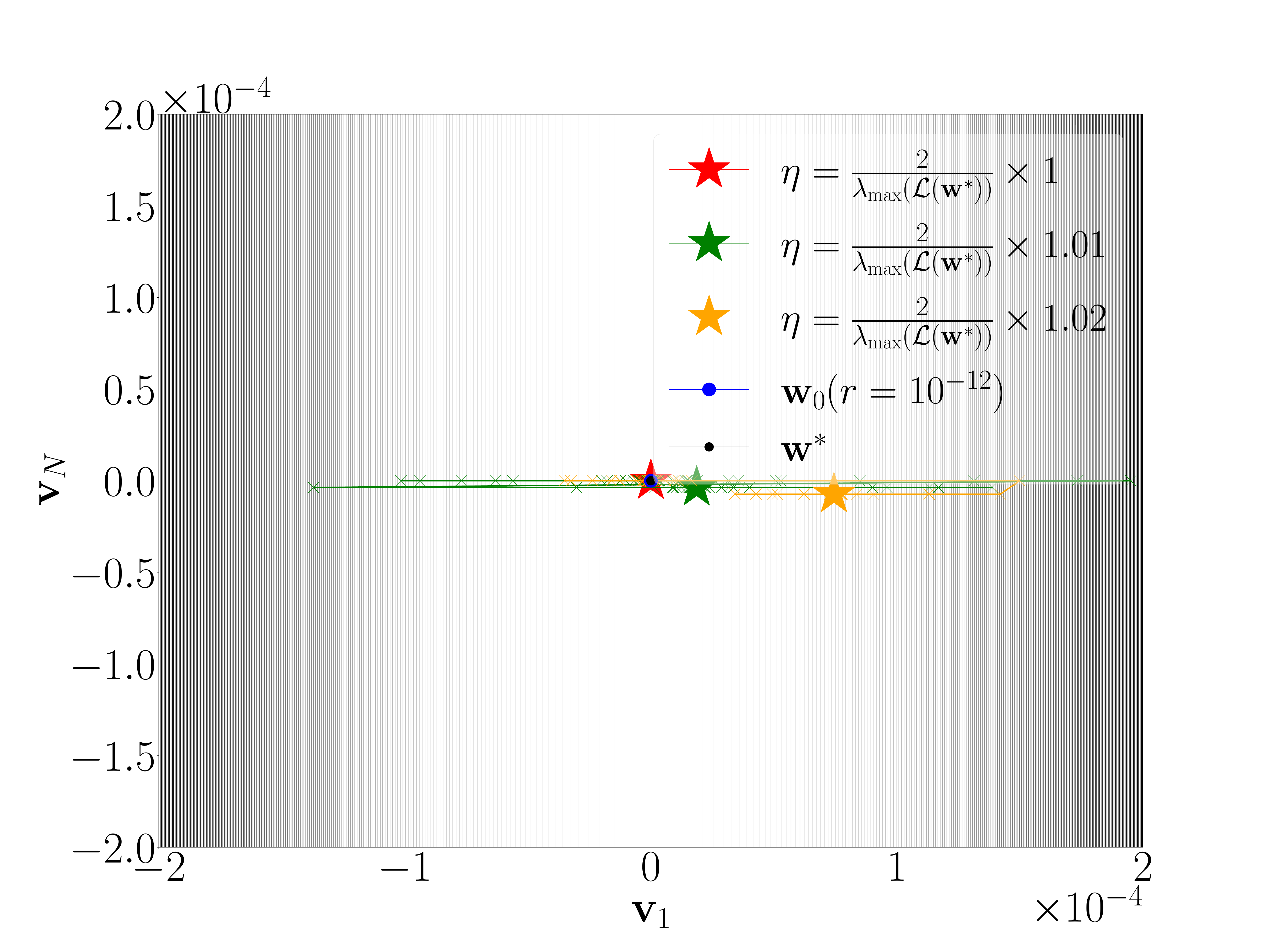}
        \caption{Trajectories of GD on the contour map of the loss landscape.}
        \label{fig3:third}
    \end{subfigure}

    \begin{subfigure}[b]{0.27\linewidth}
        \centering
        \includegraphics[width=1\linewidth]{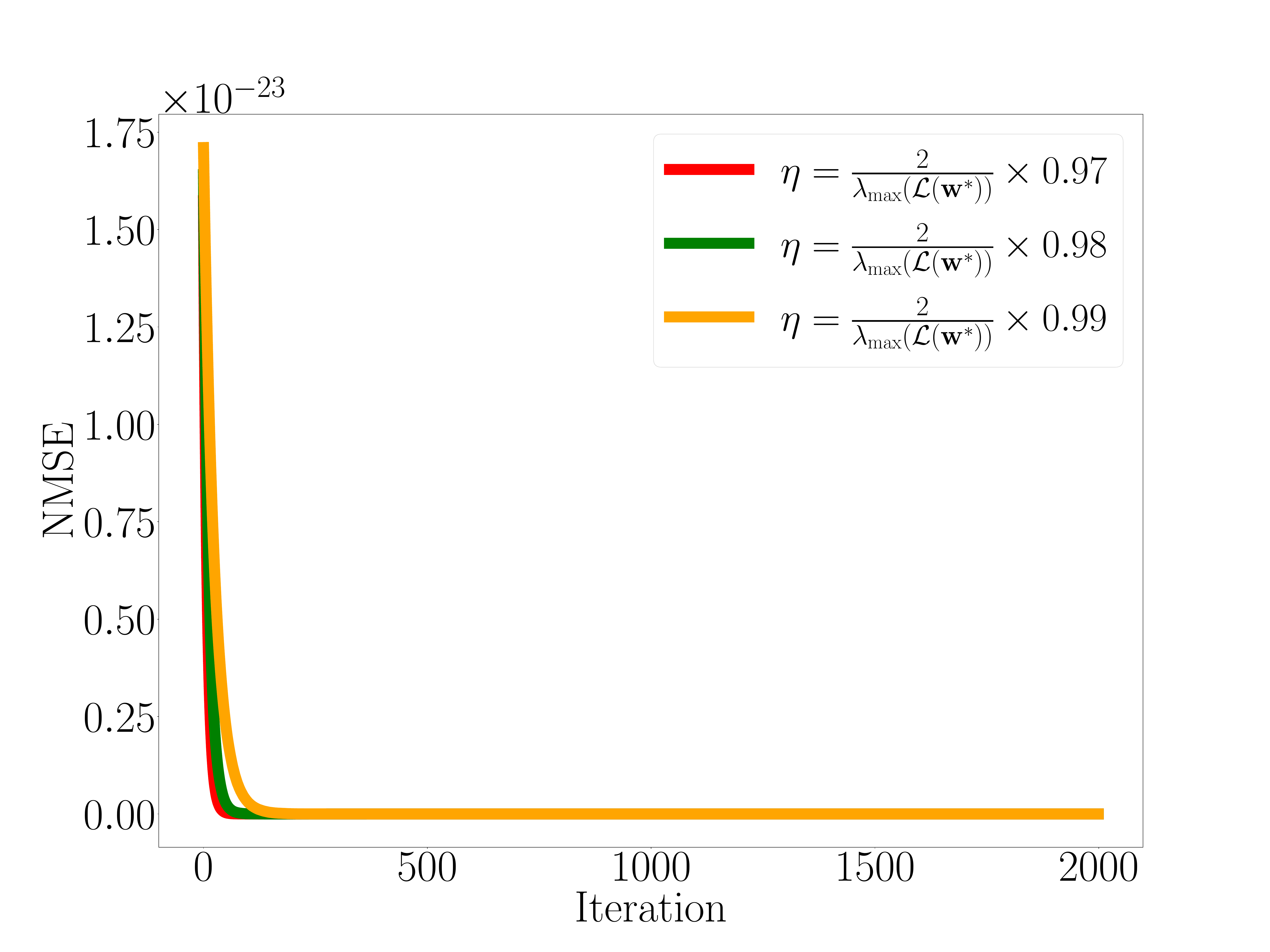}
        \caption{Normalized training error across iterations, i.e., $\mathcal{L}(\mathbf{w}_k)/m^2$.}
        \label{fig4:first}
    \end{subfigure}
    \hfill
    \begin{subfigure}[b]{0.27\linewidth}
        \centering
        \includegraphics[width=1\linewidth]{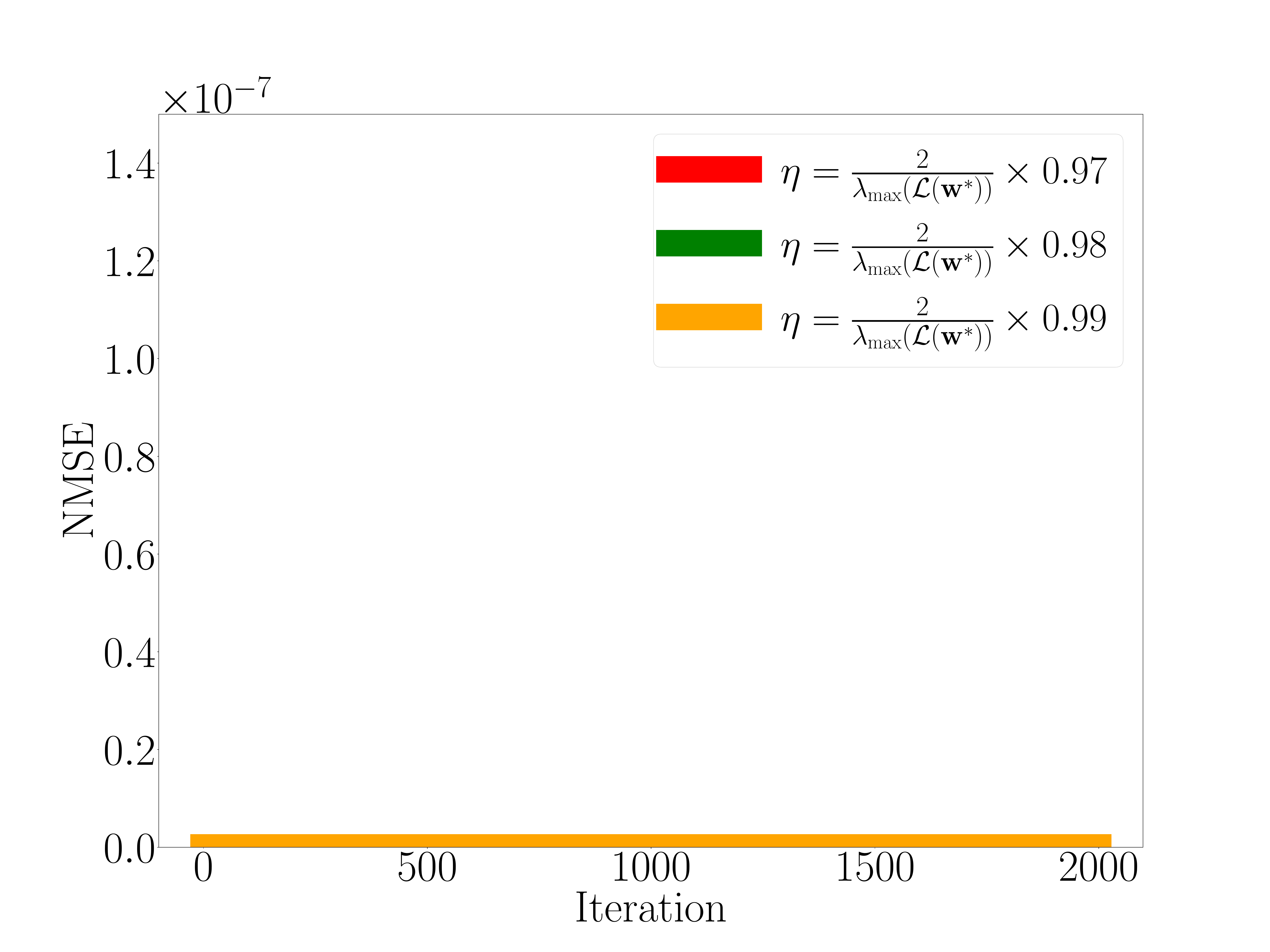}
        \caption{Normalized $\ell^2$ distance of $\mathbf{w}_k$ from the minimum $\mathbf{w}^*$.}
        \label{fig4:second}
    \end{subfigure}
    \hfill
    \begin{subfigure}[b]{0.27\linewidth}
        \centering
        \includegraphics[width=1\linewidth]{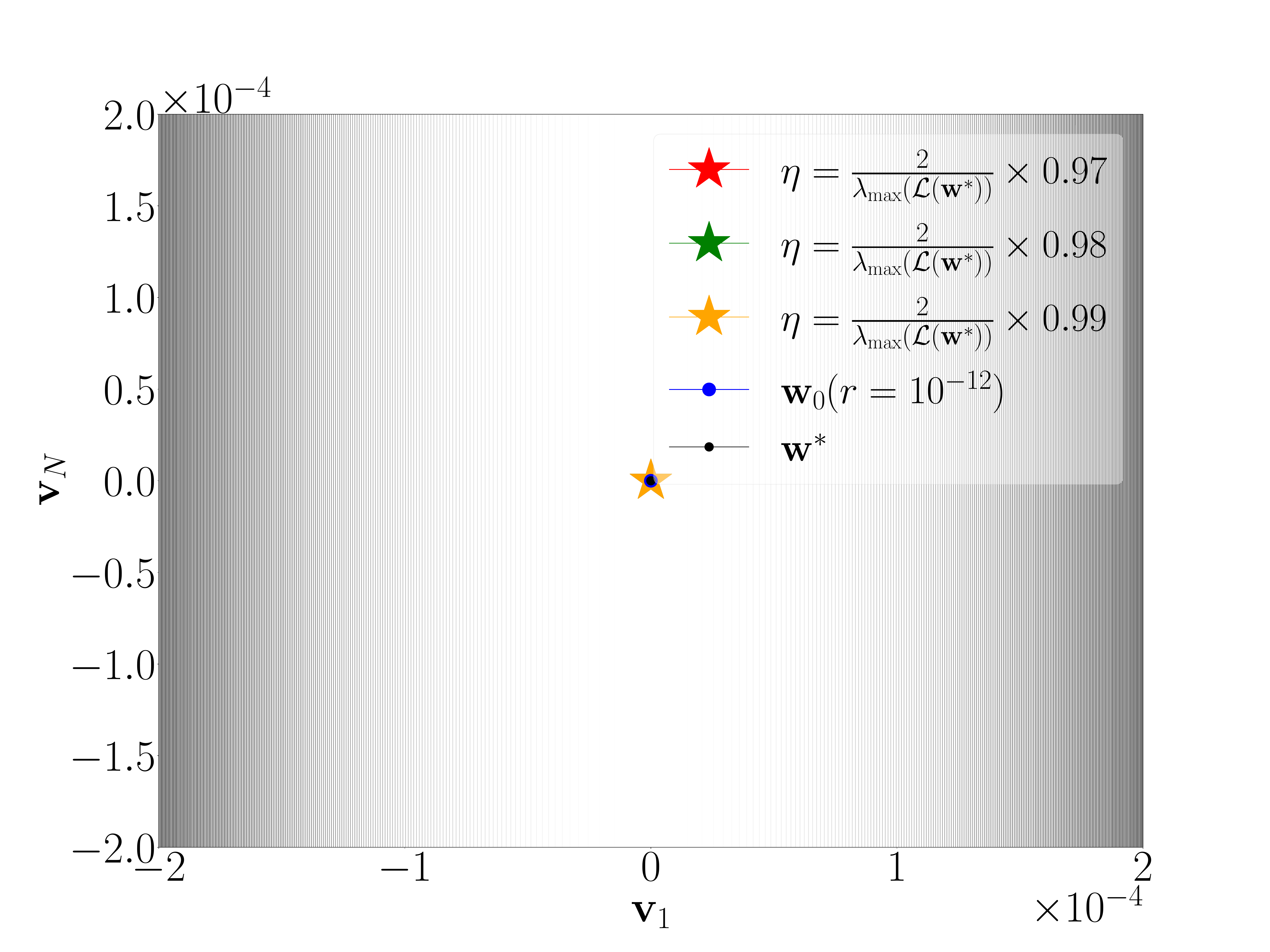}
        \caption{Trajectories of GD on the contour map of the loss landscape.}
        \label{fig4:third}
    \end{subfigure}
    \caption{GD dynamics with different step sizes indicated by different colors are initialized within a radius of $10^{-12}$ from the minimum in the direction of the Hessian eigenvector that corresponds to the maximum eigenvalue for a 15-layer overparameterized scalar factorization of a random scalar. The value of $\lambda_{\max}(\nabla^{2}\mathcal{L}(\mathbf{w}^*))$ is computed using the closed-form expression derived in Corollary~\ref{theorem:warmupdeepoverparameterizedscalarfactorization}.}
    \label{figure:deepoverparameterized}
\end{figure}

\begin{figure}[h!]
    \centering
    \begin{subfigure}[t]{0.27\textwidth}
        \centering
        \includegraphics[width=1\linewidth]{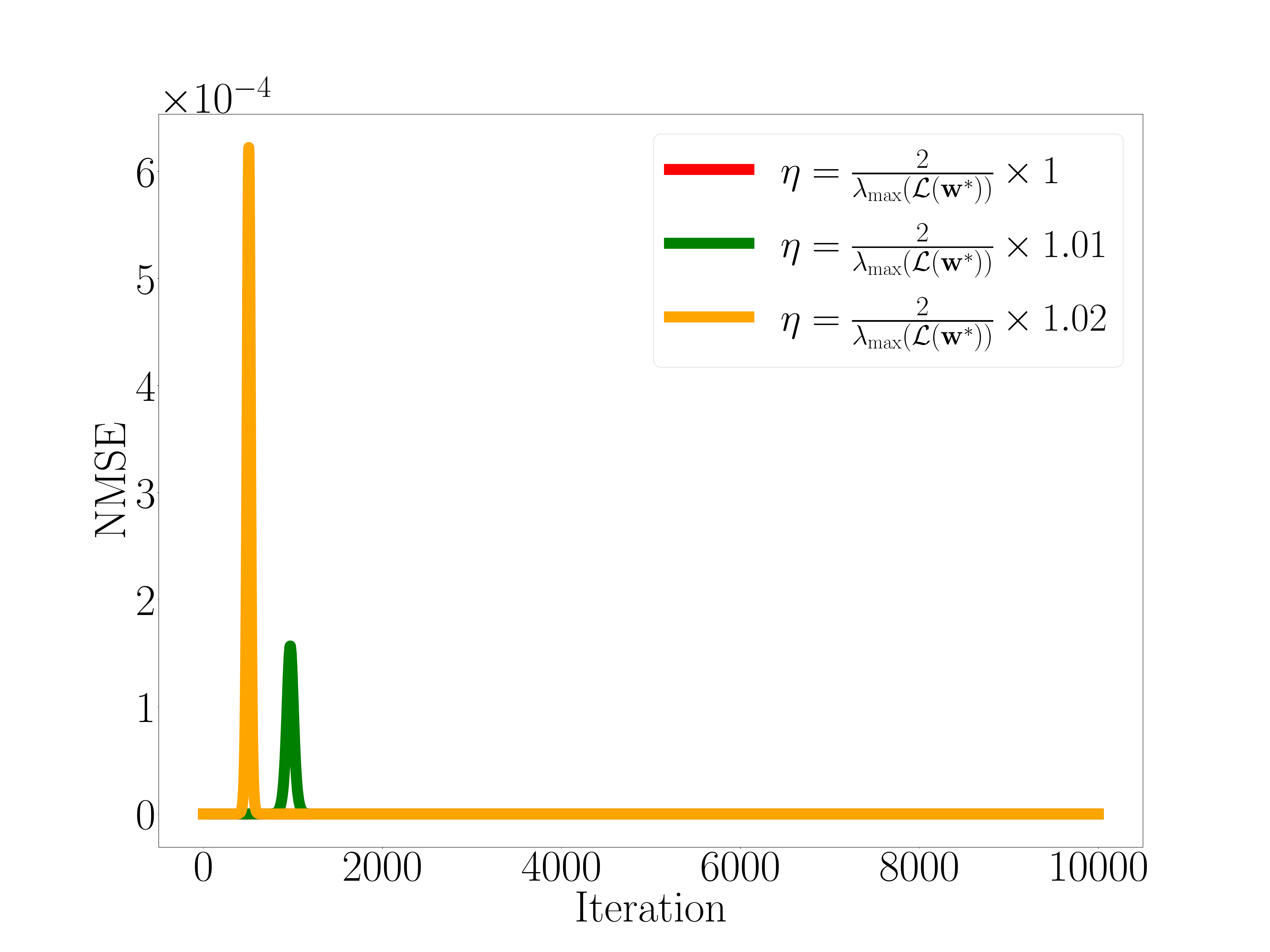}
        \caption{Normalized training error across iterations, i.e., $\mathcal{L}(\mathbf{w}_k)/\lVert \mathbf{M} \rVert_F^2$.}
        \label{fig6:sub1}
    \end{subfigure}
    \hfill
    \begin{subfigure}[t]{0.27\textwidth}
        \centering
        \includegraphics[width=1\linewidth]{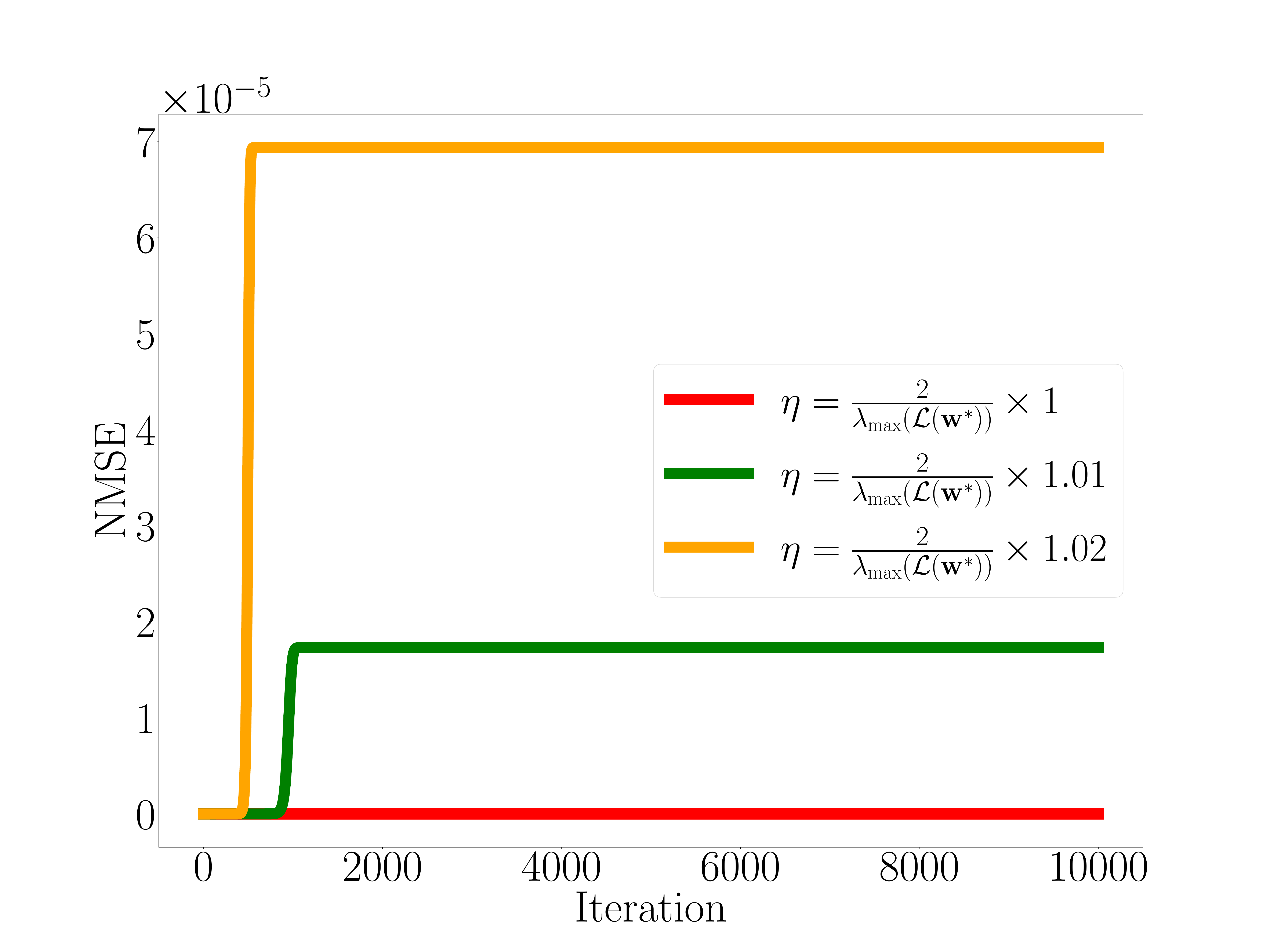}
        \caption{Normalized $\ell^2$ distance of $\mathbf{w}_k$ from the minimum $\mathbf{w}^*$, i.e., $\norm{\mathbf{w}_k-\mathbf{w}^*}_2^2 / \norm{\mathbf{w}^*}_2^2$.}
        \label{fig6:sub2}
    \end{subfigure}
    \hfill
    \begin{subfigure}[t]{0.27\textwidth}
        \centering
        \includegraphics[width=1\linewidth]{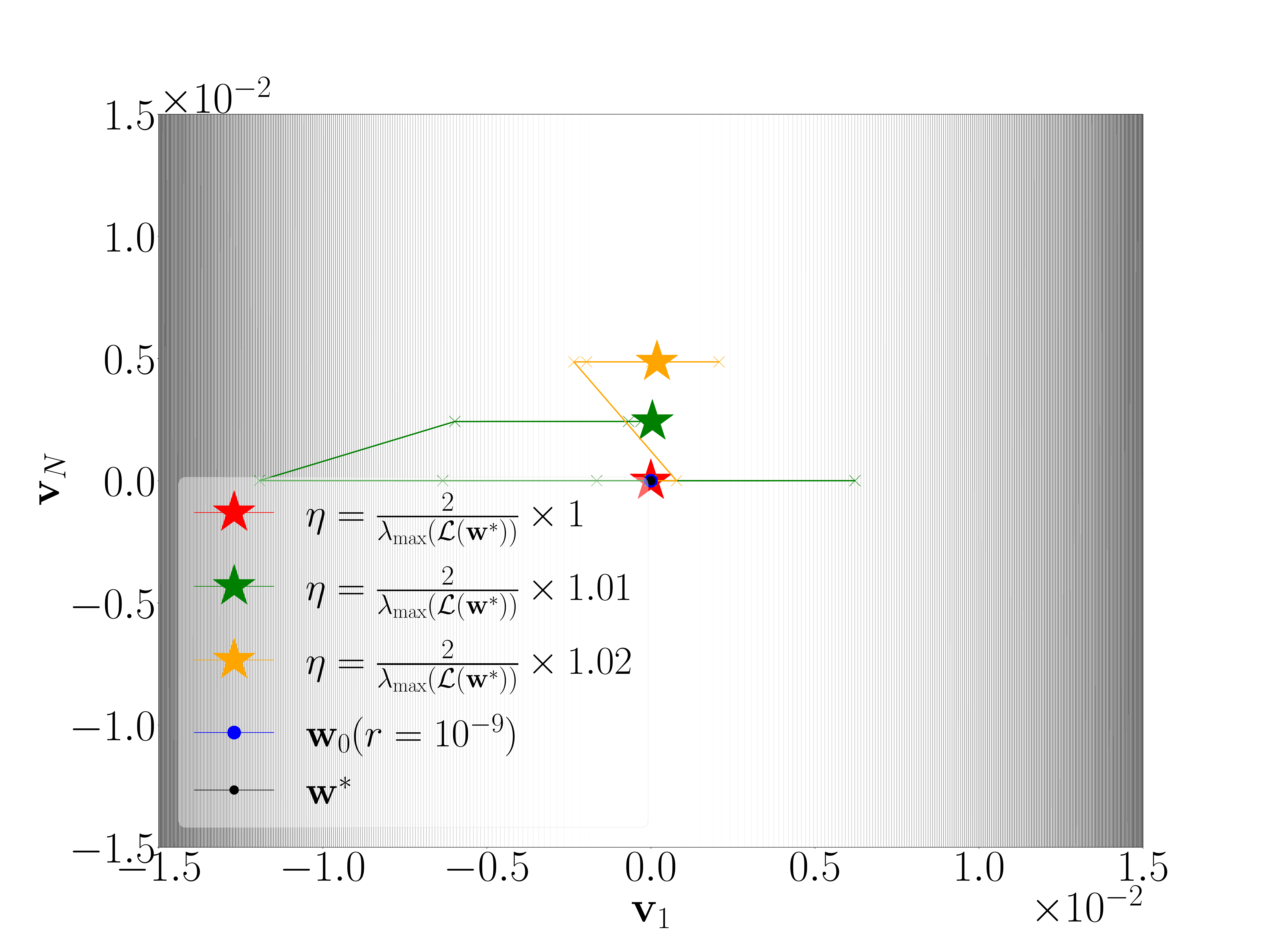}
        \caption{Trajectories of GD on the contour map of the loss landscape around the minimum.}
        \label{fig6:sub3}
    \end{subfigure}

    \vspace{1em}

    \begin{subfigure}[t]{0.27\textwidth}
        \centering
        \includegraphics[width=1\linewidth]{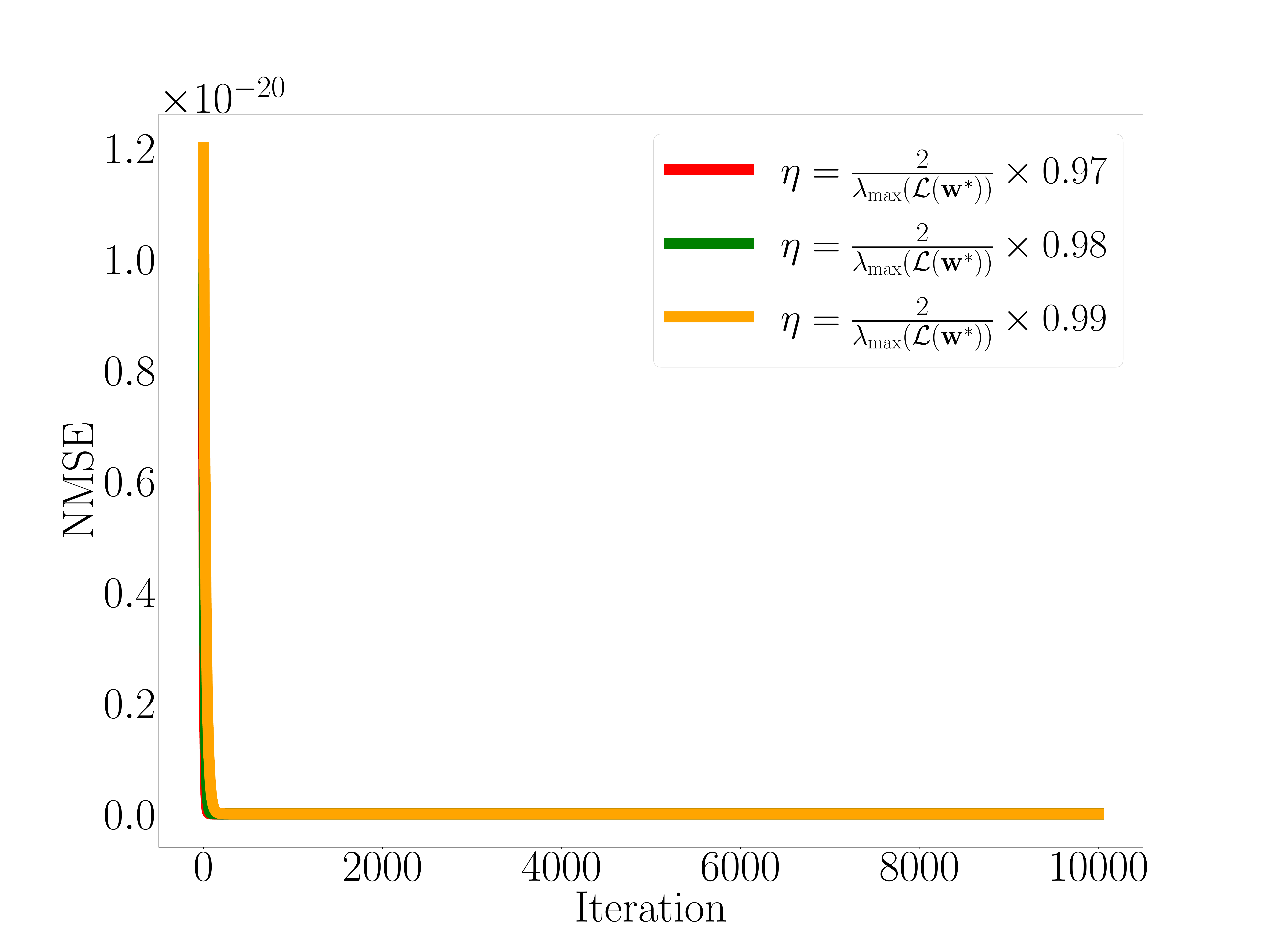}
        \caption{Normalized training error across iterations, i.e., $\mathcal{L}(\mathbf{w}_k)/\lVert \mathbf{M} \rVert_F^2$.}
        \label{fig6:sub4}
    \end{subfigure}
    \hfill
    \begin{subfigure}[t]{0.27\textwidth}
        \centering
        \includegraphics[width=1\linewidth]{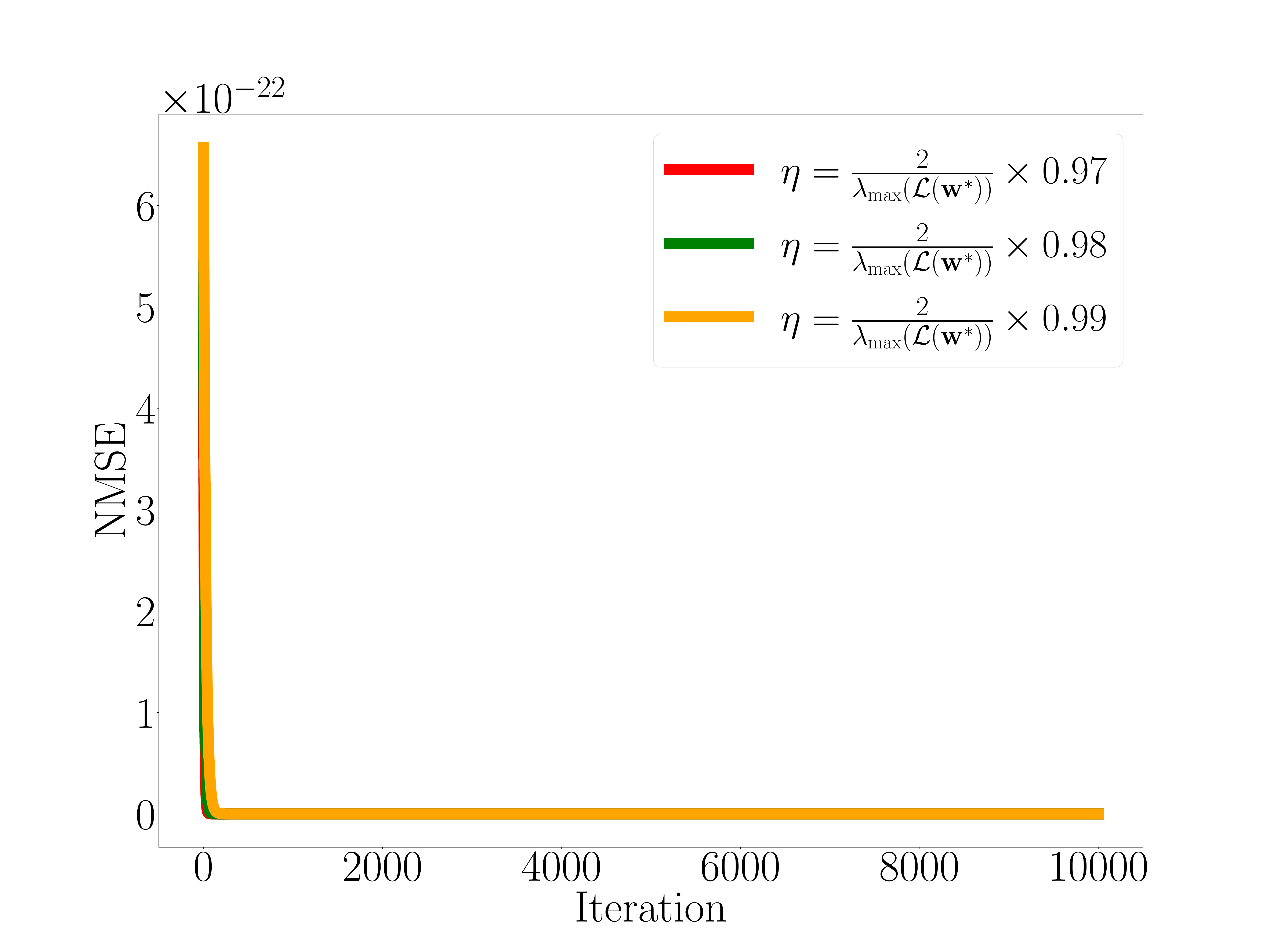}
        \caption{Normalized $\ell^2$ distance of $\mathbf{w}_k$ from the minimum $\mathbf{w}^*$, i.e., $\norm{\mathbf{w}_k-\mathbf{w}^*}_2^2 / \norm{\mathbf{w}^*}_2^2$.}
        \label{fig6:sub5}
    \end{subfigure}
    \hfill
    \begin{subfigure}[t]{0.27\textwidth}
        \centering
        \includegraphics[width=1\linewidth]{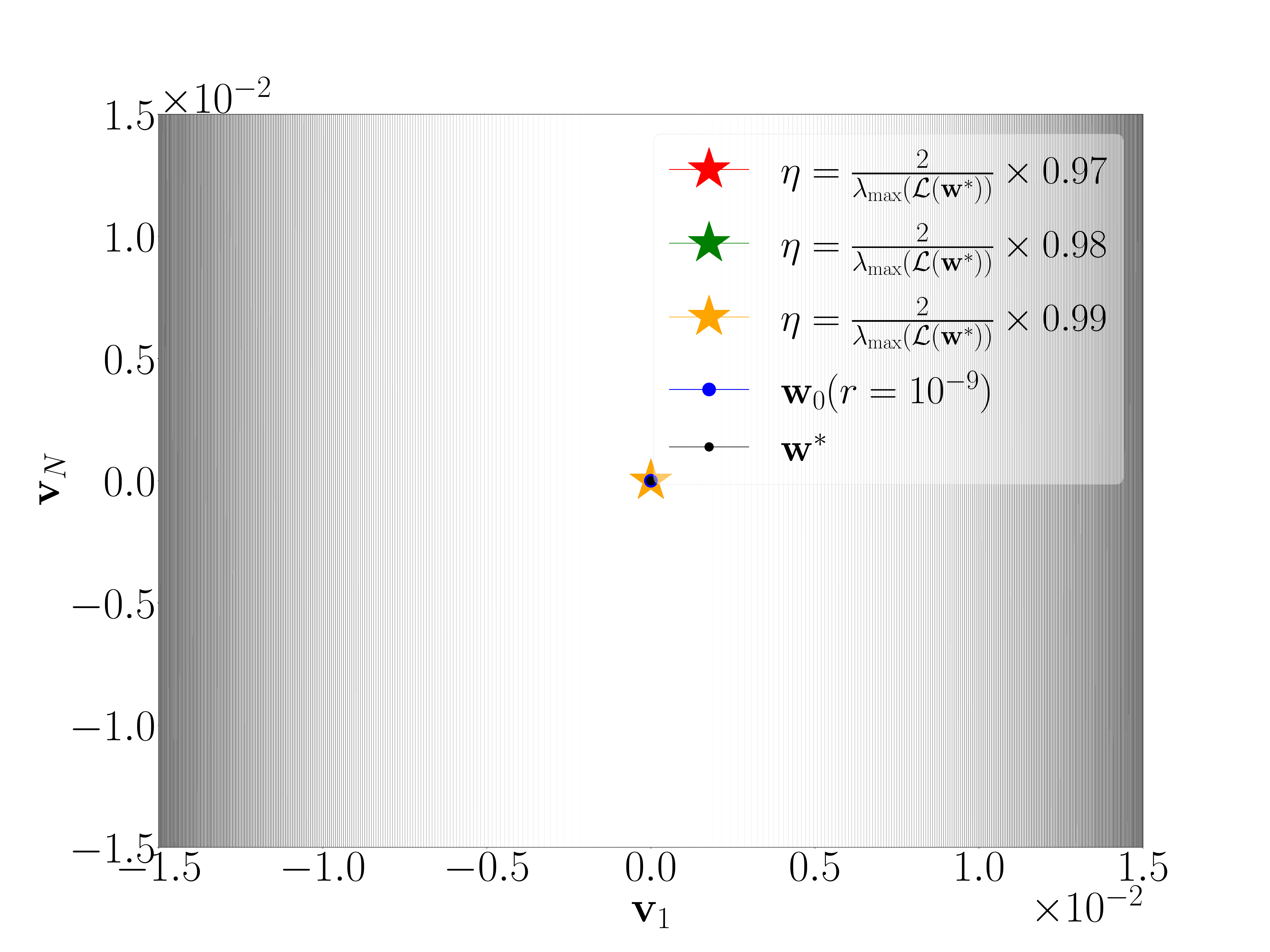}
        \caption{Trajectories of GD on the contour map of the loss landscape around the minimum.}
        \label{fig6:sub6}
    \end{subfigure}
    \caption{GD dynamics with different step sizes indicated by different colors for general matrix factorization, $\mathbf{M} = \mathbf{L}\mathbf{R}^\top$, of a random Gaussian matrix, where $\mathbf{L} \in \mathbb{R}^{25 \times 30}$ and $\mathbf{R} \in \mathbb{R}^{20 \times 30}$. The value of $\lambda_{\max}(\nabla^{2}\mathcal{L}(\mathbf{w}^*))$ is computed using the closed-form expression derived in Corollary~\ref{remark:twolayer}.}
    \label{figure5}
\end{figure}

\begin{figure*}[h!]
    \centering
    \begin{subfigure}[t]{0.27\textwidth}
        \centering
        \includegraphics[width=1\linewidth]{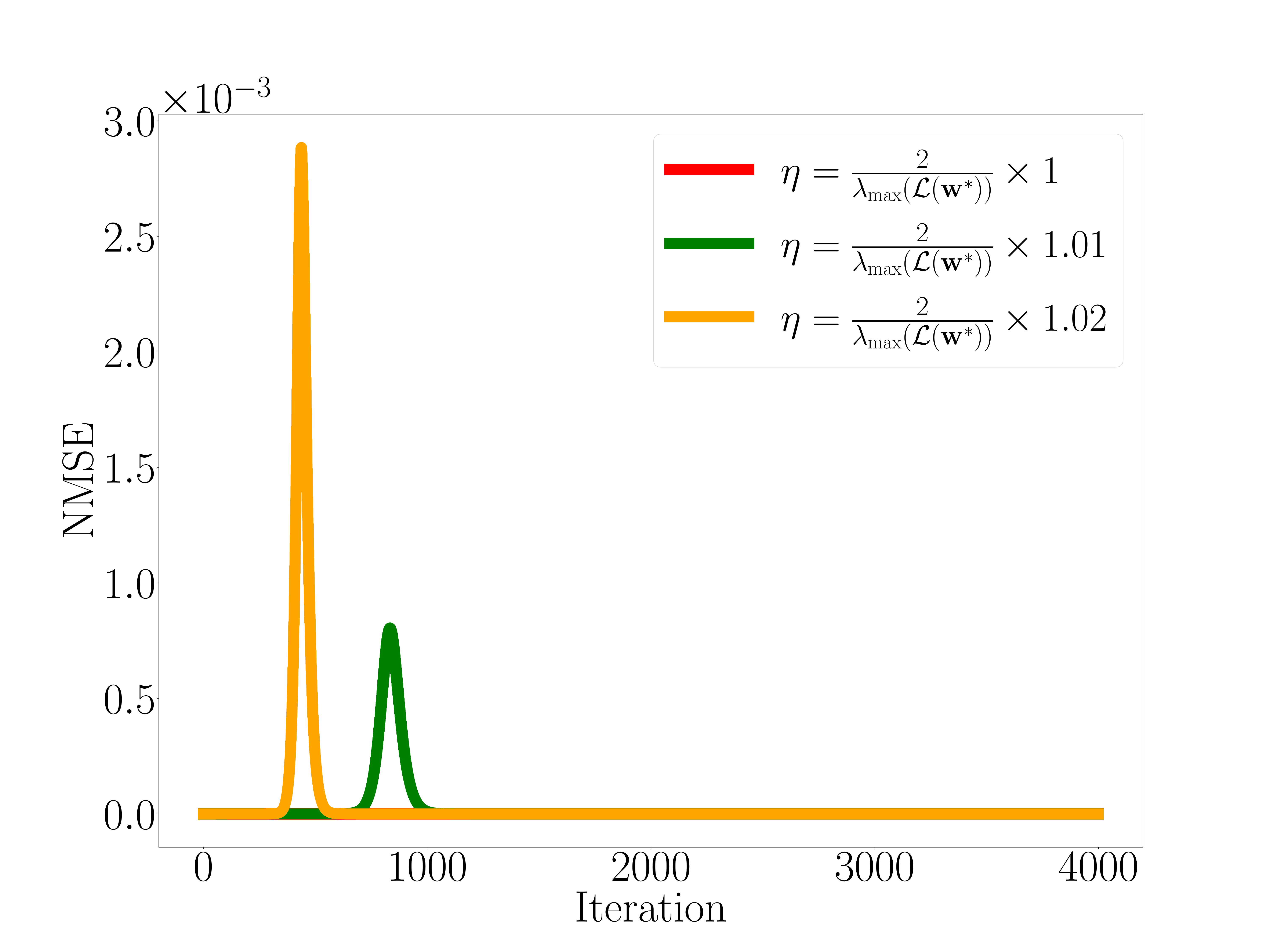}
        \caption{Normalized training error across iterations, i.e., $\mathcal{L}(\mathbf{w}_k)/m^2$.}
        \label{fig7:sub1}
    \end{subfigure}
    \hfill
    \begin{subfigure}[t]{0.27\textwidth}
        \centering
        \includegraphics[width=1\linewidth]{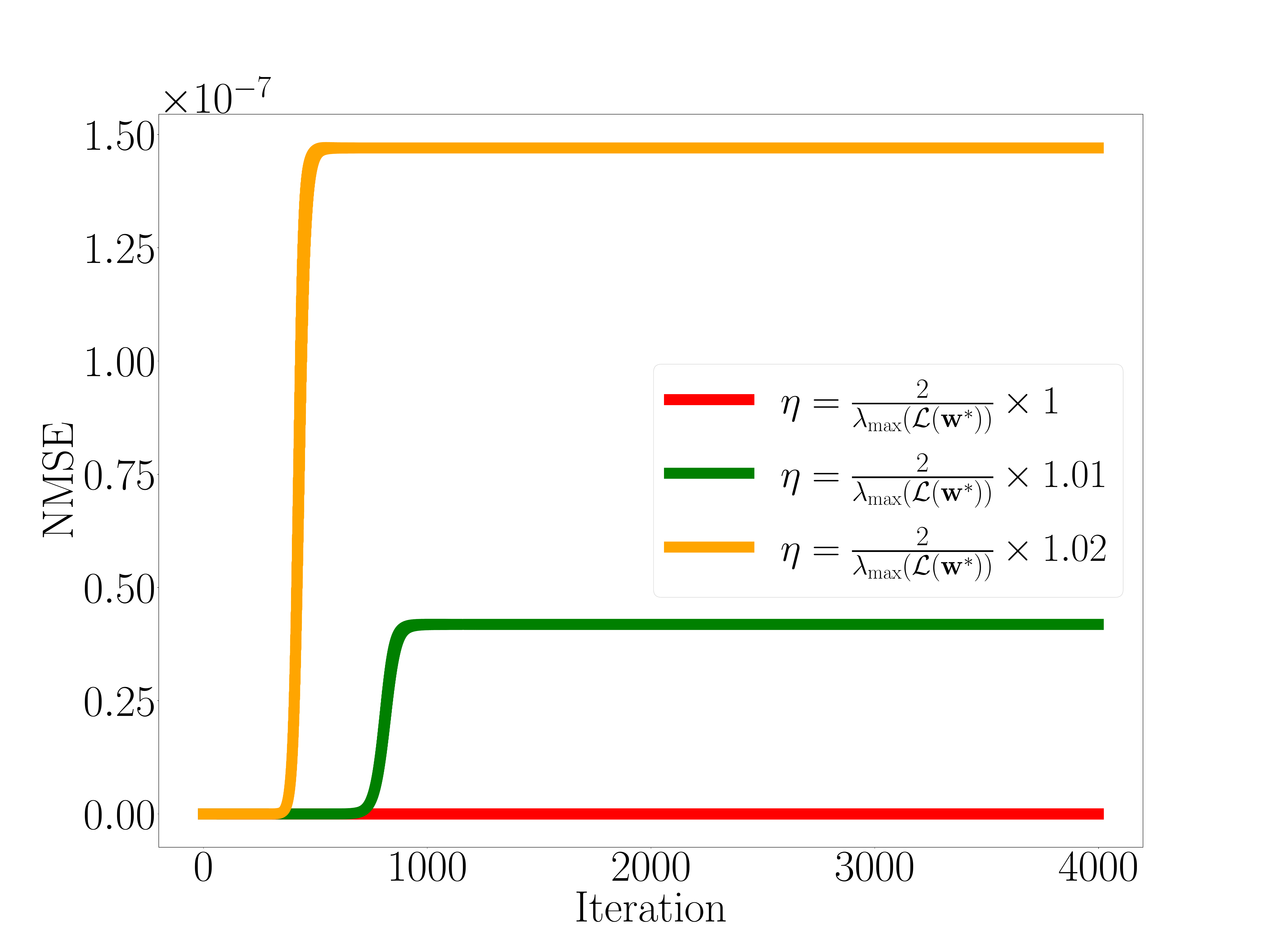}
        \caption{Normalized $\ell^2$ distance of $\mathbf{w}_k$ from the minimum $\mathbf{w}^*$, i.e., $\norm{\mathbf{w}_k-\mathbf{w}^*}_2^2 / \norm{\mathbf{w}^*}_2^2$.}
        \label{fig7:sub2}
    \end{subfigure}
    \hfill
    \begin{subfigure}[t]{0.27\textwidth}
        \centering
        \includegraphics[width=1\linewidth]{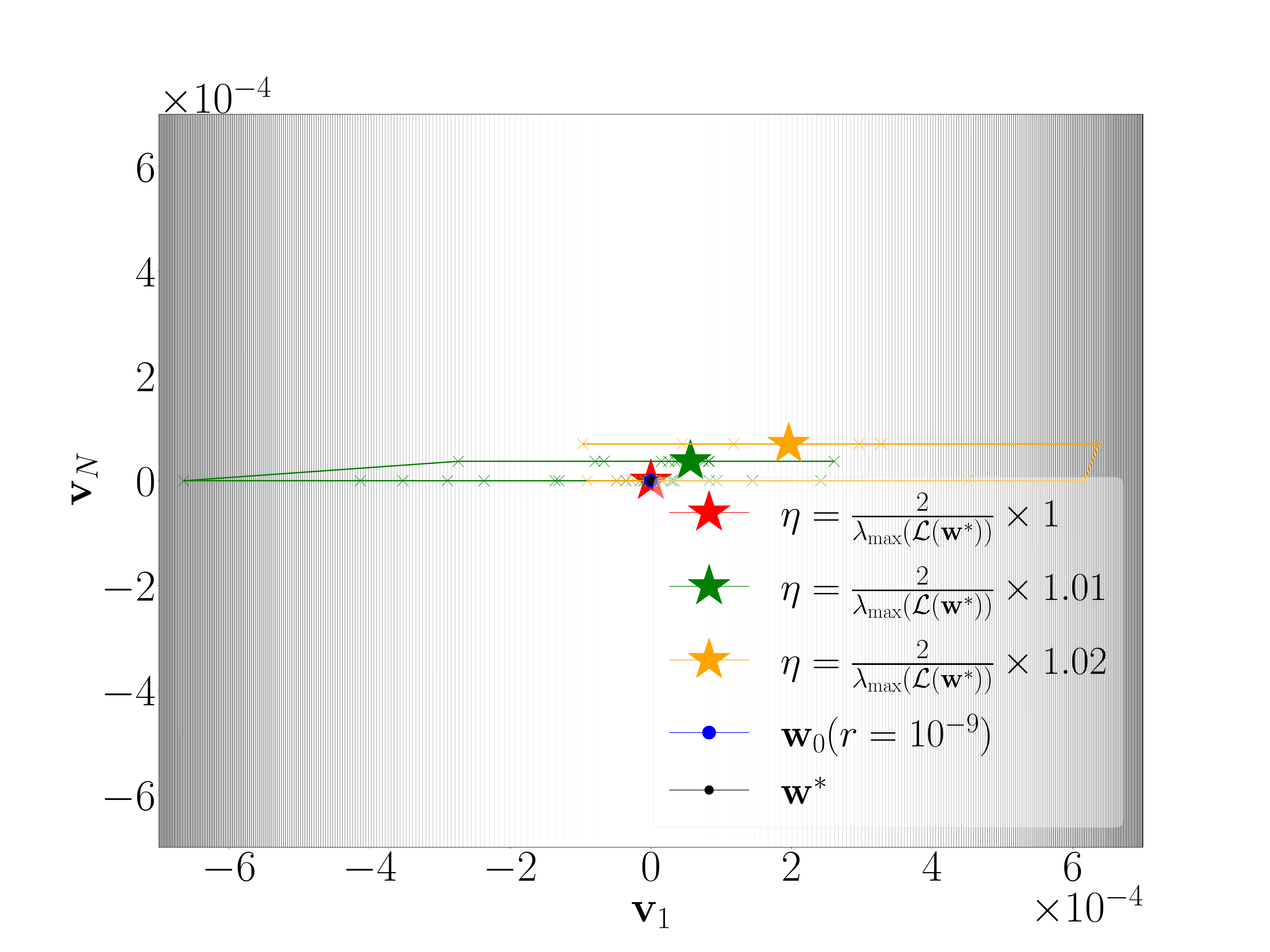}
        \caption{Trajectories of GD on the contour map of the loss landscape around the minimum.}
        \label{fig7:sub3}
    \end{subfigure}

    \vspace{1em}

    \begin{subfigure}[t]{0.27\textwidth}
        \centering
        \includegraphics[width=1\linewidth]{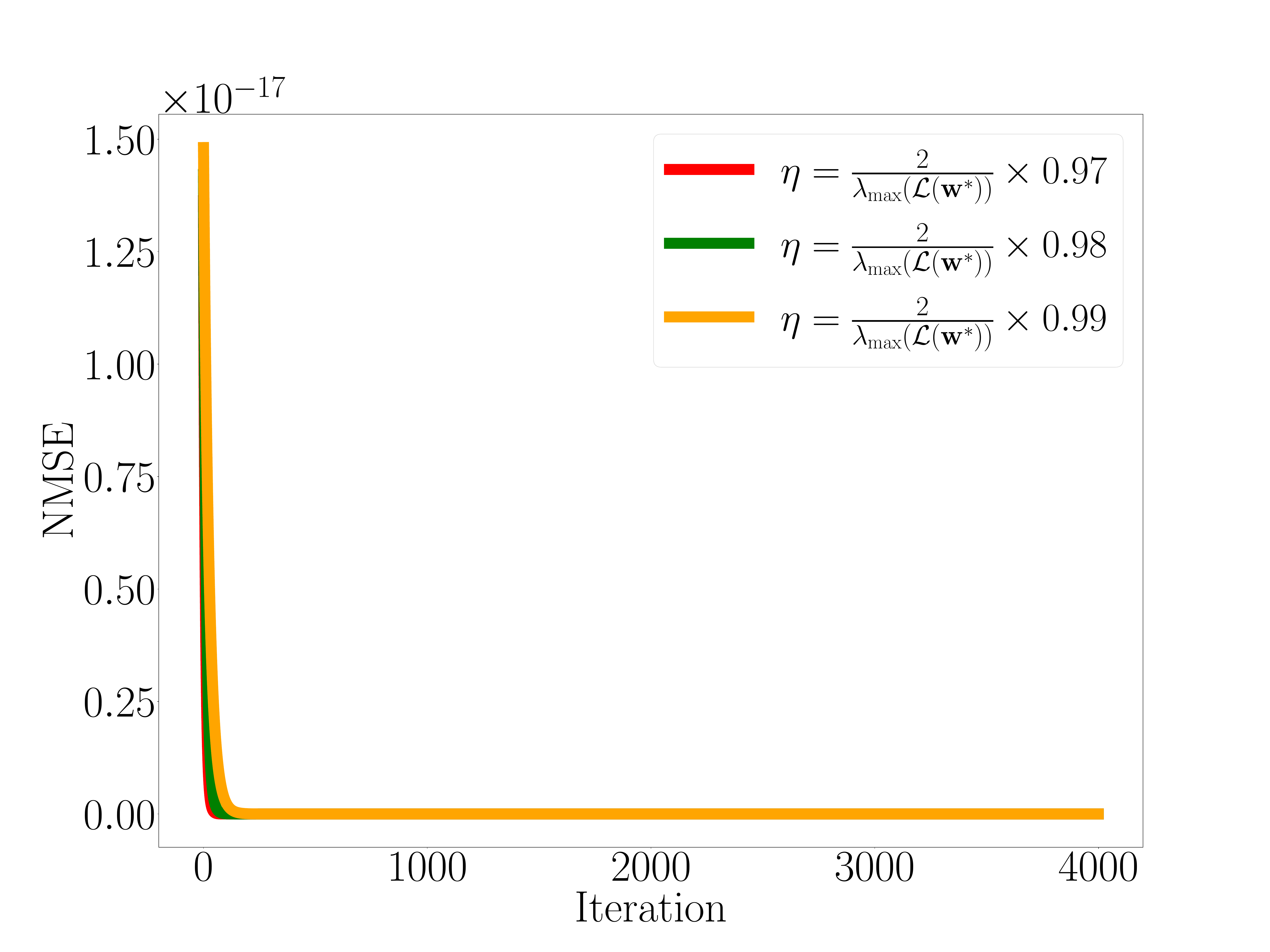}
        \caption{Normalized training error across iterations, i.e., $\mathcal{L}(\mathbf{w}_k)/m^2$.}
        \label{fig7:sub4}
    \end{subfigure}
    \hfill
    \begin{subfigure}[t]{0.27\textwidth}
        \centering
        \includegraphics[width=1\linewidth]{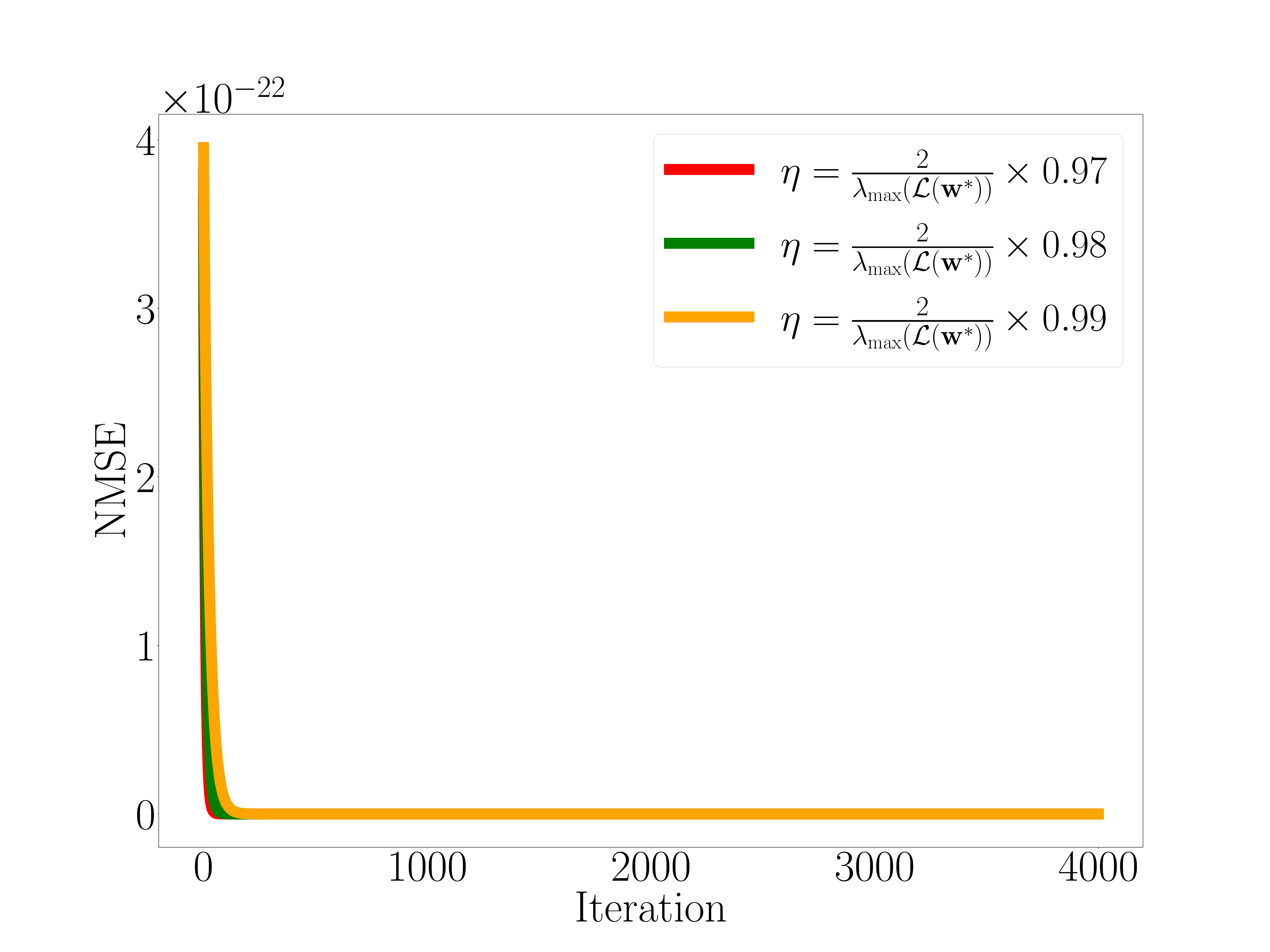}
        \caption{Normalized $\ell^2$ distance of $\mathbf{w}_k$ from the minimum $\mathbf{w}^*$, i.e., $\norm{\mathbf{w}_k-\mathbf{w}^*}_2^2 / \norm{\mathbf{w}^*}_2^2$.}
        \label{fig7:sub5}
    \end{subfigure}
    \hfill
    \begin{subfigure}[t]{0.27\textwidth}
        \centering
        \includegraphics[width=1\linewidth]{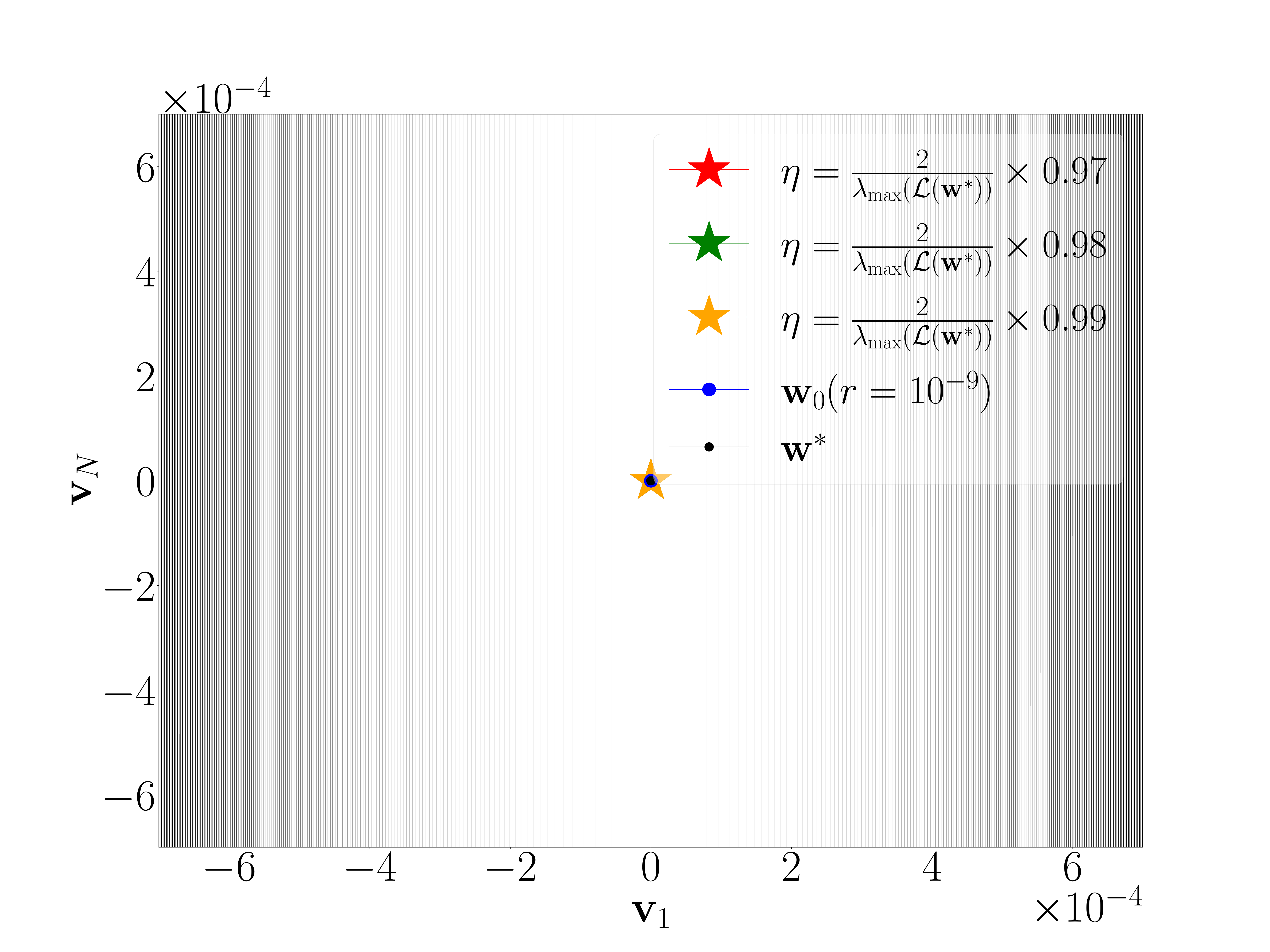}
        \caption{Trajectories of GD on the contour map of the loss landscape around the minimum.}
        \label{fig7:sub6}
    \end{subfigure}
    \caption{GD dynamics with different step sizes indicated by different colors are initialized within a radius of $10^{-9}$ from the minimum in the direction of the Hessian eigenvector that corresponds to the maximum eigenvalue for a 20-layer overparameterized scalar factorization of a random scalar. The value of $\lambda_{\max}(\nabla^{2}\mathcal{L}(\mathbf{w}^*))$ is computed using the closed-form expression derived in Corollary~\ref{theorem:warmupdeepoverparameterizedscalarfactorization}.}
    \label{figure6}
\end{figure*}

\begin{figure}[h!]
    \centering
    \begin{subfigure}[t]{0.27\textwidth}
        \centering
        \includegraphics[width=1\linewidth]{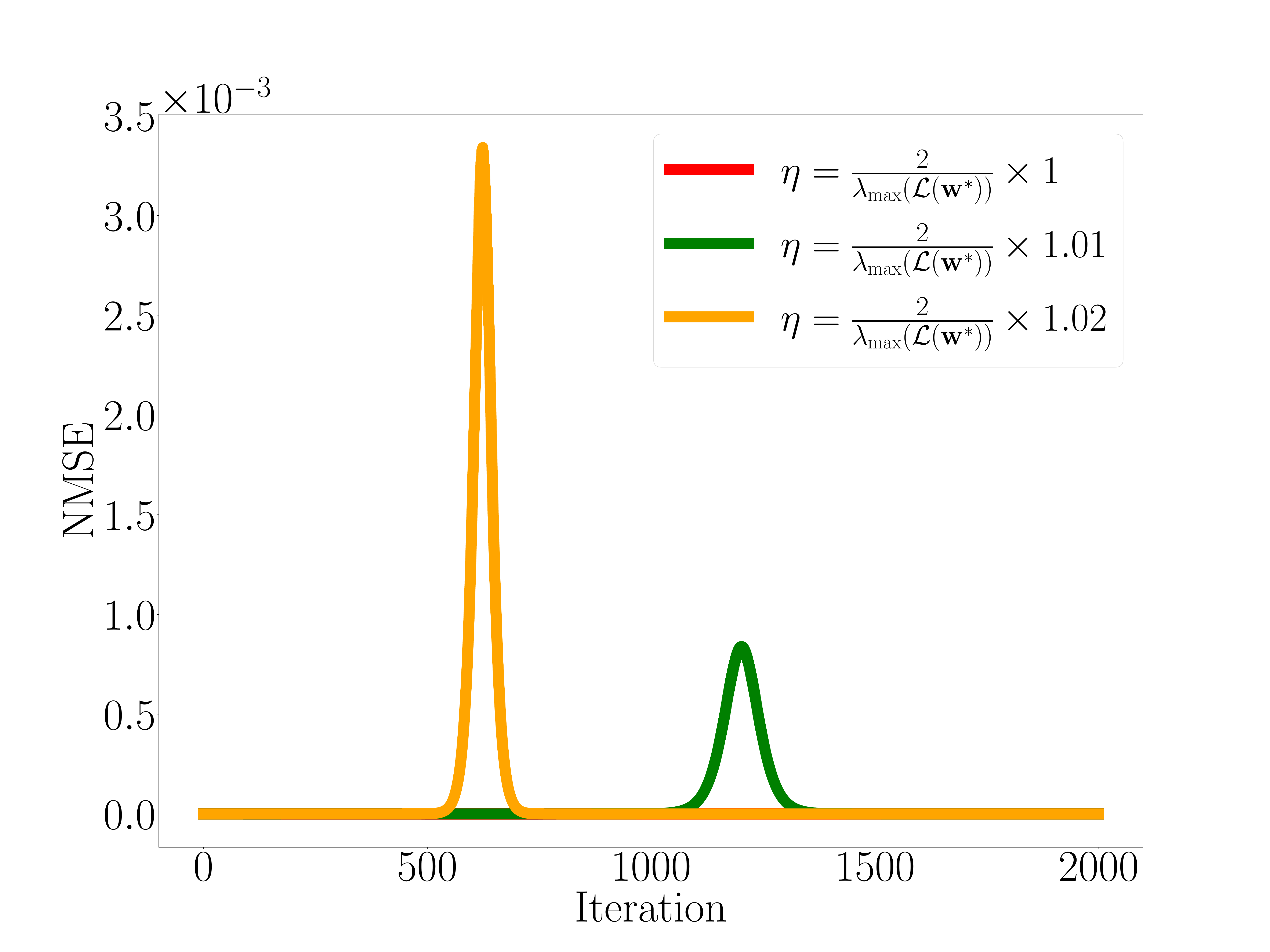}
        \caption{Normalized training error across iterations, i.e., $\mathcal{L}(\mathbf{w}_k)/m^2$.}
        \label{fig8:sub1}
    \end{subfigure}
    \hfill
    \begin{subfigure}[t]{0.27\textwidth}
        \centering
        \includegraphics[width=1\linewidth]{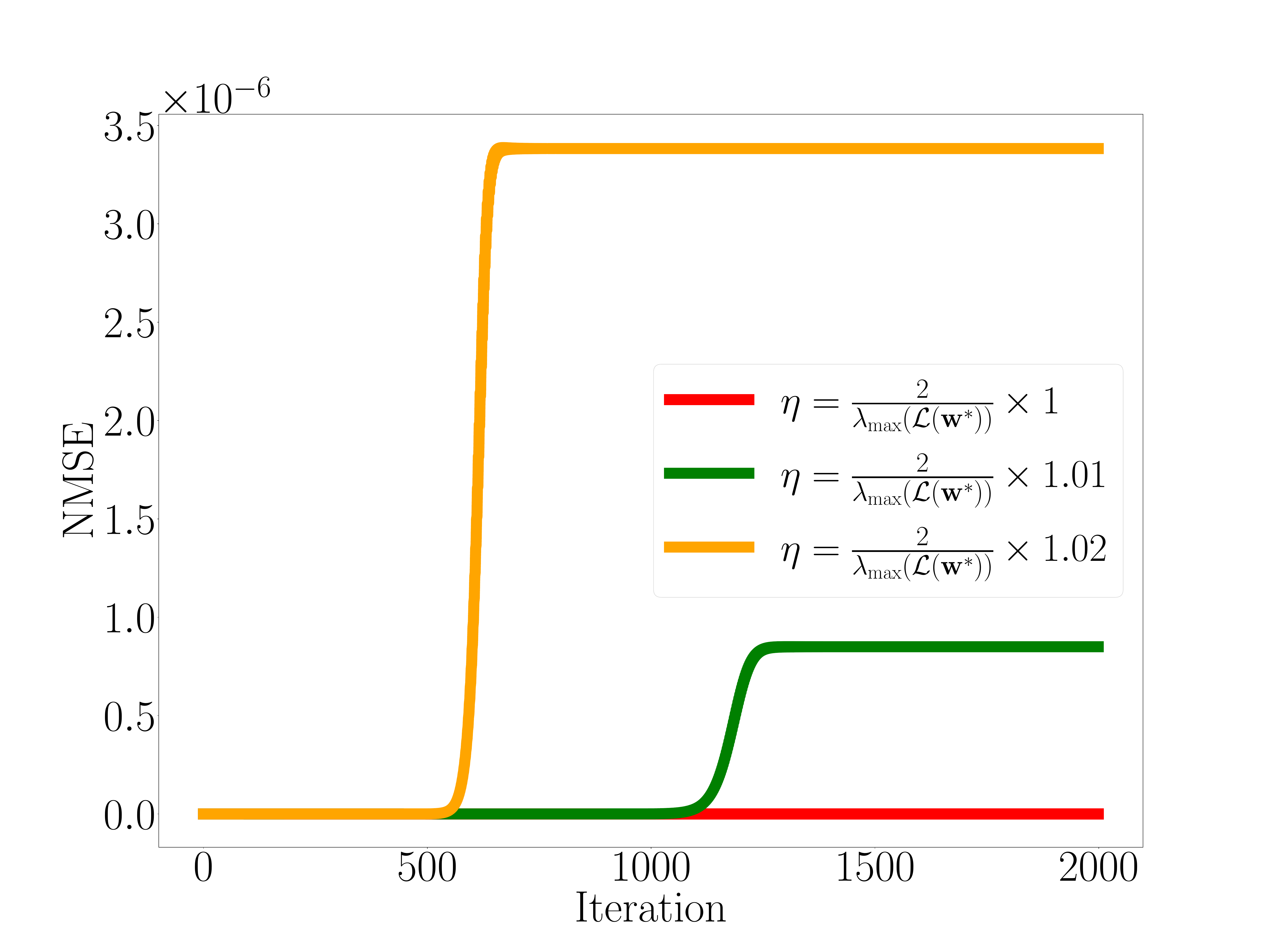}
        \caption{Normalized $\ell^2$ distance of $\mathbf{w}_k$ from the minimum $\mathbf{w}^*$, i.e., $\norm{\mathbf{w}_k-\mathbf{w}^*}_2^2 / \norm{\mathbf{w}^*}_2^2$.}
        \label{fig8:sub2}
    \end{subfigure}
    \hfill
    \begin{subfigure}[t]{0.27\textwidth}
        \centering
        \includegraphics[width=1\linewidth]{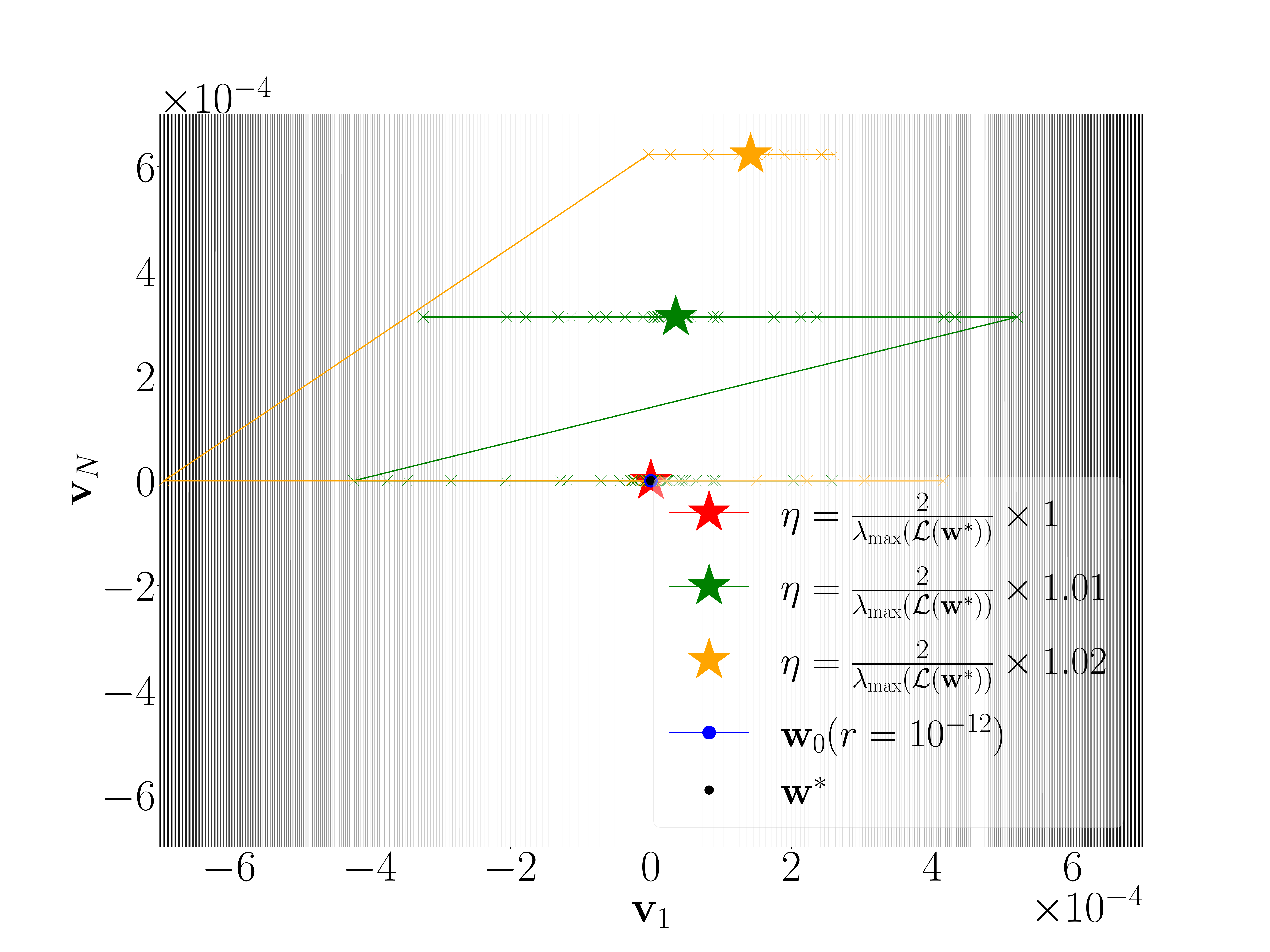}
        \caption{Trajectories of GD on the contour map of the loss landscape around the minimum.}
        \label{fig8:sub3}
    \end{subfigure}

    \vspace{1em}

    \begin{subfigure}[t]{0.27\textwidth}
        \centering
        \includegraphics[width=1\linewidth]{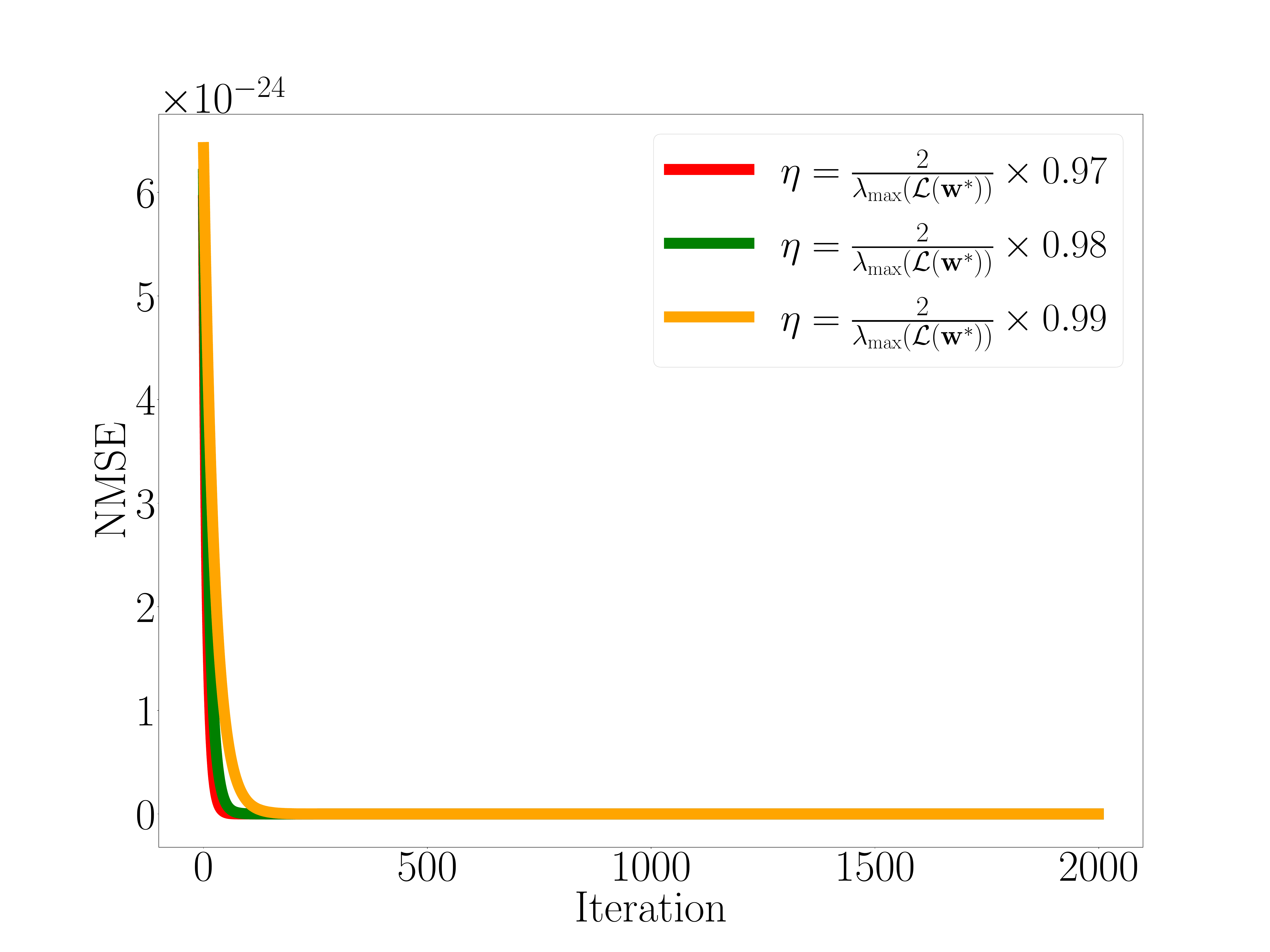}
        \caption{Normalized training error across iterations, i.e., $\mathcal{L}(\mathbf{w}_k)/m^2$.}
        \label{fig8:sub4}
    \end{subfigure}
    \hfill
    \begin{subfigure}[t]{0.27\textwidth}
        \centering
        \includegraphics[width=1\linewidth]{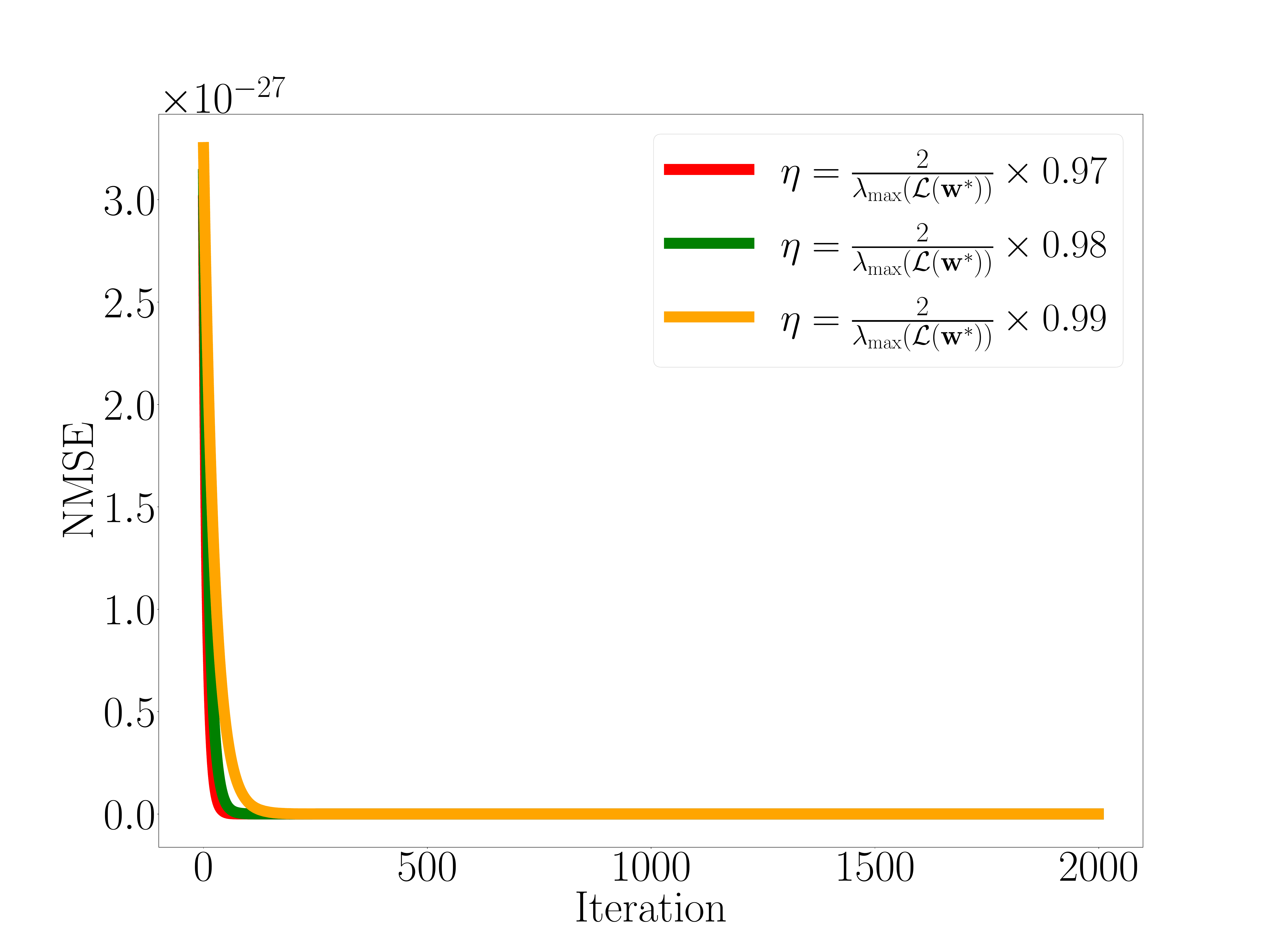}
        \caption{Normalized $\ell^2$ distance of $\mathbf{w}_k$ from the minimum $\mathbf{w}^*$, i.e., $\norm{\mathbf{w}_k-\mathbf{w}^*}_2^2 / \norm{\mathbf{w}^*}_2^2$.}
        \label{fig8:sub5}
    \end{subfigure}
    \hfill
    \begin{subfigure}[t]{0.27\textwidth}
        \centering
        \includegraphics[width=1\linewidth]{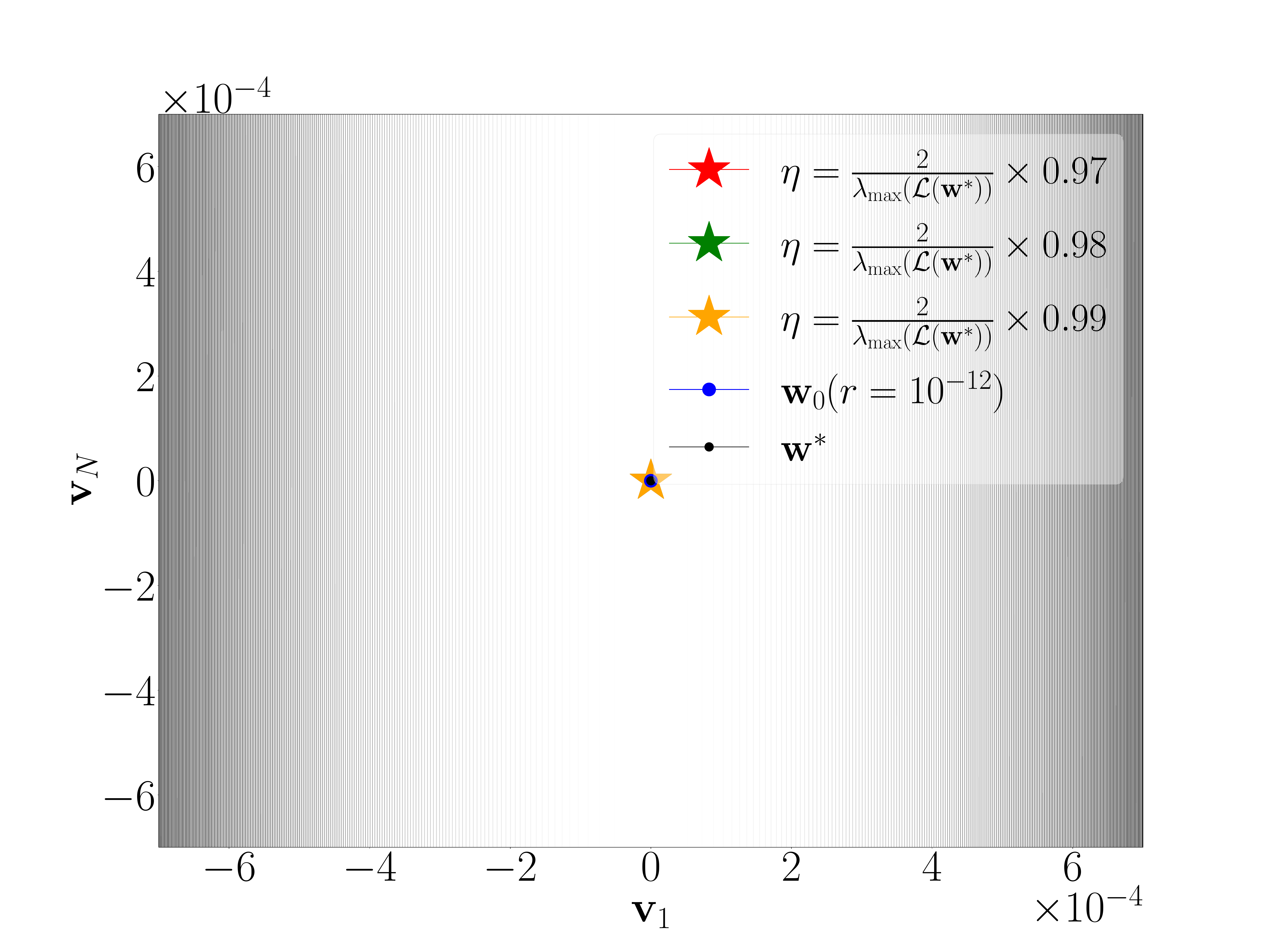}
        \caption{Trajectories of GD on the contour map of the loss landscape around the minimum.}
        \label{fig8:sub6}
    \end{subfigure}
    \caption{GD dynamics with different step sizes indicated by different colors are initialized within a radius of $10^{-12}$ from the minimum in the direction of the Hessian eigenvector that corresponds to the maximum eigenvalue for a 5-layer overparameterized scalar factorization of a random scalar. The value of $\lambda_{\max}(\nabla^{2}\mathcal{L}(\mathbf{w}^*))$ is computed using the closed-form expression derived in Corollary~\ref{theorem:warmupdeepoverparameterizedscalarfactorization}.}
    \label{figure7}
\end{figure}

\end{document}